\title{T-Cal: An optimal test for the calibration of predictive models}
\author{
Donghwan Lee,\footnote{Equal Contribution.} \footnote{Graduate Group in Applied Mathematics and Computational Science, Univ. of Pennsylvania. \texttt{dh7401@sas.upenn.edu}.}
\,
Xinmeng Huang,\footnotemark[1]\,\,\footnote{Graduate Group in Applied Mathematics and Computational Science, Univ. of Pennsylvania. \texttt{xinmengh@sas.upenn.edu}.}
\,
Hamed Hassani,\footnote{Department of Electrical and Systems Engineering, Univ. of Pennsylvania. \texttt{hassani@seas.upenn.edu}.}
\,
and Edgar Dobriban\footnote{Department of Statistics and Data Science, Univ. of Pennsylvania. \texttt{dobriban@wharton.upenn.edu}.}
}
\date{\today}
\newcommand{\cH}{\mathcal{H}}
\newtheorem{assumption}{Assumption}
\numberwithin{assumption}{section}
\newtheorem{remark}{Remark}
\definecolor{ed}{RGB}{225,0,0}
\begin{document}
\maketitle

\sloppy
\begin{abstract}
The prediction accuracy of machine learning methods is steadily increasing, but the calibration of their uncertainty predictions poses a significant challenge.  
Numerous works focus on obtaining well-calibrated predictive models, but less is known about reliably assessing model calibration. This limits our ability to know when algorithms for improving calibration have a real effect, and when their improvements are merely artifacts due to random noise in finite datasets. In this work, we consider detecting mis-calibration of predictive models using a finite validation dataset as a hypothesis testing problem.
The null hypothesis is that the predictive model is calibrated, while the alternative hypothesis is that the deviation from calibration is sufficiently large.

We find that detecting mis-calibration is only possible when the conditional probabilities of the classes are sufficiently smooth functions of the predictions.  When the conditional class probabilities are H\"older continuous, we propose \emph{T-Cal}, a minimax optimal test for calibration based on a debiased plug-in estimator of the $\ell_2$-Expected Calibration Error (ECE).  We further propose \emph{adaptive T-Cal}, a version that is adaptive to unknown smoothness.  We verify our theoretical findings with a broad range of experiments, including with several popular deep neural net architectures and several standard post-hoc calibration methods. T-Cal is a practical general-purpose tool, which---combined with classical tests for discrete-valued predictors---can be used to test the calibration of virtually any probabilistic classification method.
T-Cal is available at \url{https://github.com/dh7401/T-Cal}.
\end{abstract}
\fussy

\tableofcontents
\medskip

\section{Introduction}

\sloppy
The prediction accuracy of contemporary machine learning methods such as deep neural networks is steadily increasing, leading to adoption in more and more safety-critical fields such as medical diagnosis \citep{Esteva2017DermatologistlevelCO}, self-driving vehicles \citep{bojarski2016end}, and recidivism forecasting \citep{berk2017impact}.
In these applications and beyond, machine learning models are required not only to be accurate but also to be well-calibrated: giving precise probability estimates for the correctness of their predictions.

To be concrete, consider a classification problem where the goal is to classify features $\mathbf{x}$ (such as images) into one of several classes $\mathbf{y}$ (such as a building, vehicle, etc.). A probabilistic classifier (or, probability predictor) $f$ assigns to each input $\mathbf{x}$ a probability distribution $f(\mathbf{x})$ over the classes. 
For a given input $\mathbf{x}$, the entries of $f(\mathbf{x})$ represent the probabilities assigned by the classifier to the event that the outcome belongs to the $k$-th class, for any $ k=1,\ldots,K$.
This classifier is \emph{calibrated} if for any value $\mathbf{z}$ taken by $f(\mathbf{x})$, and for all classes $k$, the probability that the outcome belongs to the $k$-th class, i.e., $[\mathbf{y}]_k=1$, equals the predicted probability, i.e., the $k$-th coordinate $[\mathbf{z}]_k$ of $\mathbf{z}$: 
$$P([\mathbf{y}]_k=1|f(\mathbf{x})=\mathbf{z})=[\mathbf{z}]_k.$$

This form of calibration is an important part of uncertainty quantification, decision science, analytics, and forecasting
\citep[see e.g.,][etc]{hilden1978measurement,miller1991validation,miller1993validation,steyerberg2010assessing,hand1997construction,jolliffe2012forecast,van2015calibration,harrell2015regression,tetlock2016superforecasting,shah2018big,steyerberg2019clinical}.
Unfortunately, however, recent works starting from at least \cite{guo2017calibration} have reported that modern machine learning methods are often poorly calibrated despite their high accuracy; which can lead to harmful consequences \cite[e.g.,][]{van2015calibration,steyerberg2019clinical}.

To address this problem, there has been a surge of works aimed at improving the calibration of machine learning models.
These methods seek to achieve calibration either by modifying the training procedure \citep{harrell2015regression,Lakshminarayanan2017SimpleAS, kumar2018trainable, Thulasidasan2019OnMT, zhang2020mix, mukhoti2020calibrating} or by learning a re-calibration function that transforms, in a post-hoc way, the predictions to well-calibrated ones \citep{cox1958two,mincer1969evaluation,steyerberg2010assessing,Platt1999ProbabilisticOF, Zadrozny2001ObtainingCP, Zadrozny2002TransformingCS, guo2017calibration, kumar2019verified, kisamori2020simulator}. 

In this regard, a key challenge is to rigorously assess and compare the performance of calibration methods. 
Without such assessments, we have limited ability to know when algorithms for improving calibration have a real effect, and when their improvements are merely artifacts due to random noise in finite-size datasets. 
As it turns out, existing works do not offer a satisfactory solution to this challenge.

In more detail, in this work, we consider the problem of detecting mis-calibration of predictive models using a finite validation dataset.
We focus on models whose probability predictions are continuously distributed---which is generally reasonable for many modern machine learning methods, including deep neural nets. We develop efficient and provably optimal algorithms to test their calibration.

Detecting mis-calibration has been studied from the perspective of statistical hypothesis testing.
The seminal work of
\cite{cox1958two} formulated a test of calibration for a collection of binary (yes-no) predictions, 
and proposed using a score test for a logistic regression model.
This has been widely used and further developed, leading to various tests for the so-called calibration slope and calibration intercept, which can validate various qualitative versions of model calibration, see e.g., \cite{hosmer1980goodness,miller1991validation,steyerberg2019clinical} and references therein.
In pioneering work,
\cite{miller1962statistical} suggested a chi-squared test for testing calibration of multiple series of binary predictions.
To deal with the challenging problem of setting critical values (i.e., how large of an empirical mis-calibration is statistically significant?) for testing calibration, bootstrap methods have become common, see e.g.,
\cite{harrell2015regression}.
We refer to Section \ref{relw} for more details and for a discussion of other related works.

\begin{figure}
    \centering
    \includegraphics{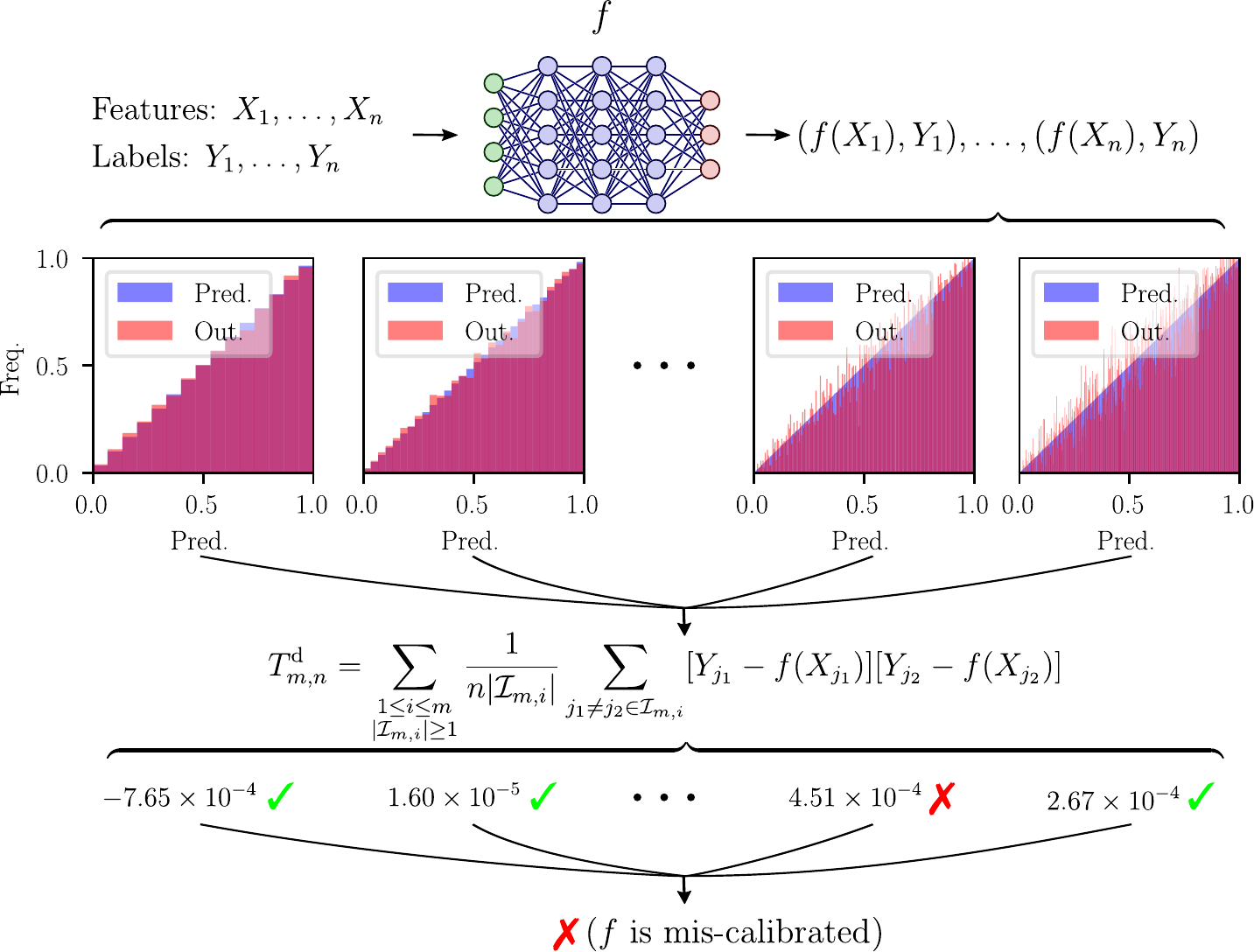}
    \caption{An overview of adaptive T-Cal. For a given probability predictor $f$, we compute $T_{m, n}^\text{d}$, the debiased plug-in estimator (DPE), binned over several scales (See \eqref{eq:onesampledef} for the definition).
    We then compare each value with the hypothetical distribution of DPE that we would get if the model were perfectly calibrated. The hypothesis of perfect calibration is rejected if at least one of the scales is detected to be mis-calibrated. This multi-scale approach ensures that T-Cal adaptively detects mis-calibration.}
    \label{fig:overview}
\end{figure}

In contrast to the above works that aim to test calibration slopes and intercepts, we aim to develop a \emph{nonparametric hypothesis test} for calibration, which does not assume a specific functional form (such as a logistic regression model), for the deviations to be detected from perfect calibration.
A nonparametric approach has the advantage that it can detect subtle forms of mis-calibration even after re-calibration by parametric methods.
However, existing approaches for nonparametric testing often rely on ad hoc techniques for binning the probability predictions, which is a limitation because the results can depend on the way that the binning has been performed \citep{harrell2015regression,steyerberg2019clinical}. In contrast, our \emph{adaptive tests} automatically select an optimal binning scheme.
Finally, 
as a new development in the area of testing calibration, 
T-Cal has theoretically guaranteed \emph{minimax optimality properties} for detecting certain reasonable types of smooth mis-calibration. 
These properties make T-Cal both practically and theoretically appealing.

We consider a given multi-class probabilistic classifier, and are interested in testing if it is calibrated.
We make the following contributions:

\begin{itemize}
  
    \item As a candidate test statistic, we consider the plug-in estimator of  $\ell_2$-expected calibration error (ECE), which is the expectation of the squared distance between the probability predictions and class probabilities given these predictions. 
    This is also known as the mean calibration error \citep[e.g.,][p. 105]{harrell2015regression}. 
    While the plug-in estimator is biased (i.e., its expectation is not zero even under perfect calibration), we show how to construct a \emph{debiased plug-in estimator} (DPE).
    
    We consider detecting mis-calibration when the deviation between predicted class probabilities and their true values---the ``mis-calibration curve"---satisfies a classical smoothness condition known as H\"{o}lder continuity. We later show that
    a smoothness condition is essentially unavoidable.
    Under this condition, we show that T-Cal can detect mis-calibration if the ECE is sufficiently large and the number of bins is chosen appropriately, depending on the smoothness (Theorem \ref{thm:onesamplethm}).

    \item To make T-Cal practical, we present a version that is adaptive to the unknown smoothness parameter (Theorem \ref{thm:adaptiveone}). This makes T-Cal fully tuning-free and practically useful.
    From a theoretical perspective, adaptivity only requires a minor additional increase in the level of mis-calibration that can be detected; by a $\log n$ factor.

    \item We support our theoretical results with a broad range of experiments. We provide simulations, which support our theoretical optimality results. We also provide experiments with several popular deep neural net architectures (ResNet-50, VGG-19, DenseNet-121, etc), on benchmark datasets (CIFAR 10 and 100, ImageNet) and several standard post-hoc calibration methods (Platt scaling, histogram binning, isotonic regression, etc).
    
    \item  To complement these results, we argue that T-Cal is optimal, by providing a number of fundamental lower bounds.  We prove that detecting mis-calibration from a finite dataset is only possible when the mis-calibration curve is sufficiently smooth, and it is not possible when the curve is just continuous (Proposition \ref{impossibility}).
    
    When the mis-calibration curves are H\"older smooth,
    we show that the calibration error required for reliable detection of mis-calibration has to be appropriately large (Theorem \ref{thm:multi-lower-bound}).
    This minimax result relies on Ingster’s (or the chi-squared) method.  Combined with our previous results, this shows that T-Cal is minimax optimal. 
    
    \item 
    To further put our problem in context, we show that testing calibration can be reduced to a well-known problem in statistical inference---the two-sample goodness-of-fit problem---by a novel randomization technique. Based on this insight, and building on the results of \cite{arias2018remember, kim2020minimax} on goodness-of-fit testing, we present another asymptotically minimax optimal test for mis-calibration that matches the lower bound (Theorem \ref{thm:reductiontest}). 
    While this method is theoretically optimal, it relies on sample splitting and is not as sample-efficient as our previous method in experiments.
  
    \item In the proofs, we have the following innovations:
    \begin{enumerate}
        \item We introduce an \emph{equal-volume binning scheme} for the probability simplex $\Delta_{K - 1}$ (Appendix \ref{multibinning}).
        We decompose the probability simplex into hypersimplices by taking intersections with smaller hypercubes composing the unit hypercube $[0, 1]^K$.
        Then we further decompose the hypersimplices into equal-volume simplices using results in polyhedral combinatorics.
        This construction enables us to extend proof techniques from the nonparametric hypothesis testing literature to our setting.
        
        \item
        To analyze our plug-in estimator, we need to deal with terms involving probability scores of inputs, which are continuous random variables. This is different from the structure of chi-squared statistics such as that of \cite{ingster1987minimax}.
        Thus, computing the mean and variance of the DPE requires a different analysis.
                
        \item While densities on the probability simplex can take arbitrary positive values, the conditional expectation of probability predictions has to lie in the probability simplex. This requires a careful construction of alternative distributions to use Ingster's method.
    \end{enumerate}
\end{itemize}

Our numerical results can be reproduced with code available at \url{https://github.com/dh7401/T-Cal}.

We now summarize some key takeaways:
\begin{itemize}
    \item {\bf The need for statistical significance to claim calibration.} It is crucial to perform rigorous statistical tests to assess the calibration of machine learning methods. While models with smaller empirical $\text{ECE}$ generally tend to be better calibrated, these values can be highly influenced by noise and randomness inherent in finite datasets.  
    Hence it is crucial to develop and use tools to assess statistical significance---such as the hypothesis tests of calibration that we develop---when claiming improved calibration.
    \item {\bf Potential suboptimality of popular approaches.}  
    The currently prevalent usage of popular metrics, such as the empirical $\text{ECE}$, may be suboptimal. The current standard is to evaluate mis-calibration metrics using a fixed number of bins (such as 15) of the probability scores, for all prediction models (ResNet, VGG, etc), and all datasets. Our results show theoretically that the optimal number of bins increases with the level of oscillations and non-smoothness expected in the probability predictor. Modern machine learning methods are becoming more and more over-parametrized and data-adaptive. This suggests that it is ever more important to use a careful model- and data-adaptive test (and number of bins) when testing calibration.
\end{itemize}

\subsection{Related Works}
\label{relw}
There is a great body of related work on evaluating the calibration of prediction methods, on improving calibration accuracy, and on nonparametric hypothesis testing techniques. We review the most closely related works. 

\paragraph{Broader Context.}
Broadly speaking, the study of calibration is an important part of the study of classification, prediction,  analytics, and forecasting \citep[e.g.,][etc]{hilden1978measurement,miller1991validation,miller1993validation,steyerberg2010assessing,hand1997construction,jolliffe2012forecast,gneiting2014probabilistic,van2015calibration,harrell2015regression,tetlock2016superforecasting,shah2018big,steyerberg2019clinical}.

\paragraph{Calibration.} As recounted in \cite{lichtenstein1977calibration}, research on calibration dates back at least to the early 1900s, when meteorologists suggested expressing predictions as probabilities and comparing them to observed empirical frequencies.
Calibration has since been studied in a variety of areas, including meteorology, statistics, medicine, computer science, and social science; and under a variety of names, such as realism or realism of confidence, 
appropriateness of confidence,
validity, external validity, secondary validity, and
reliability \citep{lichtenstein1977calibration}.
A general finding in this area is that human forecasters are often overconfident and thus mis-calibrated \cite[e.g.,][etc]{keren1991calibration}, as codified for instance in Tversky and Kahneman's celebrated work on prospect theory \citep{kahneman2013prospect}.

Beyond our hypothesis testing perspective, approaches to study calibration include Bayesian perspectives \cite[e.g.,][etc.]{dawid1982well,kadane1982subjectivist} and online settings \cite[e.g.,][etc.]{foster1998asymptotic,vovk2005good}.
See also Section 10.9 of
\cite{harrell2015regression}, 
Section 15.3 of
\cite{steyerberg2019clinical},
and
\cite{hastie1997classification,ivanov1999ready,garczarek2002classification,buja2005loss,toll2008validation,gebel2009multivariate,serrano2012calibration,van2019calibration, huang2020tutorial}, among others.

\paragraph{Calibration Measures.}
Proper scoring rules \citep{good1952rational,de1962does,savage1971elicitation,winkler1996scoring,degroot1983comparison,gneiting2007probabilistic} such as the  Brier score \citep{brier1950verification} and negative log-likelihood \citep[e.g.,][etc]{winkler1996scoring} are objective functions of two probability distributions (a true distribution and a predicted distribution). They are minimized when the predicted distribution equals the true distribution; see also \cite{bickel2007some}. 

As discussed in 
Sections 4.5 and 10.9 of 
\cite{harrell2015regression}, 
some of the standard techniques in the area include
plotting calibration curves, also known as reliability diagrams (estimated probabilities against predicted ones);
which can be bias-corrected using the bootstrap; 
and re-calibration by fitting statistical models to these curves \citep[e.g.,][etc]{austin2014graphical}.
More recently, the notion of ECE, also known as mean absolute calibration error \citep[e.g.,][p. 105]{harrell2015regression} is popularized in \cite{naeini2015obtaining} and later generalized to multi-class settings in \cite{vaicenavicius2019evaluating}.
\cite{gupta2021calibration} develop a binning-free calibration measure based on the Kolmogorov-Smirnov test.
\cite{arrieta2022metrics} introduce calibration metrics building on Komogorov-Smirnov and Kuiper statistics.

\paragraph{Calibration in Modern Machine Learning.}
\cite{guo2017calibration} draw attention to the mis-calibration of modern neural networks and compare different recalibration methods based on ECE.
Many other works \citep{milios2018dirichlet, kull2019beyond, zhang2020mix} also evaluate their methods using ECE or its variants.
In the following works of \cite{vaicenavicius2019evaluating, kumar2019verified}, it has been recognized that ECE evaluated on a fixed binning scheme can underestimate the calibration error.
The limitation of fixed binning has been known for the analogous problems of testing probability distributions and densities, see e.g., \cite{mann1942choice}, or page 19 of \cite{ingster2012nonparametric}.
\cite{kumar2019verified} proposes a debiased ECE, but only for probability predictors with a finite number of outputs.
\cite{nixon2019measuring} empirically study various versions of ECE obtained by adjusting hyperparameters involved in the estimator of ECE (such as norm, binning scheme, and class conditionality), and find that the choice of calibration measure is crucial when comparing different calibration methods.
\cite{roelofs2022mitigating} propose a heuristic for choosing an optimal number of bins when computing ECE.
\cite{zhang2020mix} use kernel density estimation to estimate ECE without relying on a binning scheme.
\cite{pmlr-v119-zhao20e} show that individual calibration is possible by randomized predictions and propose a training objective to enforce individual calibration.
See also \cite{niculescu2005predicting,kull2017beyond,bai2021don}, among others.

There has been interest in a variety of forms of calibration. We study the strongest form, multi-class calibration, which is stronger than other definitions such as marginal calibration and confidence (top) calibration \citep{vaicenavicius2019evaluating, widmann2019calibration}.

\paragraph{Nonparametric Hypothesis Testing.}
\cite{ingster1986asymptotic} derives the minimax testing rate for two-sample testing where $s$-H\"older continuous densities on $[0, 1]$ are separated in an $L^2$ sense, and shows that the chi-squared test achieves the minimax optimal rate $n^{-2s / (4s + 1)}$.
\cite{ingster1987minimax} extends this result to $L^p$ metrics and derives the minimax optimal rate $n^{-s / (2s + 1 - \max\{2, p\}^{-1})}$ for $1 \leq p < \infty$ and $(n / \log n)^{s / (2s + 1)}$ for $p = \infty$.
\cite{ingster2000adaptive} proposes an adaptive version of the test at the cost of $(\log \log n)^{s / (4s + 1)}$ factor in the minimax rate.
\cite{arias2018remember} extend these results to densities on $[0, 1]^d$ and show the minimax rate $n^{-2s / (4s + d)}$.
\cite{kim2020minimax} prove that a permutation test can also achieve the same optimal rate.
\cite{butucea2006nonparametric} study two-sample testing for one-dimensional densities in Besov spaces; they also prove adaptivity.
See also \cite{balakrishnan2018hypothesis,donoho2015higher,jin2016rare, chhor2021goodness, dubois2021goodness,berrett2021optimal} for reviews and further related works.

\paragraph{Nonparametric Functional Estimation.}
\cite{bickel1988estimating} study the problem of estimating the quadratic integral functional of the $k$-th derivative of $s$-H\"older probability densities on $\R$, and prove that the optimal convergence rate is $n^{-[4(s - k) / (4s + 1) \wedge 1/2]}$.
\cite{donoho1990minimax, brown1996asymptotic} show an analogous result for regression functions in the Gaussian white noise model.
\cite{birge1995estimation} generalize these results to smooth integral functionals of H\"older densities and their derivatives, and prove the same convergence rate.
\cite{kerkyacharian1996estimating} 
provide optimal Haar wavelet-based estimators of cubic functionals of densities over the broader class of  Besov spaces;
and also discuss estimating integrals of other powers of the density.
\cite{laurent1996efficient} 
studies estimation of functionals of the form $\int \phi(f(x),x) d\mu(x)$ of densities $f$, where $\phi$ is a sufficiently smooth function and $\mu$ is a measure.
This work 
constructs estimators attaining the optimal parametric rate using orthogonal projections, including showing semiparametric efficiency, 
when the smoothness $s>d/4$ in dimension $d$.
\cite{robins2008higher, gine2008simple, tchetgen2008minimax} introduce estimation method using higher-order U-statistics.
\cite{efromovich1996optimal, cai2006optimal, gine2008simple, mukherjee2015lepski} propose estimation methods adaptive to unknown smoothness $s$ based on Lepski's method \citep{lepski1991problem, lepski1997optimal}. 
See \cite{gine2021mathematical} for a more thorough review of related literature.

\paragraph{Hypothesis Testing for Calibration.}
\cite{cox1958two} formulates a test of calibration for a collection of Bernoulli random variables, as a test that their success probabilities are equal to some given values; and proposed using a score test for a logistic regression model.
These tests are referred to as testing the calibration slope and intercept, and they are part of a broader hierarchy of calibration \citep{van2016calibration}.
See also \cite{miller1991validation,steyerberg2019clinical} and references therein.
\cite{miller1962statistical}, Section 5, suggests a chi-squared test for testing calibration of a collection of sequences of Bernoulli random variables. \cite{spiegelhalter1986probabilistic} proposes a test of calibration based on the Brier score, for discrete-valued probability predictors.
The Hosmer-Lemeshow test \citep{hosmer1980goodness} is a goodness-of-fit test for logistic regression models.
The test is based on a chi-squared statistic that measures differences between expected and observed numbers of events in subgroups, and thus has, on the surface, a similarity to the types of test statistics we consider. There are also related tests for comparing predictors \citep{schervish1989general,diebold1995comparing}.

\cite{seillier1993testing} study testing the calibration of sequential probability forecasts.
\cite{brocker2007increasing} study the bootstrap-based procedure they call consistency resampling to produce standard error bars in reliability diagrams; without focusing on its optimality. 
For testing the calibration of forecasted densities, \cite{dawid1984statistical,diebold1998evaluating} propose the probability integral transform (PIT).
\cite{held2010score} propose a score-based approach for testing calibration.
\cite{vaicenavicius2019evaluating} use consistency resampling to test a hypothesis of perfect calibration; again without studying its optimality.
\cite{widmann2019calibration} propose kernel-based mis-calibration measures together with their estimators, and argue that the estimators can be viewed as calibration test statistics.
\cite{tamas2021exact} suggest distribution-free hypothesis tests for the null $H_0: \E[Y \mid X] = X$ based on conditional kernel mean embedding.

\paragraph{Note on Terminology.} The term calibration sometimes has a different meaning in a variety of areas of human activity, including measurement technology, engineering, economics, and even statistics, etc., see e.g., \cite{franklin1999calibration,dawkins2001calibration,kodovsky2009calibration,osborne1991statistical,vovk2020conformal, angelopoulos2021learn}.
These generally mean adjusting a measurement to agree with a desired standard, within a specified accuracy.
However, in our work, we focus on the notion of probabilistic calibration described so far.

\fussy

\subsection{Notations}
For an integer $d\geq 1$, and a vector $\mathbf{v}\in \R^d$, we  refer to the coordinates of $\mathbf{v}$ as both $[\mathbf{v}]_1,\ldots, [\mathbf{v}]_d$ and $v_1,\ldots, v_d$.
For any $p\geq 1$, and for an integer $K \geq 2$, we denote the $\ell_p$-norm of $\mathbf{x} = (x_1, \dots, x_K)^\top\in \R^K$ by $\left\Vert \mathbf{x} \right\Vert_p := ( \sum_{i = 1}^K |x_i|^p )^{1/p}$. When $p$ is unspecified, $\Vert \cdot \Vert$ stands for $\Vert \cdot \Vert_2$.
For an event $A$, we denote by $I(A)$ its indicator random variable, where $I(A) = 1$ if event $A$ happens, and $I(A)=0$ otherwise.
For two real numbers $a,b$, we denote $a\wedge b:=  \min(a,b)$.
For two sequences $(a_n)_{n\geq 1}$ and  $(b_n)_{n\geq 1}$ with $b_n \neq 0$,
we write $a_n \asymp b_n$ if $0 < \liminf_n a_n / b_n \leq \limsup_n a_n / b_n < \infty$.
When the index $n$ is self-evident, we may omit it above.
We use the Bachmann-Landau asymptotic notations $\Omega(\cdot), \Theta(\cdot)$ to hide constant factors in inequalities and use $\tilde{\Omega}(\cdot), \tilde{\Theta}(\cdot)$ to also hide logarithmic factors.
For a Lebesgue measurable set $A \subseteq \R^d$, we denote by $\mathds{1}_{A}:\R^d \to \{0,1\}$ its indicator function where $\mathds{1}_{A}(\mathbf{x}) = 1$ if $\mathbf{x} \in A$ and $\mathds{1}_{A}(\mathbf{x}) = 0$ otherwise.
For a real number $s \in \mathbb{R}$, we denote the largest integer less than or equal to $s$ by $\lfloor s \rfloor$.
Also, the smallest integer greater than or equal to $s$ is denoted by $\lceil s \rceil$.

\section{Definitions and Setup}
\label{sec:prelim}

For $K \geq 2$, consider a $K$-class classification problem where $X \in \mathcal{X}$ is the input feature vector (for instance, an image)
and $Y \in \mathcal{Y} :=\{ \mathbf{y} = (y_1, \dots, y_K)^\top \in \{0,1\}^{K}: \sum_{i=1}^K y_i = 1\}$ is the one-hot encoded output label (for instance, the indicator of the class of the image: building, vehicle, etc).

We consider a probabilistic classifier $f$ mapping the feature space to probability distributions over $K$ classes. Formally, the output space is the $(K-1)$-dimensional probability simplex $\Delta_{K-1}$, 
\beqs
\Delta_{K - 1} := \{\mathbf{z} = (z_1,\ldots, z_K)^\top \in [0, 1]^K : z_1 + \cdots + z_K = 1 \},
\eeqs
i.e.,  $f:\mathcal{X} \to \Delta_{K - 1}$.
For any $k\in\{1,\ldots,K\}$, the individual component $[f(X)]_k$ denotes the predicted probability of the $k$-th class. 
Thus, $f$ is also referred to as a probability predictor.
The probability predictor $f$ is assumed to be pre-trained on data that are independent of our calibration data at hand.

We assume that the feature-label pair $(X,Y)$ has an unknown joint probability distribution $P$ on $\mathcal{X} \times \mathcal{Y}$.
Calibration requires that the predicted probabilities of correctness are equal to the true probabilities. 
Thus, given that we predicted the probabilities $f(X)=\mathbf{z}$, and thus $[f(X)]_k = [\mathbf{z}]_k$, the true probability that $[Y]_k=1$ should be equal to $[f(X)]_k=[\mathbf{z}]_k$. Thus, for almost every $\mathbf{z}$, calibration requires that for all $k=1,\dots,K$,
$$P[[Y]_k=1\mid f(X)=\mathbf{z}] = [\mathbf{z}]_k.$$

We can reformulate this in a way that is more convenient to study.
The map $(\mathbf{x}, \mathbf{y}) \mapsto (f(\mathbf{x}), \mathbf{y})$ induces a probability distribution on $\Delta_{K - 1} \times \mathcal{Y}$; where we can think of $(\mathbf{x},\mathbf{y})$ as a realization of $(X,Y)$.
As will be discussed shortly, calibration only depends on the joint distribution of $(f(X), Y)$.
For this reason, we also denote the joint distribution of $(f(X), Y)$ by $P$ when there is no confusion.
We write $Z := f(X)$ for the predicted probabilities corresponding to $X$.

We define the \emph{regression function}  $\reg:\Delta_{K - 1} \to \Delta_{K - 1}$ as $$\reg(\mathbf{z}):=\E[Y \mid f(X)=\mathbf{z}] = \E[Y \mid Z = \mathbf{z}],$$
where the expectation is conditioned on the score $Z$ with $(Z,Y)\sim P$.
Note that each component, for $k=1,\ldots, K$, has the form 
$\E[[Y]_k\mid f(X)=\mathbf{z}] = P[[Y]_k=1\mid f(X)=\mathbf{z}].$
Especially for binary classification, this is also referred to as the \emph{calibration curve} of the probabilistic classifier $f$ \citep{harrell2015regression}.
Since we are particularly interested in continuous probability predictors, we assume that the marginal distribution $P_{Z}$ of $Z$ has a density with respect to the uniform measure on $\Delta_{K - 1}$. 
Then, this expectation is well-defined almost everywhere. 

In this language, the probabilistic classifier $f$ is \emph{perfectly calibrated} if $\reg(Z)=Z$
almost everywhere.\footnote{In the binary case ($K = 2$), we identify $\Delta_{K - 1}$ with $[0, 1]$ via the map $(z, 1 - z)^\top \mapsto z$ and use $\mathcal{Y} = \{0, 1\}$ instead of the one-hot encoded output space.
We say $f$ is perfectly calibrated if $\reg(z) := P(Y = 1 \mid f(X) = z) = z$ almost everywhere.}
Further, it turns out that it is important to study \emph{the deviations from calibration}. For this reason, we define the \emph{residual function} $\res:\Delta_{K - 1} \to \mathbb{R}^K$ as  $$\res(\mathbf{z}) := \text{reg}_f(\mathbf{z}) - \mathbf{z},$$ 
so that perfect calibration amounts to $\res(Z)=0$ almost everywhere.
When $(Z, Y)$ have a  joint distribution $P$, we sometimes write  $\res=\text{res}_{f, P}$ to display the dependence of the mis-calibration curve on $P$.
As we will see, the structure of the residual function crucially determines our ability to detect mis-calibration.
In analogy to the notion of calibration curves mentioned above, we may also call $\res$ the \emph{mis-calibration curve} of the probabilistic classifier $f$.

We observe calibration data $(Z_i, Y_i) \in \Delta_{K - 1} \times \mathcal{Y}$, $i\in \{1, \dots, n\}$, sampled i.i.d. from $P$, and denote their joint product distribution as $P^n$. Our goal is to rigorously test if $f$ is perfectly calibrated based on this finite calibration dataset.
The calibration properties of the probabilistic classifier $f$ can be expressed equivalently in terms of the distribution $P$ of $(f(X),Y) = (Z,Y)$. 
Therefore, we will sometimes refer to testing the calibration of the distribution $P$, and the probabilistic classifier will be implicit.

\paragraph{Expected Calibration Error.}
The $\ell_p$-ECE (Expected Calibration Error)
for the distribution $P$, also known as the mean calibration error \citep[e.g.,][p. 105]{harrell2015regression}, is
\beq\label{lpece}
\ell_p\text{-ECE}(f) 
= \ell_p\text{-ECE}_P(f)
= \E_{Z \sim P_Z} \left[ \|\reg(Z)-Z]\|_p^p \right]^\frac{1}{p}
=\E_{Z \sim P_Z} \left[ \sum_{k = 1}^K \left\vert [\res(Z)]_k \right\vert^p \right]^\frac{1}{p}.
\eeq

In words, this quantity measures the average over all classes $k=1,\ldots, K$ and over the data distribution $X\sim P_X$
of the per-class 
error $[\res(\mathbf{z})]_k=\E[[Y]_k \mid f(X)=\mathbf{z}]-[\mathbf{z}]_k$ 
between the predicted probability of class $k$ for input $X$---i.e., $[\mathbf{z}]_k =[f(X)]_k$---and the actual probability $\E[[Y]_k \mid f(X)=\mathbf{z}]=P[[Y]_k=1 \mid f(X)=\mathbf{z}]$ of that class.
For instance, when the number of classes is $K=2$, 
and the power is $p=1$,
we have $\ell_1\text{-ECE}(f)
=\E_{X \sim P_X} \sum_{k = 1}^2|P[[Y]_k=1 \mid f(X)] - [f(X)]_k|
= 2 \E_{X \sim P_X}\left|P[[Y]_1=1 \mid f(X)] - [f(X)]_1\right|$.

\paragraph{H\"older Continuity.}

\sloppy

We describe the notion of H\"older continuity for functions defined on $\Delta_{K - 1}$.
For simplicity, we only provide the definition for $K = 2$.
See Appendix \ref{sec:holderdef} for the complete definition for general $K\geq 2$.

Identifying $\Delta_1$ with $[0, 1]$ via the map $(z, 1 - z)^\top \mapsto z$, a function $g:\Delta_1 \to \mathbb{R}$ can be equivalently understood as a function $g: [0, 1] \to \mathbb{R}$.
For an integer $d \geq 0$ and a function $g: [0, 1] \to \mathbb{R}$, let $g^{(d)}$ be the $d$-th derivative of the function $g$.
For a real number $s$, we denote the smallest integer greater than or equal to $s$ by $\lceil s \rceil$.

For a H\"older smoothness parameter $s > 0$ and a H\"older constant $L > 0$, let $\cH_K(s,L)$ be the class of 
$(s,L)$-H\"older continuous
functions $g: [0, 1] \to \R$ satisfying, for all $x_1, x_2 \in [0, 1]$
\begin{equation}
\label{eq:binholderdef}
    \left\vert g^{(\lceil s\rceil - 1)}(x_1) - g^{(\lceil s\rceil - 1)}(x_2) \right\vert \leq L \left\vert x_1 - x_2 \right\vert^{s - \lceil s \rceil + 1}.
\end{equation}
In particular,
$\cH_K(1,L)$ denotes all $L$-Lipschitz functions.
We consider $L>0$ as an arbitrary fixed constant, and we do not display the dependence of our results on its value.
For instance, when the Lipschitz constant is 
$L=1$, 
and the H\"older smoothness parameter is $s = 1.5$,
this is the set of real-valued functions $g$ defined on $[0, 1]$ such that 
for all 
$x_1, x_2 \in [0,1]$,
$|g'(x_1)-g'(x_2)| \leq L |x_1 - x_2|^{0.5}$.

\paragraph{Goal.}
Our goal is to test the null hypothesis of perfect calibration, i.e., $\res=0$, against the alternative hypothesis that the model is mis-calibrated. To quantify mis-calibration, we use the notion of the $\ell_p\mbox{-ECE}(f)$ from \eqref{lpece}. 
We study the signal strength needed so that reliable mis-calibration detection is possible.
Further, we assume that the mis-calibration curves are H\"older continuous because we will show that by only assuming continuity, reliable detection of mis-calibration is impossible.
In Remark \ref{hol-assu}, We will also discuss what happens when the mis-calibration function is not H\"older smooth.

Let $\mathcal{P}$ be the family of all distributions $P$ over $(Z, Y) \in \Delta_{K - 1} \times \mathcal{Y}$ such that the marginal distribution $P_Z$ of $Z$ has a density with respect to the uniform measure on $\Delta_{K - 1}$.
Define
the collection $\mathcal{P}_0$ of joint distributions $P$ of $(Z, Y)$ under which the probability predictor $f$ is perfectly calibrated:
\begin{align*}
    \mathcal{P}_0 := \left\{ P \in \mathcal{P}:\, \text{res}_{f, P}(Z) = 0, \, P_Z\text{-a.s.} \right\}.
\end{align*}
For a H\"older smoothness parameter $s$ and a H\"older constant $L$, let $\mathcal{P}_{s, L, K}$ be the family of probability distributions $P \in \mathcal{P}$ over the predictions and labels
$(f(X), Y) = (Z, Y) \in \Delta_{K - 1} \times \mathcal{Y}$ under which the residual map $\mathbf{z} \mapsto [\text{res}_{f, P}(\mathbf{z})]_k$ (i.e., the map $\text{res}_{f}$ under the distribution $(Z,Y)\sim P$) 
belongs to the class of $(s,L)$-H\"older continuous functions $\mathcal{H}_K(s, L)$ for every $k \in \{1,\ldots,K\}$.
For a separation rate $\ep>0$, define
the collection $ \mathcal{P}_1(\ep, p, s)$ of joint distributions $P \in \mathcal{P}_{s, L, K}$ under which the $\ell_p$-ECE of $f$ is at least $\ep$:
\begin{align}
\label{p1}
    \mathcal{P}_1(\ep, p, s) := \left\{ P \in \mathcal{P}_{s, L, K}: \ell_p\text{-ECE}_P(f) \geq \ep \right\}.
\end{align}
We will also refer to these distributions as \emph{$\ep$-mis-calibrated}.
Our goal is to test the \emph{null hypothesis} of calibration against the \emph{alternative} of an $\ep$-calibration error:
\begin{equation}
\label{eq:setup}
H_0: P \in \mathcal{P}_0 \quad \mbox{versus } \quad H_{1}: P \in \Alt.
\end{equation}

Although we consider the null hypothesis of perfect calibration, we generally do not expect a model trained on finite data to be perfectly calibrated.
In this regard, the purpose of testing \eqref{eq:setup} is to check if there is statistically significant evidence of mis-calibration, and \emph{not} to check whether the predictor $f$ is perfectly calibrated.
As usual in hypothesis testing, not rejecting the null hypothesis does not mean that we accept that $f$ is perfectly calibrated but means that there is no statistically significant evidence of mis-calibration.
In this case,
 to gain more confidence that the model is calibrated,
one may consider testing other hypotheses  about calibration---such as top-$k$ calibration, \citep{guo2017calibration}---or collecting more data;
of course, this may require dealing with multiple testing problems.
Meanwhile, since the null of calibration is not rejected, one may use the classifier as if it was calibrated until evidence to the contrary is presented.

Moreover, in Remark \ref{rmk:compositenull}, we also provide results for the null hypothesis of a small enough calibration error.


\fussy

\paragraph{Hypothesis Testing.}
We recall some notions from hypothesis testing \cite[e.g.,][etc]{lehmann2005testing,ingster2012nonparametric} that we use to formulate our problem.
A \emph{test} $\xi$ is a function\footnote{To be rigorous, a Borel measurable function.} $\xi: (\Delta_{K - 1} \times \mathcal{Y})^n \rightarrow \{0, 1\}$
of the data, given a dataset $S= \{(X_i,Y_i)\}_{i = 1}^n \in (\Delta_{K - 1} \times \mathcal{Y})^n$,  the decision $\xi(S)$ of rejecting the null hypothesis.
In other words, for a given dataset $S$, $\xi(S)=1$ means that we detect mis-calibration, and $\xi(S)=0$ means that we do not detect mis-calibration.

Denote the set of all \emph{level} $\alpha \in (0, 1)$ tests, which have a \emph{false detection rate (or, false positive rate; type I error)} bounded by $\alpha$, as
$$\Phi_{n}(\alpha) := \left\{\xi: \sup_{P \in \mathcal{P}_0} P(\xi=1) \leq \alpha \right\}.$$
    The probability $P(\xi=1)$ is taken with respect to the distribution of the sample.
For
$\ep>0$ and $P\in \mathcal{P}_1(\ep, p, s)$ from \eqref{p1},
we want to minimize the \emph{false negative rate (type II error)}
$P(\xi=0)$, the probability of not detecting mis-calibration.
We consider the worst possible value (maximum or rather supremum) $\sup_{P \in \mathcal{P}_1(\ep, p, s)}P(\xi=0)$ of the type II error, over all distributions $P \in \mathcal{P}_1(\ep, p, s)$. 
We then want to minimize this over all tests $\xi\in\Phi_{n}(\alpha)$ 
that appropriately control the level, leading to the \emph{minimax risk} (minimax type II error)
$$R_{n}(\ep, p, s) := \inf_{\xi\in\Phi_{n}(\alpha)} \sup_{P \in \mathcal{P}_1(\ep, p, s)} P(\xi=0).$$
In words, among all tests that have a false detection rate of $\alpha<1$ using a sample of size $n$, we want to find the one with the best possible (smallest) mis-detection rate over all $\ep$-mis-calibrated distributions.

We consider $\alpha \in(0,1)$ as a fixed constant, and we do not display the dependence of our results on its value.
We want to understand how large the $\ell_p\text{-ECE}$ (as measured by $\ep$ in $\mathcal{P}_1(\ep,p,s)$) needs to be to ensure reliable detection of mis-calibration. This amounts to finding $\ep'$ such that the best possible worst-case risk $R_{n}(\ep', p, s)$ is small.
For a fixed $\beta \in (0, 1 - \alpha)$, the minimum separation (signal strength) for $s$-H\"older functions, in the $\ell_p$-norm, needed for a minimax type II error of at most $\beta$ is defined as
\begin{equation*}
\ep_n(\beta; p, s):=\ep_n(p, s) = \inf \{ \ep': R_{n}(\ep', p, s) \leq \beta \}.
\end{equation*}
Since $\beta \in (0,1-\alpha)$ is fixed, we usually omit the dependence of $\ep_n$ on this value.

\begin{remark}[Comparison with classical nonparametric hypothesis testing]\label{rem:diff}
As we summarize in Section 1.1, prior works such as  \cite{ingster1987minimax, ingster2000adaptive, ingster2012nonparametric, berman2014lp} have studied the problem of testing that the $L^p$ norm of a function is zero against the alternative that it is nonzero, where the function is either a probability density or a regression function in the Gaussian white noise model. 
Our task here is different from the classical problem since $\text{reg}_f$ is not a probability density, and we are not provided independent observations of the function $\text{reg}_f$ or $\text{res}_f$ in the Gaussian white noise model.
Rather, our observation model is closer to multinomial regression; which is heteroskedastic and differs from the above models.
While our proposed test shares ideas with the chi-squared test of \cite{ingster1987minimax, ingster2000adaptive}, it requires a different analysis for the above-mentioned reasons.
\end{remark}
 
\section{An Adaptive Debiased Calibration Test}
\label{sec:onesample}

Here we describe our main test for calibration.
This relies on a debiased plug-in estimator for  $\ell_2\text{-ECE}(f)^2$.
We prove that the test is minimax optimal and discuss why debiasing is necessary.
We also provide an adaptive plug-in test, which can adapt to an unknown H\"older smoothness parameter $s$.

 \subsection{Debiased Plug-in Estimator}
\label{sec:onesampleoptimal}

The calibration error of a continuous probability predictor $f$ is often estimated by a discretized plug-in estimator associated with a partition (or binning) of the probability simplex $\Delta_{K - 1}$ \cite[e.g.,][]{cox1958two,harrell2015regression}.
The early work of \cite{cox1958two} already recommended grouping together similar probability forecasts.
More recently, \cite{guo2017calibration} divide the interval $[0, 1]$ into bins of equal width and compute the (top-1) ECE by averaging the difference between confidence and accuracy in each bin.
\cite{vaicenavicius2019evaluating} generalize this idea to $K$-class classification and data-dependent partitions. 

In this work, we use an equal-volume partition $\mathcal{B}_m$ of the probability simplex $\Delta_{K - 1}$, which is parametrized by a binning scheme parameter $m \in \mathbb{N}_{+}$. 
The partition $\mathcal{B}_m$ consists of $m^{K - 1}$ simplices with equal volumes and diameters proportional to $m^{-1}$.
To construct a such partition, we first divide the simplex $\Delta_{K - 1}$ into $K - 1$ hypersimplices---generalizations of the standard probability simplex that can have more vertices and edges---by taking intersections with $m^{-1}$-scaled and translated $K$-dimensional hypercubes.
The hypersimplices are further divided into unit volume simplices using the result of \cite{stanley1977eulerian, sturmfels1996grobner}.
The construction of $\mathcal{B}_m$ is elaborated in Appendix B.3.
The purpose of using an equal-volume partition $\mathcal{B}_m$ is only for a simpler description of our results, and any partition with $\Theta(m^{-K + 1})$ volumes and $\Theta(m^{-1})$ diameters can be used.

Let us denote the sets comprising the partition as $\mathcal{B}_m = \{B_1, \dots, B_{m^{K - 1}}\}$.
For each $i \in \{1, \dots, m^{K - 1}\}$, define the indices of data points falling into the bin $B_i$ as
$\mathcal{I}_{m, i} := \{j: Z_j \in B_i,1\leq j\leq n\}$.
Then,  for each $i \in \{1, \dots, m^{K - 1}\}$, the averaged difference between probability predictions $Z_j = f(X_j)$ and true labels $Y_j$ for the probability predictions in $B_i$ is
$|\mathcal{I}_{m, i}|^{-1} \sum_{j \in \mathcal{I}_{m, i}} (Y_j - Z_j)$.
This estimates $\E[Y - Z \mid Z \in B_i] = \E[\res(Z) \mid Z \in B_i]$.
Now, the quantity $\ell_2\text{-ECE}(f)^2 = \E[\Vert \res(Z) \Vert^2]$ can be approximated by piecewise averaging as $\sum_{1\leq i\leq m^{K-1}} P_Z(B_i) \Vert \E[\res(Z) \mid Z \in B_i] \Vert^2$. 
Plugging in the estimate $\||\mathcal{I}_{m, i}|^{-1} \sum_{j \in \mathcal{I}_{m, i}} (Y_j - Z_j)\|^2$ of $\|\E[\res(Z) \mid Z \in B_i]\|^2$, 
we can define a plug-in estimator of $\ell_2\text{-ECE}(f)^2$ as follows:
\begin{equation}
\label{eq:onesamplebiased}
    T_{m, n} ^\text{b} := \sum_{\substack{1 \leq i \leq m^{K - 1}\\ |\mathcal{I}_{m, i}| \geq 1}} \frac{|\mathcal{I}_{m, i}|}{n} \left\Vert \frac{1}{|\mathcal{I}_{m, i}|} \sum_{j \in \mathcal{I}_{m, i}} (Y_j - Z_j) \right\Vert^2.
\end{equation}
Above, the sum is taken over bins $B_i$ containing at least one datapoint.
As will be discussed in Section \ref{sec:biasfailure}, the plug-in estimator is biased in the sense that its expectation is not zero under perfectly calibrated distributions.
Moreover, it does not lead to an optimal test statistic.
Informally, this happens because we are estimating both $\E[ Y \mid Z \in B_i]$ and $\E[Z \mid Z \in B_i]$ with the same sample $(Z_i, Y_i), i \in \{1, \dots, n\}$. 
We hence define the \emph{Debiased Plug-in Estimator} (DPE):

\begin{equation}
\label{eq:onesampledef}
    T_{m, n}^\textnormal{d} :=  \sum_{\substack{1 \leq i \leq m^{K - 1}\\ |\mathcal{I}_{m, i}| \geq 1}} \frac{|\mathcal{I}_{m, i}|}{n} \left[ \left\Vert \frac{1}{|\mathcal{I}_{m, i}|} \sum_{j \in \mathcal{I}_{m, i}} (Y_j - Z_j) \right\Vert^2 - \frac{1}{|\mathcal{I}_{m, i}|^2} \sum_{j \in \mathcal{I}_{m, i}}  \left\Vert Y_j - Z_j \right\Vert^2 \right].
\end{equation}

The debiasing term in \eqref{eq:onesampledef} ensures that $T_{m,n}^\text{d}$ has mean zero under a distribution $P \in \mathcal{P}_0$ under which $f$ is a calibrated probability predictor.
Due to the discretization, the mean of $T_{m, n}^\text{d}$ is not exactly $\ell_2\text{-ECE}(f)^2$ under $P \in \mathcal{P}_1(\ep, p, s)$, but the debiasing makes it comparable to $\ell_2\text{-ECE}(f)^2$.
This will be a crucial step when proving the optimality of $T_{m,n}^\text{d}$.

\begin{remark}[Connection to nonparametric functional estimation]\label{rem:ustatistic}
The definition of $T_{m, n}^d$ is closely related to the U-statistic for estimating the quadratic integral functional of a probability density \citep{kerkyacharian1996estimating, laurent1996efficient}.
To see this, let $\{ \phi_i (x) = P_Z(B_i)^{-1/2} 1_{B_i}(x): i = 1, \dots, m^{K - 1} \}$ be the Haar scaling functions
associated to the partition $\mathcal{B}_m$.
For each $1 \leq k \leq K$, the U-statistic
\begin{align*}
\frac{1}{n(n - 1)} \sum_{1 \leq i \leq m^{K - 1}} \sum_{1 \leq j_1 \neq j_2 \leq n} [Y_{j_1} - Z_{j_1}]_k [Y_{j_2} - Z_{j_2}]_k \phi_i(Z_{j_1}) \phi_i(Z_{j_2})
\end{align*}
is an unbiased estimate of $\int_{\Delta_{K - 1}} [\text{res}_f(z)]_k^2dP_Z(z)$.
Summing over $1 \leq k \leq K$ and plugging in $P_Z(B_i) \approx |\mathcal{I}_{m, i}| / n$, we recover (6) with the minor modification of changing $n \to n - 1$ in the scaling.

However, as noted in Remark \ref{rem:diff}, our problem differs from those studied in classical nonparametric statistic literature.
Specifically, our definition of $T_{m, n}^\textnormal{d}$ additionally requires an estimation of $P_Z(B_i)$ by $|\mathcal{I}_{m, i}| / n$.
Therefore, prior results on nonparametric functional estimation 
\citep{bickel1988estimating, donoho1990minimax, birge1995estimation} 
cannot be directly applied to $\ell_2\text{-ECE}(f)^2$.

\cite{wang2008effect, shen2020optimal} consider quadratic functional estimation for an unknown distribution of covariates and show that the minimax rate also depends on 
the
H\"older smoothness of the covariate density function. 
\end{remark}

\sloppy
In the following theorem, we prove that $T_{m, n}^\text{d}$ leads to a minimax optimal test when the number of bins is chosen in a specific way, namely $m \asymp n^{2 / (4s + K - 1)}$.
Crucially, the number of bins required decreases with the smoothness parameter $s$. In this sense, our result parallels the well-known results on the optimal choice of the number of bins for testing probability distributions and densities \citep{mann1942choice,ingster2012nonparametric}.

The guarantee on the power (or, Type II error control) requires the following mild condition, stated in
Assumption \ref{assumption:densitybound}. This ensures that the probability of each bin is proportional to the inverse of the number of bins up to some absolute constant. In particular, this holds if the density of the probabilities predicted is close to uniform. 
This assumption is necessary when extending the results of \cite{arias2018remember, kim2020minimax} to a general base probability measure $\mu$ of the probability predictions over the probability simplex.
See Appendix \ref{sec:twobackground} for more discussion.
\begin{assumption}[Bounded marginal density]
\label{assumption:densitybound}
Let $\nu$ be the uniform probability measure on the probability simplex $\Delta_{K - 1}$.
There exist constants $\nu_l, \nu_u > 0$ such that $\nu_l \leq dP_Z / d\nu \leq \nu_u$ almost everywhere.
\end{assumption}

\begin{algorithm}[t]
\caption{T-Cal: an optimal test for calibration (based on debiased plug-in estimation of the calibration error)}\label{alg:onesampletest}
\begin{algorithmic}
\STATE \textbf{Input:} Probability predictor $f: \mathcal{X} \to \Delta_{K - 1}$; i.i.d. sample $\{(X_i, Y_i) \in \mathcal{X} \times \mathcal{Y} : i \in \{1, \dots, n\} \}$; false detection rate $\alpha \in (0, 1)$; true detection rate $\beta \in (0, 1 - \alpha)$; H\"older smoothness $s$
\STATE \textbf{Initialize:}  $m_* \gets \lfloor n^{2 / (4s + K - 1)} \rfloor$; $T_{m_*, n}^\text{d}\gets 0$; $Z_i \gets f(X_i)$ for $1\leq i\leq n$; define $\{B_1, \dots, B_{m^{K - 1}}\}$ as in Appendix \ref{multibinning}
\FOR{$i = 1$ \TO $m_*^{K-1}$}
\STATE  $\mathcal{I}_{m_*, i} \gets \{j: Z_j \in B_i,\,1\leq j\leq n \}$
\STATE $T_{m_*, n}^\text{d} \gets T_{m_*, n}^\text{d}+  \frac{|\mathcal{I}_{m_*, i}|}{n} \left[ \left\Vert \frac{1}{|\mathcal{I}_{m_*, i}|} \sum_{j \in \mathcal{I}_{m_*, i}} (Y_j - Z_j) \right\Vert^2 - \frac{1}{|\mathcal{I}_{m_*, i}|^2} \sum_{j \in \mathcal{I}_{m_*, i}}  \left\Vert Y_j - Z_j \right\Vert^2 \right]$ 
\ENDFOR
\STATE $\xi_{m_*, n} \gets I\left( T_{m_*, n}^\text{d} \geq \frac{\sqrt{2}K}{\sqrt{\alpha}} \left(m_*^\frac{K- 1}{2} n^{-1} \wedge m_*^{-\frac{K - 1}{2}}\right) \right)$
\STATE \textbf{Output:} Reject $H_0$ if $\xi_{m_*, n} = 1$
\end{algorithmic}
\end{algorithm}

\begin{theorem}[Calibration test via debiased plug-in estimation]
\label{thm:onesamplethm}
    Suppose $p \leq 2$ and assume that the H\"older smoothness parameter $s$ is known.
    For a binning scheme parameter $m \in \mathbb{N}_+$, let
    $$\xi_{m, n}(\alpha) = \xi_{m, n} := I\left( T_{m, n}^\text{d} \geq \sqrt{\frac{2K^2}{\alpha}} \left(m^\frac{K- 1}{2} n^{-1} \wedge m^{-\frac{K - 1}{2}}\right) \right).$$
    Under Assumption \ref{assumption:densitybound} and for $m_* = \lfloor n^{2 / (4s + K - 1)} \rfloor$, we have
    \begin{enumerate}
        \item {\bf False detection rate control.}
        For every $P$ for which $f$ is perfectly calibrated, i.e., for $P\in \mathcal{P}_0$, the probability of falsely claiming mis-calibration is at most $\alpha$, i.e., 
        $P( \xi_{m_*, n} = 1) \leq \alpha$.
    
        \item {\bf True detection rate control.} There exists $c > 0$ depending only on $(s, L, K, \nu_l, \nu_u, \alpha, \beta)$ such that when $$\ep \geq c n^{-2s/(4s + K - 1)},$$
        then for every $P \in \mathcal{P}_1(\ep, p, s)$---i.e., when $f$ is mis-calibrated with an $\ell_p\textnormal{-ECE}$ of $\ep$---the power (true positive rate) is bounded as $P( \xi_{m_*, n} = 1 ) \geq 1 - \beta$.
        
    \end{enumerate}
\end{theorem}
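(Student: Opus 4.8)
The plan is to prove both parts by computing $\E[T_{m_*,n}^{\textnormal{d}}]$ and bounding $\text{Var}(T_{m_*,n}^{\textnormal{d}})$ and then invoking Chebyshev's inequality; the calibration threshold in $\xi_{m,n}$ is exactly tuned to these two moments. Everything rests on the algebraic identity, valid for any $P$ and obtained by expanding the square in \eqref{eq:onesampledef},
\[
T_{m,n}^{\textnormal{d}} \;=\; \frac1n\sum_{i:\,|\mathcal{I}_{m,i}|\ge 2}\frac{1}{|\mathcal{I}_{m,i}|}\sum_{j_1\neq j_2\in\mathcal{I}_{m,i}}(Y_{j_1}-Z_{j_1})^\top(Y_{j_2}-Z_{j_2}).
\]
Conditionally on $Z_1,\dots,Z_n$ the vectors $Y_j-Z_j$ are independent with mean $\res(Z_j)$, so this exhibits $T_{m,n}^{\textnormal{d}}$ as a conditionally \emph{degenerate} weighted bilinear form, which is the structural fact driving the whole argument.

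\textbf{False detection rate.} Conditioning on the $Z_j$'s, $\E[T_{m,n}^{\textnormal{d}}\mid Z_1,\dots,Z_n]=\frac1n\sum_i|\mathcal{I}_{m,i}|^{-1}\sum_{j_1\neq j_2}\res(Z_{j_1})^\top\res(Z_{j_2})$, which vanishes when $P\in\mathcal{P}_0$; hence $\E[T_{m_*,n}^{\textnormal{d}}]=0$ and one‑sided Chebyshev gives $P(T_{m_*,n}^{\textnormal{d}}\ge\tau)\le\text{Var}(T_{m_*,n}^{\textnormal{d}})/\tau^{2}$. For the variance, cross‑bin terms and within‑bin terms not sharing both indices are uncorrelated (conditional independence plus degeneracy), and $\E[((Y_{j_1}-Z_{j_1})^\top(Y_{j_2}-Z_{j_2}))^{2}\mid Z\text{'s}]=\text{tr}(\text{Cov}(Y\mid Z_{j_1})\,\text{Cov}(Y\mid Z_{j_2}))=O(1)$ because $Y$ is one‑hot. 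This yields $\text{Var}(T_{m,n}^{\textnormal{d}})\le C n^{-2}\,\E[\#\{i:|\mathcal{I}_{m,i}|\ge 2\}]$, and I would bound the expected number of non‑trivial bins by $\min\bigl(m^{K-1},\,n,\,\tfrac{n^{2}}{2}\sum_i P_Z(B_i)^{2}\bigr)$. Assumption \ref{assumption:densitybound} ($P_Z(B_i)=\Theta(m^{-(K-1)})$) turns the last quantity into $\Theta(n^{2}m^{-(K-1)})$, so $\text{Var}(T_{m,n}^{\textnormal{d}})=O\bigl(m^{K-1}n^{-2}\wedge m^{-(K-1)}\bigr)\le\alpha\tau^{2}$ for the stated threshold constant; this is exactly where the density bound is needed, since in the regime $m_*^{K-1}\gg n$ the naive bound $O(1/n)$ would exceed $\alpha\tau^{2}=\Theta(K^{2}m_*^{-(K-1)})$.

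\textbf{True detection rate.} Taking expectations in the identity over the $Z_j$'s gives
\[
\E[T_{m,n}^{\textnormal{d}}]=\sum_i P_Z(B_i)\,\|\bar\res_i\|^{2}-\frac1n\sum_i P(|\mathcal{I}_{m,i}|\ge 1)\,\|\bar\res_i\|^{2},\qquad \bar\res_i:=\E[\res(Z)\mid Z\in B_i],
\]
and the first sum equals $\ell_2\text{-ECE}(f)^{2}-\sum_i P_Z(B_i)\,\text{Var}(\res(Z)\mid Z\in B_i)$, i.e.\ $\ell_2\text{-ECE}(f)^{2}$ minus the \emph{binning bias}. I first pass to the $\ell_2$ scale: since $p\le 2$, $\|v\|_{2}\ge K^{-1/2}\|v\|_{p}$ and Jensen give $\ell_2\text{-ECE}(f)^{2}\ge \ell_p\text{-ECE}(f)^{2}/K\ge\ep^{2}/K$. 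The key lemma is that the binning bias is at most a small fraction of $\ell_2\text{-ECE}(f)^{2}$ when $m=m_*$ and $\ep\ge c\,n^{-2s/(4s+K-1)}$ with $c$ large: for $s\le 1$, componentwise $(s,L)$‑H\"older continuity bounds each within‑bin variance by $O(KL^{2}m^{-2s})=O(KL^{2}n^{-4s/(4s+K-1)})$, which is dominated once $c$ exceeds a multiple of $L$. The second, $O(m^{K-1}/n)$‑type term is handled by the density bound in the regime $m_*^{K-1}\lesssim n$ (where it is at most half the first term) and by a direct balls‑in‑bins estimate giving $\E[T_{m_*,n}^{\textnormal{d}}]=\Theta\bigl((n/m_*^{K-1})\,\ell_2\text{-ECE}(f)^{2}\bigr)$ in the regime $m_*^{K-1}\gtrsim n$; in either case $\E[T_{m_*,n}^{\textnormal{d}}]\ge 2\tau$ for $c$ large. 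For the variance under $\mathcal{P}_1$ I would write $Y_j-Z_j=\res(Z_j)+\widetilde{W}_j$ with $\E[\widetilde{W}_j\mid Z_j]=0$: the purely‑degenerate $\widetilde{W}$–$\widetilde{W}$ part obeys the same $O(m^{K-1}n^{-2}\wedge m^{-(K-1)})$ bound as under $\mathcal{P}_0$, while the mixed and $\res$–$\res$ parts are of order $\ell_2\text{-ECE}(f)^{2}/n$; since $\ep\gtrsim n^{-2s/(4s+K-1)}\gg n^{-1/2}$ for $K\ge 2$, both are negligible relative to $\E[T_{m_*,n}^{\textnormal{d}}]^{2}$ for $c$ large, and a second application of Chebyshev with $\tau\le\tfrac12\E[T_{m_*,n}^{\textnormal{d}}]$ gives $P(T_{m_*,n}^{\textnormal{d}}<\tau)\le\beta$.

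\textbf{Main obstacle.} The delicate point is the binning‑bias lemma when $s>1$: a piecewise‑constant projection onto $\mathcal{B}_m$ only approximates a generic smooth function to order $m^{-1}$, so the clean bound $O(L^{2}m^{-2(s\wedge1)})$ fails. One must exploit that $\res=\text{reg}_f-\text{id}$ is a difference of $\Delta_{K-1}$‑valued maps, hence \emph{uniformly bounded}, together with H\"older control of the top derivative and the standing requirement $\ep\gtrsim L\,m_*^{-s}$: an $s$‑H\"older, bounded residual with $\ell_2\text{-ECE}(f)^{2}\gtrsim L^{2}m_*^{-2s}$ cannot concentrate most of its $L^{2}(P_Z)$ mass in oscillations below the bin scale $m_*^{-1}$ — such oscillations would force $\|\res\|_{\infty}\lesssim L\,m_*^{-s}$, contradicting $\ell_2\text{-ECE}(f)\ge\ep$ — so its piecewise‑constant projection retains a constant fraction of $\ell_2\text{-ECE}(f)^{2}$. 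A secondary difficulty, also noted in Remark \ref{rem:ustatistic}, is that, unlike the classical chi‑squared statistic, $T_{m,n}^{\textnormal{d}}$ additionally estimates the bin masses $P_Z(B_i)$ through $|\mathcal{I}_{m,i}|/n$, which forces the moment computations to track the continuously distributed $Z_j$'s and to treat separately the two bin‑density regimes $m_*^{K-1}\lesssim n$ and $m_*^{K-1}\gtrsim n$ that the $\wedge$ in the threshold is designed to cover.
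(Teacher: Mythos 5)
Your overall architecture matches the paper's: reduce to $p=2$, compute $\E[T^{\textnormal{d}}_{m_*,n}]$ and bound its variance under both $\mathcal{P}_0$ and $\mathcal{P}_1$ by conditioning on $\mathbf{Z}$ (exploiting the conditionally degenerate bilinear-form structure), then apply Chebyshev with a threshold tuned to those moments; the balls-in-bins estimates giving the $\wedge$ in the threshold and the role of Assumption 3.1 in the $m_*^{K-1}\gg n$ regime are all correctly identified and essentially the same as in the paper. Your Type I argument, your decomposition of $T^{\textnormal{d}}$ via $Y_j-Z_j=\res(Z_j)+\widetilde W_j$, and your handling of the $s\le 1$ within-bin variance bound are sound (you should additionally track the $\operatorname{Var}(\E[T^{\textnormal{d}}\mid\mathbf N])=O(\ep^4)$ term, but that is a minor bookkeeping matter).

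The genuine gap is exactly the one you flag as the main obstacle, and your proposed patch for $s>1$ does not close it. Your heuristic is: if the piecewise-constant projection of $\res$ lost most of its $L^2(P_Z)$ mass, then $\res$ would have (approximately) zero bin means, and H\"older continuity would then force $\|\res\|_\infty\lesssim L m_*^{-s}$, contradicting $\ell_2$-ECE $\ge\ep$. But for $s>1$, the step ``zero bin means plus $(s,L)$-H\"older implies $\|\res\|_\infty\lesssim L m_*^{-s}$'' is not a consequence of the H\"older condition alone. The H\"older bound controls only the $(\lceil s\rceil-1)$-th derivative; knowing $\res$ vanishes somewhere in each bin leaves the lower-order Taylor coefficients at that zero unbounded, so the sup norm on the bin is not directly $O(m^{-s})$. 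What rescues the claim is precisely that those lower-order coefficients are themselves constrained by the zero-bin-mean condition across adjacent bins — but establishing this requires a quantitative statement about how much $L^2$ mass a low-degree polynomial can hide from bin averaging. That is the content of the paper's Lemma 13 and its Lemma 15: Taylor-expand $\res$ to degree $\lceil s\rceil-1$ on a coarser partition (error $O(m^{-s})$ in sup norm), then use a compactness argument to show that the bin-averaging operator on polynomials of bounded degree retains a constant fraction of the $L^2$ norm. Without this polynomial projection lemma (or an explicit Markov-brothers-style iteration over adjacent bins, which would reprove it), the $s>1$ case of the true-detection-rate bound does not go through.
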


\fussy

The proof can be found in Appendix \ref{sec:proofonesamplethm}.
The proof follows the classical structure of upper bound arguments in nonparametric hypothesis testing, see e.g., \cite{arias2018remember,kim2020minimax} for recent examples.
We compute the mean and variance of $T_{m_*, n}^\text{d}$ under null distributions $P_0 \in \mathcal{P}_0$ and alternative distributions $P_1 \in \mathcal{P}_{s, L, K}$ with a large ECE.
Using Lemma 13, we can find a lower bound on $\mathbb{E}_{P_1}[T_{m_*, n}^\text{d}] - \mathbb{E}_{P_0}[T_{m_*, n}^\text{d}]$.
The variances $\text{Var}_{P_0}(T_{m_*, n}^\text{d})$ and $\text{Var}_{P_1}(T_{m_*, n}^\text{d})$ can be also upper bounded.
We argue that the mean difference $\mathbb{E}_{P_1}[T_{m_*, n}^\text{d}] - \mathbb{E}_{P_0}[T_{m_*, n}^\text{d}]$ is significantly larger than the square root of the variances $\text{Var}_{P_0}(T_{m_*, n}^\text{d})$ and $\text{Var}_{P_1}(T_{m_*, n}^\text{d})$.
The conclusion follows from Chebyshev's inequality.

Combined with our lower bound in Theorem \ref{thm:multi-lower-bound}, this result shows the desired property that our test is \emph{minimax optimal}.
This holds for all $p \leq 2$, so that the test is minimax optimal even when the mis-calibration is measured in  the $\ell_p$ norm with $p < 2$.
This is consistent with experimental findings such as those of \cite{nixon2019measuring},  where  the empirical $\ell_2$-ECE performs better than the empirical $\ell_1$-ECE as a measure of calibration error.
Also see Section \ref{sec:powanalysis} for a comparison of the empirical $\ell_1$-ECE and $\ell_2$-ECE as a test statistic.

Although we present explicit critical values in Theorem \ref{thm:onesamplethm}, they can be conservative in practice, as in other works in nonparametric testing \citep{ingster1987minimax, arias2018remember, kim2020minimax}.
Therefore, we recommend choosing the critical values via a version of bootstrap: consistency resampling \citep{brocker2007increasing, vaicenavicius2019evaluating}.
See Appendix \ref{sec:critcompare} for further details on choosing critical values.

\begin{remark}\label{rmk:compositenull}
So far, we considered the null hypothesis of perfect calibration.
However, since the predictor $f$ is trained on a finite dataset, we cannot expect it to be perfectly calibrated.
We can extend Theorem \ref{thm:onesamplethm} to the null hypothesis of ``small enough'' mis-calibration, namely, for any given constant $c_0>0$, an $\ell_p\textnormal{-ECE}$ of at most $c_0 n^{-2s / (4s + K - 1)}$.
Then, the true and false positive rates of the test
$$\xi_{m, n}^\textnormal{comp} := I\left( T_{m, n}^\text{d} \geq \frac{K}{\sqrt{\alpha}}  \sqrt{2 \left(m^{K - 1} n^{-2} \wedge m^{-(K - 1)} \right) + 5c_0^2 n^{-\frac{4s}{4s + K - 1}} \left( n^{-1} \wedge m^{-(K - 1)} \right)} \right)$$
can be controlled as in Theorem \ref{thm:onesamplethm}.
See Appendix \ref{sec:proofcompositenull} for the proof.
\end{remark}

\subsection{An Adaptive Test}
The binning scheme used in our
plug-in test requires knowing the smoothness parameter $s$ to be minimax optimal.
However, in practice, this parameter is usually unknown.
Can we design an adaptive test that does not require knowing this parameter? 
Here we answer this question in the affirmative.
As in prior works in nonparametric hypothesis testing, e.g., \cite{ingster2000adaptive, arias2018remember, kim2020minimax}, we propose an \emph{adaptive test} that can adapt to an unknown H\"older smoothness parameter $s$.
The idea is to evaluate the plug-in test over a variety of partitions, and thus be able to detect mis-calibration at various different scales.

In more detail, we evaluate the test with a number of bins ranging over a dyadic grid $ 2, 2^2, \ldots, 2^B$.
In addition, to make sure that we control the false detection rate, we need to divide the level $\alpha$ by the number of tests performed.
Thus, for a number $B = \lceil \frac{2}{K - 1} \log_2(n / \sqrt{\log n}) \rceil$
of tests performed, 
we let the adaptive test
\beq\label{at}
\xi_n^\textnormal{ad} := \max_{1\leq b\leq B} \xi_{2^b, n} \left( \frac{\alpha}{B} \right)
\eeq
detect mis-calibration if any of the debiased plug-in tests $\xi_{2^b, n} \left( \alpha / B \right)$, with the number of bins $2^b$, $b\in\{1,\ldots,B\}$, detects mis-calibration at level $\alpha/B$. We summarize the procedure in Algorithm \ref{alg:adptest}.

\begin{theorem}[Adaptive plug-in test]
\label{thm:adaptiveone}
Suppose $p \leq 2$.
Under Assumption \ref{assumption:densitybound}, the adaptive test from \eqref{at} enjoys 
\begin{enumerate}
        \item {\bf False detection rate control.} 
        For every $P$ for which $f$ is perfectly calibrated, i.e., for $P\in \mathcal{P}_0$, the probability of falsely claiming mis-calibration is at most $\alpha$, i.e., 
        $P \left( \xi_{n}^\textnormal{ad} = 1 \right) \leq \alpha$.
        \item {\bf True detection rate control.} There exists $c_\textnormal{ad} > 0$ depending on $(s, L, K, \nu_l, \nu_u, \alpha, \beta)$ such that the power (true positive rate) is lower bounded as  $P( \xi_{ n}^\textnormal{ad} = 1 ) \geq 1 - \beta$ for every $P \in \mathcal{P}_1(\ep, p, s)$---i.e., when $f$ is mis-calibrated with an $\ell_p\textnormal{-ECE}$ of at least $\ep \geq c_\textnormal{ad} (n / \sqrt{\log n})^{-2s/(4s + K - 1)}$.

\end{enumerate}
\end{theorem}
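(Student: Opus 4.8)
The plan is to derive both parts from the single-scale analysis underlying Theorem~\ref{thm:onesamplethm}, applied at the $B$ dyadic scales $m = 2, 2^2, \dots, 2^B$ at once and at the reduced level $\alpha/B$. For the false detection rate, note that the level guarantee of Theorem~\ref{thm:onesamplethm} is not particular to $m_\ast$: the debiasing term makes $\E_P[T_{m,n}^{\mathrm d}] = 0$ for every $P \in \mathcal{P}_0$ and every $m \in \mathbb{N}_+$, and the variance computation in that proof shows that the critical value appearing in $\xi_{m,n}(\alpha')$ is exactly the one for which Chebyshev's inequality forces $P(T_{m,n}^{\mathrm d} \ge \text{crit.}) \le \alpha'$ uniformly over $P \in \mathcal{P}_0$. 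Hence $\xi_{2^b, n}(\alpha/B) \in \Phi_n(\alpha/B)$ for each $b \in \{1, \dots, B\}$, and a union bound gives, for $P \in \mathcal{P}_0$,
\[
P\bigl(\xi_n^{\mathrm{ad}} = 1\bigr) = P\Bigl( \exists\, b \le B :\ \xi_{2^b, n}(\alpha/B) = 1 \Bigr) \le \sum_{b=1}^B \frac{\alpha}{B} = \alpha .
\]

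For the true detection rate, fix $P \in \mathcal{P}_1(\ep, p, s)$ with $\ep \ge c_{\mathrm{ad}}(n/\sqrt{\log n})^{-2s/(4s + K - 1)}$. Since $\xi_n^{\mathrm{ad}} \ge \xi_{2^b, n}(\alpha/B)$ for every $b$, it is enough to exhibit a single dyadic scale $b_\ast \in \{1, \dots, B\}$ at which the single-scale test at level $\alpha/B$ has power at least $1 - \beta$. I would take $m_0 := 2^{b_\ast}$ to be a dyadic approximation of the \emph{oracle number of bins for the deflated sample size} $\tilde n := n/\sqrt{\log n}$, namely $m_0 \asymp \tilde{n}^{2/(4s + K - 1)} \asymp \ep^{-1/s}$ — up to constants, the smallest number of bins for which the discretization bias of the binned $\ell_2\text{-ECE}^2$ is dominated by $\ep^2$. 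Since $B = \lceil \frac{2}{K - 1}\log_2 \tilde n\rceil$ and $4s + K - 1 > K - 1$, one has $1 \le b_\ast \le B$ for all $n$ above a threshold depending only on $(s, L, K, \nu_l, \nu_u, \alpha, \beta)$ (smaller $n$ being handled by enlarging $c_{\mathrm{ad}}$, which makes the hypothesis vacuous). Re-running the mean/variance computation of Theorem~\ref{thm:onesamplethm} at $m = m_0$ and level $\alpha/B$: by Lemma~13 and the choice $m_0 \asymp \ep^{-1/s}$, the mean gap obeys $\E_P[T_{m_0, n}^{\mathrm d}] - \E_{P_0}[T_{m_0, n}^{\mathrm d}] \ge c_1 \ep^2$, while $\mathrm{Var}_{P_0}(T_{m_0, n}^{\mathrm d})$ and $\mathrm{Var}_P(T_{m_0, n}^{\mathrm d})$ satisfy the same bounds as in that proof, both of strictly smaller order than $\ep^4$ throughout the relevant range of $\ep$. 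The critical value of $\xi_{m_0, n}(\alpha/B)$ is $\sqrt{B}$ times that of $\xi_{m_0, n}(\alpha)$, i.e.\ $\sqrt{2K^2 B/\alpha}\,(m_0^{(K-1)/2} n^{-1} \wedge m_0^{-(K-1)/2})$, so Chebyshev's inequality gives power $\ge 1 - \beta$ once
\[
c_1 \ep^2 \ \ge\ C'\sqrt{B}\,\bigl( m_0^{(K-1)/2} n^{-1} \wedge m_0^{-(K-1)/2} \bigr) \ +\ C''\sqrt{\max\bigl\{ \mathrm{Var}_{P_0}(T_{m_0,n}^{\mathrm d}),\, \mathrm{Var}_P(T_{m_0,n}^{\mathrm d}) \bigr\}} .
\]
Substituting $m_0 \asymp \ep^{-1/s}$, so that $m_0^{(K-1)/2} n^{-1} \asymp \ep^{-(K-1)/(2s)} n^{-1}$ is the dominant term on the right, the binding constraint collapses to $\ep^{(4s + K - 1)/(2s)} \ge c_2 \sqrt{B}/n$, equivalently $\ep \ge c_3 (\sqrt{B}/n)^{2s/(4s + K - 1)} \asymp c_3 (n/\sqrt{\log n})^{-2s/(4s + K - 1)}$ — which is exactly the hypothesis once $c_{\mathrm{ad}}$ is chosen large enough relative to $c_1, C', C''$. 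The variance cross-terms and the two-branch minimum in the critical value (the branch $m_0^{-(K-1)/2}$ becoming relevant when $K - 1 > 4s$) are handled exactly as in the proof of Theorem~\ref{thm:onesamplethm}.

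The main obstacle is the tension created by the $\sqrt{B} = \Theta(\sqrt{\log n})$ inflation of the critical value due to the Bonferroni split: the scale $b_\ast$ must be fine enough that the discretization bias is still swamped by $\ep^2$, yet not so fine that the inflated threshold overtakes $\ep^2$. Choosing the number of bins to be the oracle choice for $n/\sqrt{\log n}$ rather than for $n$ is precisely what reconciles these two requirements and yields the $(\log n)^{s/(4s + K - 1)}$ loss in the detectable separation; any coarser or finer dyadic scale would lose more. A secondary point is checking, uniformly over $\mathcal{P}_1(\ep, p, s)$, that Lemma~13 and the variance bounds of Theorem~\ref{thm:onesamplethm} carry over to the dyadic scale $m_0$ (which differs from the oracle scale only by a bounded factor), and that the argument remains valid in the high-dimensional regime $K - 1 > 4s$.
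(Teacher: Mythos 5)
Your proposal matches the paper's own proof: the false detection rate follows from a union bound over the $B$ dyadic scales, and the power guarantee is obtained by selecting the dyadic scale $m_0 = 2^{b_0}$ nearest to $(n/\sqrt{\log n})^{2/(4s+K-1)}$ and re-running the mean/variance/Chebyshev analysis from Theorem~\ref{thm:onesamplethm} at level $\alpha/B$, with the resulting $\sqrt{B}\asymp\sqrt{\log n}$ inflation of the critical value absorbed into the $(\log n)^{s/(4s+K-1)}$ factor in the detectable separation. One small imprecision worth noting: your identification $m_0 \asymp \ep^{-1/s}$ is exact only when $\ep$ sits at the detection threshold; the paper fixes $m_0$ to the dyadic approximation of $(n/\sqrt{\log n})^{2/(4s+K-1)}$ independently of $\ep$, which avoids the edge case $\ep^{-1/s} < 2$ and makes the argument go through uniformly over $\ep \geq c_{\textnormal{ad}}(n/\sqrt{\log n})^{-2s/(4s+K-1)}$ without case-splitting.
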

See Appendix \ref{sec:proofadaptiveone} for the proof.
Compared to the non-adaptive test, this test requires a mild additional factor of $(\log n)^{s / (4s + K - 1)}$ in the separation rate $\ep$ to guarantee detection.
It is well understood in the area of nonparametric hypothesis testing that some adaptation cost is unavoidable, see for instance 
\cite{spokoiny1996adaptive, ingster2000adaptive}. For more discussion, see Remark \ref{rem:adapt}.

\begin{remark}
    We remark that the false detection rate control of Theorem \ref{thm:onesamplethm} and \ref{thm:adaptiveone} does not require a H\"older smoothness assumption.
\end{remark}

\begin{algorithm}[t]
\caption{Adaptive T-Cal: an adaptive test for calibration}\label{alg:adptest}
\begin{algorithmic}
\STATE \textbf{Input:} Probability predictor $f: \mathcal{X} \to \Delta_{K - 1}$; i.i.d. sample $\{(X_i, Y_i) \in \mathcal{X} \times \mathcal{Y} : i \in \{1, \dots, n\} \}$; false detection rate $\alpha \in (0, 1)$; true detection rate $\beta \in (0, 1 - \alpha)$

\STATE \textbf{Initialize:} $B \gets \lceil \frac{2}{K - 1} \log_2(n / \sqrt{\log n})\rceil$; $Z_i \gets f(X_i)$ for $1\leq i \leq n$; $\xi_n^\text{ad}\gets 0$
\FOR{$b=1$ \TO B}
\STATE Compute $\xi_{2^b, n}(\frac{\alpha}{B})$ as in Algorithm \ref{alg:onesampletest}
\IF{$\xi_{2^b, n}(\frac{\alpha}{B})=1$}
\STATE $\xi_n^\text{ad} \gets 1$
\STATE \textbf{break}
\ENDIF
\ENDFOR
\STATE \textbf{Output:} Reject $H_0$ if $\xi_{n}^\text{ad} = 1$
\end{algorithmic}
\end{algorithm}

\subsection{Necessity of Debiasing} 
\label{sec:biasfailure}

Recall from \eqref{eq:onesamplebiased} that $T_{m, n}^\text{b}$ is the plug-in estimator of $\ell_2\text{-ECE}(f)^2$ without the debiasing term in \eqref{eq:onesampledef}.
We argue that this biased estimator is not an optimal test statistic, even for $m = m_* = \lfloor n^{2 / (4s + K - 1)}\rfloor$ from Theorem \ref{thm:onesamplethm} (which is optimal for the debiased test), by presenting a failure case in the following example.

\sloppy
\begin{example}[Failure of naive plug-in]
\label{ex:miscal}
Consider binary classification problem with $K = 2$, $m_* = \lfloor n^{2 / (4s + 1)} \rfloor$ (assumed to be divisible by four), and the partition 
$$\mathcal{B}_{m_*} = \{B_1, \dots, B_{m_*}\} 
= \left\{\left[0, \frac{1}{m_*}\right), \dots, \left[\frac{m_* - 1}{m_*}, 1\right] \right\}.$$
Let $P_0$ be the distribution over $(Z, Y) \in [0, 1] \times \{0, 1\}$ given by $Z \stackrel{P_0}{\sim} \textnormal{Unif}([0, 1])$ and $Y \mid Z = z \stackrel{P_0}{\sim} \textnormal{Ber}(z)$ for all $z \in [0, 1]$.
Under $P_0$, the probability predictor $f$ is perfectly calibrated, i.e., $P_0 \in \mathcal{P}_0$.
Let $\zeta: \R \to \R$ be the function defined by
\begin{equation}
\label{eq:zetadef}
    \zeta(x) :=  e^{-\frac{1}{x(1 - x)}} \mathds{1}_{(0, 1)}(x).
\end{equation}
Let $g: [0, 1] \to [0, 1]$ be the function (corresponding to the calibration curve of the probability predictor $f$) \begin{equation}
\label{eq:biasreg}
    g(z) := z - \rho m_*^{-s} \sum_{j = 0}^{\frac{m_*}{4} - 1} \zeta\left( m_*z - \frac{m_*}{4} - j \right) + \rho m_*^{-s} \sum_{j = \frac{m_*}{4}}^{\frac{m_*}{2} - 1} \zeta\left( m_*z - \frac{m_*}{4} - j \right)
\end{equation}
for $s \in (\frac{1}{4}, \frac{1}{2})$, $\rho > 0$, and $\zeta$ defined in \eqref{eq:zetadef}.
Define the distribution $P_1$ over $(Z, Y)$ by 
$$Z \stackrel{P_1}{\sim} \textnormal{Unif}([0, 1]) \textnormal{ and } Y \mid Z = z \stackrel{P_1}{\sim} \textnormal{Ber}(g(z))$$ 
for all $z \in [0, 1]$.
As we will show in the proof of Theorem \ref{thm:multi-lower-bound}, (1) the mis-calibration curve $g(z) - z = \res(z)$ is $s$-H\"{o}lder and (2) $\ell_2\textnormal{-ECE}_{P_1}(f) = \Theta(n^{-2s / (4s + 1)})$.

As can be seen in Figure \ref{fig:miscal}, the probability predictor $f$ under $P_1$ is an example of a mis-calibrated predictor, as $f(X)$ is smaller than $\E[Y \mid f(X)]$ when $f(X)$ is above 0.5; and vice versa.
However, the mean of $T_{m_*, n}^\textnormal{b}$ under the mis-calibrated distribution $P_1$ is surprisingly smaller than the mean under the calibrated distribution $P_0$ when $n$ is large enough (Proposition \ref{prop:biasmean}).

That is, the statistic $T_{m_*, n}^\textnormal{b}$ does not capture the amount of mis-calibration,
and therefore the calibration test based on it will not perform well.
Figure \ref{fig:biashist} confirms this finding, and Figure \ref{fig:debiashist} displays that this effect can be removed by using the debiased statistic $T_{m_*, n}^\textnormal{d}$.
\end{example}
\fussy

\begin{proposition}[Failure of naive plug-in test]
\label{prop:biasmean}
    Let $P_0$ and $P_1$ be the distributions defined in Example \ref{ex:miscal}, and $m_* = \lfloor n^{2 / (4s + 1)} \rfloor$.
    Then $\E_{P_0}[T_{m_*, n}^\textnormal{b}] \geq \E_{P_1}[T_{m_*, n}^\textnormal{b}]$ for all large enough $n \in \mathbb{N}_+$.
\end{proposition}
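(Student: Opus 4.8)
The plan is to compute both expectations by conditioning on the predictions $Z_1,\dots,Z_n$. Write $N_i := |\mathcal{I}_{m_*,i}|$, so $T_{m_*,n}^{\textnormal{b}} = \sum_{i:\,N_i\ge 1}(nN_i)^{-1}\big(\sum_{j\in\mathcal{I}_{m_*,i}}(Y_j-Z_j)\big)^{2}$. In the binary setting the labels are conditionally independent given $Z_1,\dots,Z_n$, with $Y_j\mid Z_j\sim\mathrm{Ber}(r(Z_j))$, where $r(z)=z$ under $P_0$ and $r(z)=g(z)$ under $P_1$; since $\E[Y_j-Z_j\mid Z_j]=\res(Z_j)$ and $\mathrm{Var}(Y_j\mid Z_j)=r(Z_j)(1-r(Z_j))$, a one-line computation gives
\[
\E\Big[\Big(\textstyle\sum_{j\in\mathcal{I}_{m_*,i}}(Y_j-Z_j)\Big)^{2}\,\Big|\,Z_1,\dots,Z_n\Big]=\sum_{j\in\mathcal{I}_{m_*,i}}r(Z_j)\big(1-r(Z_j)\big)+\Big(\textstyle\sum_{j\in\mathcal{I}_{m_*,i}}\res(Z_j)\Big)^{2}.
\]
The essential observation is that the marginal law of $(Z_1,\dots,Z_n)$ is the same — namely $\mathrm{Unif}([0,1])^{\otimes n}$ — under $P_0$ and under $P_1$. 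Hence, using the elementary identity $z(1-z)-g(z)(1-g(z))=\res(z)^{2}-\res(z)(1-2z)$ (valid since $g(z)=z+\res(z)$), subtracting the two expectations yields $\E_{P_0}[T_{m_*,n}^{\textnormal{b}}]-\E_{P_1}[T_{m_*,n}^{\textnormal{b}}]=G_1-G_2$, where, with $h(z):=\res(z)^{2}-\res(z)(1-2z)$ and all expectations over i.i.d.\ uniform $Z_j$,
\[
G_1:=\frac1n\,\E\Big[\sum_{i:\,N_i\ge1}\frac1{N_i}\sum_{j\in\mathcal{I}_{m_*,i}}h(Z_j)\Big],\qquad G_2:=\frac1n\,\E\Big[\sum_{i:\,N_i\ge1}\frac1{N_i}\Big(\sum_{j\in\mathcal{I}_{m_*,i}}\res(Z_j)\Big)^{2}\Big].
\]
It then suffices to prove $G_1>G_2$ for all large $n$.

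Both quantities reduce to deterministic integrals after conditioning further on the counts $N_i$, using that given $N_i$ the predictions falling in $B_i$ are i.i.d.\ uniform on $B_i$. For $G_1$ this is exact: since $P(N_i\ge1)=1-(1-1/m_*)^{n}=:1-\epsilon_n$ is the same for every bin, $G_1=\frac{(1-\epsilon_n)m_*}{n}\int_0^1 h(z)\,dz$, and $\epsilon_n\to0$ because $n/m_*\to\infty$ (this is where $s>\tfrac14$ enters, making empty bins super-polynomially rare). For $G_2$, expanding the inner square into its diagonal and off-diagonal parts (a U-statistic decomposition) gives $G_2=\frac1{m_*}\sum_i a_i^{2}+o(m_*^{-2s})$ with $a_i:=m_*\int_{B_i}\res$, the remainder collecting the within-bin term $\tfrac{m_*}{n}\,\ell_2\text{-ECE}_{P_1}(f)^{2}$ and an $O(n^{-1}\sum_i a_i^2)$ piece, both negligible since $m_*/n\to0$. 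Since a constant fraction ($m_*/2$) of the bins have $a_i=\pm\Theta(m_*^{-s})$, this gives $G_2=\Theta(m_*^{-2s})=\Theta(\ell_2\text{-ECE}_{P_1}(f)^{2})$, consistent with $\ell_2\text{-ECE}_{P_1}(f)=\Theta(n^{-2s/(4s+1)})=\Theta(m_*^{-s})$ from Example~\ref{ex:miscal}.

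The heart of the proof is evaluating $\int_0^1 h(z)\,dz=\ell_2\text{-ECE}_{P_1}(f)^{2}-\int_0^1\res(z)(1-2z)\,dz$. For the cross term I exploit the structural design of $g$: each bump $\zeta(m_*z-m_*/4-j)$ sits inside a single bin $B_i$, and the signs are arranged so that on every perturbed bin $g$ is pushed \emph{away} from $\tfrac12$ — precisely, $\res(z)<0$ exactly where $z<\tfrac12$ and $\res(z)>0$ exactly where $z>\tfrac12$ on the perturbed region $(\tfrac14,\tfrac34)$, so $\res(z)(1-2z)\le0$ everywhere. Changing variables bump-by-bump and using the symmetry $\zeta(u)=\zeta(1-u)$ (so $\int_0^1(u-\tfrac12)\zeta(u)\,du=0$ kills the first-order-in-$m_*^{-1}$ correction) reduces the cross term to $-\rho\big(\int_0^1\zeta\big)m_*^{-s}\cdot m_*^{-1}\sum_{\ell}|1-2z_\ell^{\ast}|$, where $z_\ell^{\ast}$ are the midpoints of the $m_*/2$ perturbed bins; recognizing $m_*^{-1}\sum_\ell|1-2z_\ell^{\ast}|$ as a Riemann sum for $\int_{1/4}^{3/4}|1-2z|\,dz=\tfrac18$ gives $\int_0^1\res(z)(1-2z)\,dz=-\Theta(m_*^{-s})$, hence $\int_0^1 h=\Theta(m_*^{-s})$ (the $\Theta(m_*^{-2s})$ piece being lower order) and $G_1=\Theta(m_*^{1-s}/n)$. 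Therefore $G_1/G_2=\Theta\!\big(m_*^{1+s}/n\big)=\Theta\!\big(n^{(1-2s)/(4s+1)}\big)\to\infty$, using $s<\tfrac12$; so $G_1-G_2>0$ for all sufficiently large $n$, which is exactly $\E_{P_0}[T_{m_*,n}^{\textnormal{b}}]\ge\E_{P_1}[T_{m_*,n}^{\textnormal{b}}]$.

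The main obstacle — and the conceptual content of the example — is recognizing that the mean difference is governed not by the "signal" term $G_2\asymp\ell_2\text{-ECE}_{P_1}(f)^{2}$ but by the cross term $-\int_0^1\res(z)(1-2z)\,dz$, which comes from the mismatch between the conditional label variances $z(1-z)$ and $g(z)(1-g(z))$. Establishing that this term has the right sign (every bump moves $g$ away from $\tfrac12$, so after the minus sign it is strictly positive) and the exact order $\Theta(m_*^{-s})$ (via the within-bin symmetry of $\zeta$, which is what makes the leading contribution survive rather than cancel) is the delicate step; everything else is bookkeeping with binomial bin counts.
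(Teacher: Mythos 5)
Your proof is correct and follows essentially the same route as the paper's: condition on the predictions, identify the mean difference as a cross term $\int_0^1 \res(z)(2z-1)\,dz = \Theta(m_*^{-s})$ (which is sign-definite by the bump construction) scaled by roughly $m_*/n$, minus a squared term of order $\Theta(m_*^{-2s})$, and then conclude from $m_*^{1-s}/n \gg m_*^{-2s}$ whenever $s<\tfrac12$. The only differences are cosmetic: you organize the algebra so the diagonal $\res^2$ contributions cancel between $G_1$ and $G_2$ rather than canceling them up front, and you evaluate the cross integral exactly via the symmetry of $\zeta$ where the paper settles for a crude lower bound over a constant fraction of bins.
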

See Appendix \ref{sec:proofbiasmean} for the proof.
We remark that it is possible to avoid the phenomenon in Proposition \ref{prop:biasmean}, by choosing a different $m$.
Proposition \ref{prop:biasmean} aims only to highlight 
that
the effect of the bias in $T_{m, n}^\text{b}$ can be extreme in certain cases, and we do not claim that $m = m_*$ is also the optimal choice for the biased statistic.

We finally comment on the related results of \cite{brocker2012estimating, ferro2012bias, kumar2019verified}.
In \cite{brocker2012estimating, ferro2012bias}, the plug-in estimator of the squared $\ell_2$-ECE is decomposed into terms related to reliability and resolution.
Based on this observation, \cite{kumar2019verified} propose a debiased estimator for the squared $\ell_2$-ECE and show an improved sample complexity for estimation.
However, their analysis is restricted to the binary classification case and probability predictors with only finitely many output values.
It is not clear how to adapt their method to predictors with continuous outputs, because this would require discretizing the outputs.
Our debiased plug-in estimator $T_{m,n}^\text{d}$ is more general, as it can be used for multi-class problems and continuous probability predictors $f$.
Also, our reason to introduce $T_{m, n}^\text{d}$ (testing) differs from theirs (estimation).

\begin{figure}
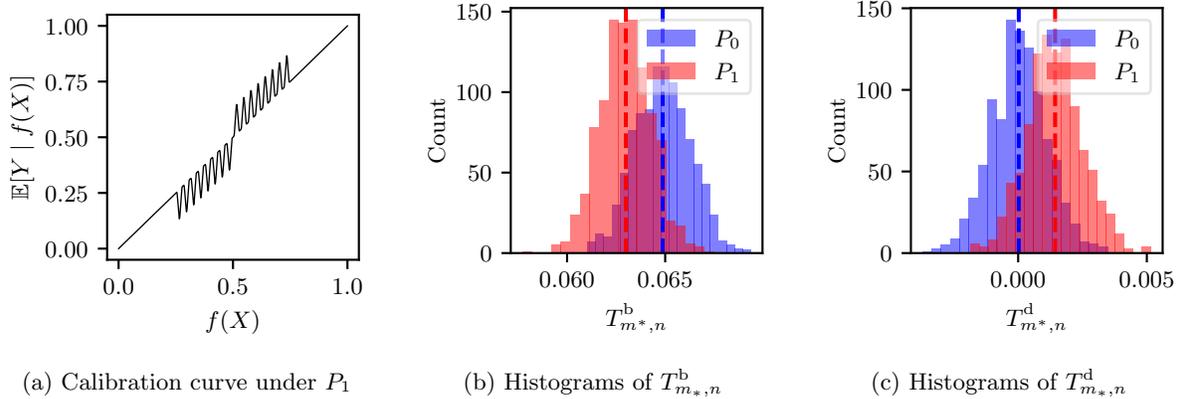

    \centering
    \captionsetup[subfigure]{justification=centering}
    \begin{subfigure}{0.32\textwidth}
        \centering
        \input{images/altdist.pgf}
        \caption{Calibration curve under $P_1$}
        \label{fig:miscal}
    \end{subfigure}
    \begin{subfigure}{0.32\textwidth}
        \centering
        \input{images/biased.pgf}
        \caption{Histograms of $T_{m_*, n}^{\text{b}}$}
        \label{fig:biashist}
    \end{subfigure}
    \begin{subfigure}{0.32\textwidth}
        \centering
        \input{images/debiased.pgf}
        \caption{Histograms of $T_{m_*, n}^{\text{d}}$}
        \label{fig:debiashist}
    \end{subfigure}
    
    \caption{\textbf{(a)} A graph of the calibration curve $z \mapsto g(z) = \E_{P_1}[Y \mid f(X) = z]$ defined in \eqref{eq:biasreg}. When the true label probability is above/below 0.5, the model outputs a smaller/larger score. Hence $f$ is a mis-calibrated probability predictor under $P_1$. \textbf{(b)} Histograms of $T_{m_*, n}^\text{b}$ and $T_{m_*, n}^\text{d}$ under $P_0$ and $P_1$ are obtained from 1,000 independent observations. We use the parameters $n = 10,000 $, $s = 0.3$, and $\rho = 100$. The dashed line indicates the empirical mean of each distribution. Note that the biased estimator $T_{m_*, n}^\text{b}$ has a smaller mean under $P_1$, which aligns with Proposition \ref{prop:biasmean}. \textbf{(c)} We see this effect disappears after debiasing and that the mean of $T_{m_*, n}^\textnormal{d}$ becomes zero.}
    \label{fig:debiasexample}
\end{figure}

\section{Experiments}
We perform experiments on both synthetic and empirical datasets to support our theoretical results. These experiments suggest that T-Cal is in general superior to state-of-the-art methods.

\begin{figure}
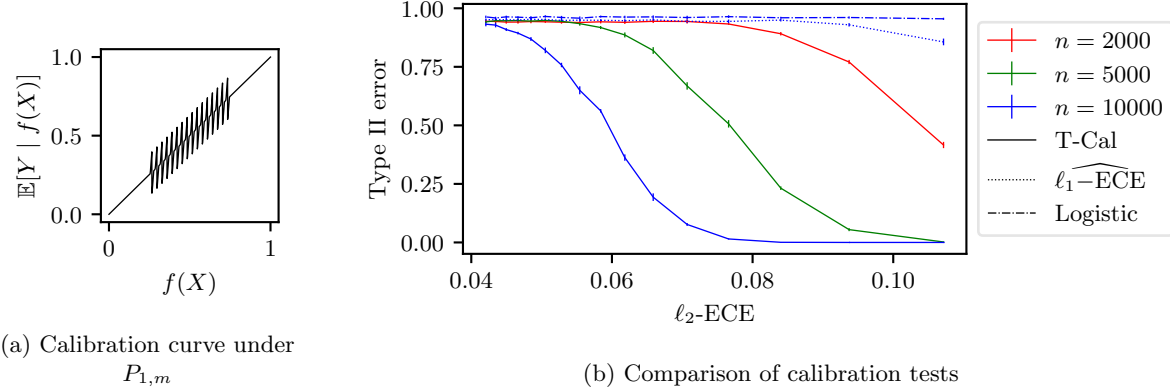

    \centering
    \captionsetup[subfigure]{justification=centering}
    \begin{subfigure}{0.27\textwidth}
        \centering
        \input{images/altdist2.pgf}
        \caption{Calibration curve under $P_{1, m}$}
        \label{fig:gmshape}
    \end{subfigure}
    \begin{subfigure}{0.72\textwidth}
        \centering
        \input{images/compare.pgf}
        \caption{Comparison of calibration tests}
        \label{fig:compare}
    \end{subfigure}

    \caption{\textbf{(a)} A graph of the calibration curve $z \mapsto g_m(z) = \E_{P_{1, m}}[Y \mid f(X) = z]$ defined in \eqref{eq:gmdef}. The mis-calibration curve alternates between negative and positive values, making detection challenging. \textbf{(b)} We compare our test $\xi_{m_*, n}$ with other commonly used calibration tests.
    Since our test optimally adjusts the number of bins $m_* = \lfloor n^{2 / (4s + K - 1)} \rfloor$ according to the sample size $n$, it can detect mis-calibration over smaller and smaller intervals as $n$ grows.
    On the other hand, the plug-in test $\xi_{m,n}$, with a fixed-in-$n$ binning scheme parameter $m$, fails to detect mis-calibration over intervals smaller than the bin width. This issue remains when the sample size $n$ increases.
    The test based on the calibration slope and intercept also suffers from the same issue. Standard error bars are plotted over 10 repetitions.}
    \label{fig:exp1}
\end{figure}

\subsection{Synthetic Data: Power Analysis}
\label{sec:powanalysis}

Let $P_0 \in \mathcal{P}_0$ be the distribution defined in Example \ref{ex:miscal}---a distribution under which $f$ is perfectly calibrated.
For $m \in \mathbb{N}_+$, $s > 0$, $\rho > 0$, and $\zeta: \R \to \R$ from \eqref{eq:zetadef}, define $g_m : [0, 1] \to [0, 1]$ by
\begin{equation}
\label{eq:gmdef}
    g_m(z) := z + \rho m^{-s} \sum_{j = 0}^{m - 1} (-1)^j \zeta\left(2mz - \frac{m}{2} - j \right).
\end{equation}
This function oscillates strongly, as shown in Figure \ref{fig:gmshape}.
Let $P_{1, m}$ be the distribution over $(Z, Y) \in [0, 1] \times \{0, 1\}$ given by $Z \stackrel{P_{1, m}}{\sim} \text{Unif}([0, 1])$ and $Y \mid Z = z \stackrel{P_{1, m}}{\sim} \text{Ber}(g_m(z))$ for all $z \in [0, 1]$.
Under $P_{1, m}$, the probability predictor $f$ is mis-calibrated with an $\ell_p$-ECE of at least $\ep = \rho \Vert \zeta \Vert_{L^p} m^{-s}$. However, since the mis-calibration curve $g_m$ oscillates strongly, mis-calibration can be challenging to detect.

We study the type II error of tests against the alternative where the mis-calibration is specified as $H_1: (Z, Y) \sim P_{1, m}$.
This gives a lower bound on the worst-case type II error over the alternative hypothesis $\mathcal{P}_1(\ep, p, s)$.
We repeat the experiment for different values of $m$ to obtain a plot of $\ell_2$-ECE versus type II error.

\paragraph{Comparison of Tests.} We compare the test $\xi_{m_*, n}$ with classical calibration tests dating back to \cite{cox1958two}, and discussed in \cite{harrell2015regression, vaicenavicius2019evaluating}.
\cite{harrell2015regression} refits a logistic model
$$P(Y = 1 \mid Z) = \frac{1}{1 + \exp\left( -\left( \gamma_0 + \gamma_1 \log \frac{Z}{1 - Z} \right) \right)}$$
on the sample $\{(Z_i, Y_i): i \in \{1, \dots, n\}\}$ and tests the null hypothesis of $\gamma_0 = 0$ and $\gamma_1 = 1$.
Specifically, we perform the score test \citep{rao1948large, silvey1959lagrangian}, with the test statistic derived from the gradient of log-likelihood with respect to the tested parameters.
There are several approaches to set the critical values, including by using the asymptotic distribution theory of sampling statistics under the null hypothesis, or by data reuse methods such as the bootstrap.
We estimate the critical values via 1000 Monte Carlo simulations.

\begin{figure}
    \centering
    \input{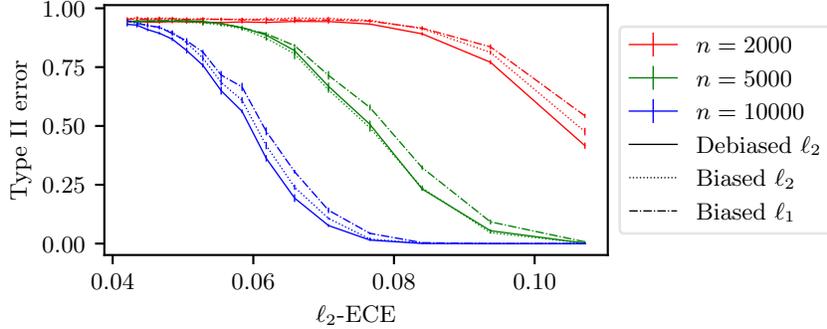}
    \caption{Type II error comparison for $T_{m_*, n}^\text{d}$ (T-Cal), $T_{m_*, n}^\text{b}$, and $T_{m_*, n}^{\ell_1}$. Using $\ell_2$ is better than $\ell_1$, and debiased $\ell_2$ (T-Cal) is better than biased $\ell_2$. Standard error bars are plotted over 10 repetitions.}
    \label{fig:l1vsl2}
\end{figure}

\cite{vaicenavicius2019evaluating} use $\widehat{\ell_1\text{-ECE}}$, the plug-in estimator for $\ell_1\text{-ECE}$, as their test statistic.
They approximate the distribution of $\widehat{\ell_1\text{-ECE}}$ by a bootstrapping procedure called consistency resampling (in which both the probability predictions and the labels are resampled) and compute a $p$-value based on this approximation.
This test also uses a plug-in estimator as the test statistic but differs from T-Cal as it is neither debiased nor adaptive.
Since the data-generating distribution is known in this synthetic experiment, we set the critical value via 1000 Monte Carlo simulations.

We control the false detection rate at a level $\alpha = 0.05$ and run experiments for $n = 2,000, 5,000$, and $10,000$.
We find that our proposed test achieves the lowest type II error, see Figure \ref{fig:compare}.
We also find that other tests do not leverage the growing sample size $n$.
For this reason, we only display $n = 10,000$ for the other two tests. As can be seen in Figure \ref{fig:compare}, T-Cal outperforms other testing methods in true detection rate by a large margin.

\paragraph{$\ell_1\textnormal{-ECE}$ versus $\ell_2\textnormal{-ECE}$.}
To confirm the effectiveness of using the $\ell_2$-ECE estimator $T_{m_*, n}^\text{d}$, we compare it with a plug-in $\ell_1$-ECE estimator defined as
$$T_{m, n}^{\ell_1} := \sum_{\substack{1 \leq i \leq m^{K - 1}\\|\mathcal{I}_{m, i}| \geq 1}} \frac{|\mathcal{I}_{m, i}|}{n} \left\Vert \frac{1}{|\mathcal{I}_{m, i}|} \sum_{j \in \mathcal{I}_{m, i}} (Y_j - Z_j) \right\Vert_1.$$
At the moment, it is unknown how to debias this estimator.
We use the optimal binning parameter $m = m_* = \lfloor n^{2 / (4s + K - 1)} \rfloor$ for both $\ell_1$ and $\ell_2$ estimators; because it is unknown what the $\ell_1$-optimal binning scheme is.
Also, to isolate the effect of debiasing, we compare the biased $\ell_2$ estimator $T_{m_*,n}^\text{b}$ as well, with the same number of bins.
In Figure \ref{fig:l1vsl2}, we see the $\ell_2$ estimators consistently outperform the $\ell_1$ estimator, regardless of debiasing. 
While it is a common practice to use a plug-in estimator of $\ell_1$-ECE, our result suggests T-Cal compares favorably to it.

\paragraph{Minimum Detection Rate.}
We perform an experiment to support the result on the minimum detection rate of T-Cal, presented in Theorem \ref{thm:onesamplethm}.
For each $n$, we find the largest integer, denoted $m(n)$, such that the type II error against $H_1: (Z, Y) \sim P_{1, m(n)}$ is less than $0.05$.
We compute $\ep_n := \ell_2\text{-ECE}_{P_{1, m(n)}}(f) = \rho \Vert \zeta \Vert_{L^2} m(n)^{-s}$ (a lower bound on the minimum detection rate) and plot $\log \ep_n$ versus $\log n$  in Figure \ref{fig:detection}.
We see that the logarithm decreases as $n$ grows, with the slope $-\frac{2s}{4s + 1}$ predicted by Theorem \ref{thm:onesamplethm}.

\begin{figure}
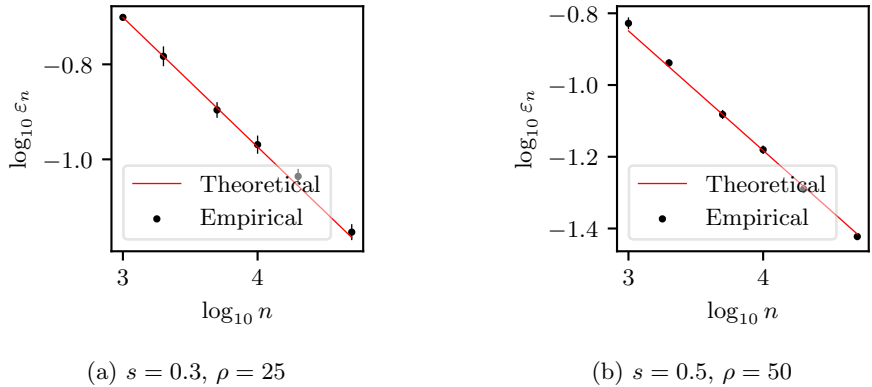

    \centering
    \captionsetup[subfigure]{justification=centering}
    \begin{subfigure}{0.4\textwidth}
        \centering
        \input{images/detection_rate0.pgf}
        \caption{$s = 0.3$, $\rho = 25$}
        \label{fig:detection0}
    \end{subfigure}
    \begin{subfigure}{0.4\textwidth}
        \centering
        \input{images/detection_rate1.pgf}
        \caption{$s = 0.5$, $\rho = 50$}
        \label{fig:detection1}
    \end{subfigure}

    \caption{The dots are $\log \ep_n$ computed for different sample sizes $n$. The red line has a slope $-\frac{2s}{4s + 1}$. Standard error bars are plotted over 10 repetitions. See the text for more details.}
    \label{fig:detection}
\end{figure}

\subsection{Results on Empirical Datasets}
\label{sec:exp}
\begin{table}[ht]
\centering
    \begin{tabular}{lcccccc}
    \toprule
\multirow{2}{*}{} & \multicolumn{2}{c}{DenseNet 121} & \multicolumn{2}{c}{ResNet 50} & \multicolumn{2}{c}{VGG-19} \\
                  & $\widehat{\ell_1\text{-ECE}}$        & Calibrated?        & $\widehat{\ell_1\text{-ECE}}$      & Calibrated?       & $\widehat{\ell_1\text{-ECE}}$      & Calibrated?     \\\midrule
No Calibration     & 2.02\%            & reject                   & 2.23\%          & reject                  & 2.13\%         & reject                \\                 
Platt Scaling     & 2.32\%            & reject                   & 1.78\%          &    reject                & 1.71\%         & reject                \\
Poly. Scaling     & 1.71\%            & reject                     & 1.29\%          & reject                  & 0.90\%         & accept              \\
Isot. Regression                  & 1.16\%            & reject                  & 0.62\%          & reject                  & 1.13\%         & accept  \\
Hist. Binning                  & 0.97\%            & reject                   & 1.12\%          & reject                  & 1.28\%         & reject  \\
Scal. Binning                  & 1.94\%            & reject                  & 1.21\%          & reject                  & 1.67\%         & reject  \\
\bottomrule          
\end{tabular}
\caption{The values of the empirical $\ell_1\text{-ECE}$ \citep{guo2017calibration} and the testing results, via adaptive T-Cal and multiple binomial testing, of models trained on CIFAR-10.}
\label{tab:cifar-10}
\end{table}

To verify the performance of adaptive T-Cal empirically, we apply it to the probability predictions output by deep neural networks trained on several datasets. Since our goal is to test calibration, we calculate the probabilities predicted by pre-trained models on the test sets. 
As in \citep[][etc]{guo2017calibration,kumar2019verified,nixon2019measuring}, we binarize the test labels by taking the top-1 confidence as the new probability prediction, and the labels as the results of the top-$1$ classification, i.e., $\widetilde{Z}=\max_{1\leq k\leq K}[Z]_k$ and $\widetilde{Y}=I(\text{correctly classified by the top-1 prediction})$. 
This changes the problem of detecting the full-class mis-calibration to testing the mis-calibration of a binary classifier. Hence, we choose $K=2$ for adaptive T-Cal in the experiments below. We refer readers to \citep{Gupta2021ToplabelCA} for more details about binarization via the top-$1$ prediction.

\paragraph{CIFAR-10.} For the CIFAR-10 dataset, the models are DenseNet 121, ResNet 50, and VGG-19. 
We first apply the adaptive test directly to the $10,000$ uncalibrated probability predictions output by each model, with the false detection rate controlled at the level $\alpha=0.05$. 

For every choice of the number of bins $m$, we estimate the critical value by taking the upper $5\%$ quantile of the values of the test statistic over $3,000$ bootstrap re-samples of the probability predictions. The labels are also chosen randomly, following Bernoulli distributions with the probability prediction as the success probability. 
We also provide the values of the standard empirical $\ell_1\text{-ECE}$ calculated with \cite{guo2017calibration}'s approach for the reader's reference, and with $15$ equal-width bins.

\begin{figure}
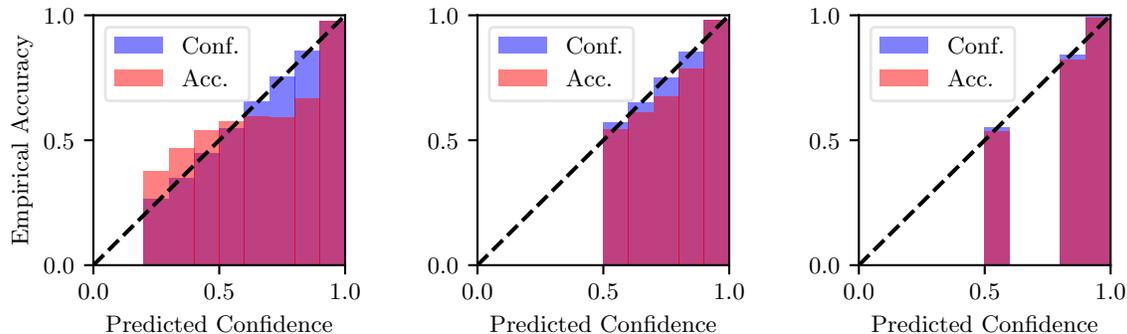

    \centering
    \captionsetup[subfigure]{justification=centering}
    \begin{subfigure}{0.33\textwidth}
        \centering
        \input{images/diagramplatt.pgf}
        \label{fig:platt}
    \end{subfigure}
    \begin{subfigure}{0.30\textwidth}
        \centering
        \input{images/diagrampoly.pgf}
        \label{fig:poly}
    \end{subfigure}
    \begin{subfigure}{0.30\textwidth}
        \centering
        \input{images/diagramhistogram.pgf}
        \label{fig:hist}
    \end{subfigure}
    \caption{The reliability diagrams for VGG-19, trained on CIFAR-10, calibrated by Platt scaling (left), polynomial scaling (middle), and histogram binning (right). The bins (bars) containing less than $10$ data points, where the sample noise dominates, are omitted for clarity. The dashed lines correspond to perfect calibration.}
    \label{fig:diagram}
\end{figure}

We then test the probability predictions of these three models calibrated by several post-calibration methods: Platt scaling \citep{Platt1999ProbabilisticOF}, polynomial scaling, isotonic regression \citep{Zadrozny2002TransformingCS}, histogram binning \citep{Zadrozny2001ObtainingCP}, and scaling-binning \citep{kumar2019verified}. To this end, we split the original dataset of $10,000$ images into $2$ sets---of sizes $2,000$ and $8,000$. 
The first set is used to calibrate the model, and the second is used to perform adaptive T-Cal and calculate the empirical $\ell_1\text{-ECE}$.
In polynomial scaling, we use polynomials of order $3$ to do regression on all the prediction-label pairs $(Z_i,Y_i)$, and truncate the calibrated prediction values into the interval $[0,1]$. We set the binning scheme in both histogram binning and scaling binning as $15$ equal-mass  bins.  Our implementation is adapted from \cite{kumar2019verified}. 

Since the recalibrated probability predictions output by the latter two methods belong to a finite set, we use a test based on the binomial distribution.
See Appendix \ref{sec:discretetest} for details.
For completeness, we also provide the debiased empirical $\ell_2\text{-ECE}$ values \citep{kumar2019verified} for models calibrated by the two discrete methods, see the details in Table \ref{tab:cifar-10-2}, Appendix \ref{sec:more_experiments}.

The results are listed in Table \ref{tab:cifar-10}, where we use ``accept" to denote that the test does not reject. The models with smaller empirical $\ell_1$\text{-ECE} are more likely to be accepted, by adaptive T-Cal and by multiple binomial testing, as perfectly calibrated. This can be further illustrated by the three empirical reliability diagrams given in Figure \ref{fig:diagram}, where the model's predictions calibrated by Platt scaling (left) are visually  more ``mis-calibrated''  than those calibrated by polynomial scaling (middle) and histogram binning (right).

\paragraph{CIFAR-100. } We perform the same experimental procedure for three models pre-trained on the CIFAR-100 dataset: MobileNet-v2, ResNet 56, and ShuffleNet-v2 \citep{chanyaofo}. The test set provided by CIFAR-100 is split into two parts, containing $2,000$ and $8,000$ images, respectively. 
Since the regression functions $\reg$ of models trained on the larger CIFAR-100 dataset can be more complicated than those of models trained on CIFAR-10, we set the polynomial degree as five in polynomial scaling. 

The results are listed in Table \ref{tab:cifar-100}. The values of the debiased empirical $\ell_2\text{-ECE}$ \citep{kumar2019verified} for the two discrete calibration methods are provided in Table \ref{tab:cifar-100-2}, Appendix \ref{sec:more_experiments}. The results roughly align with the magnitude of the empirical ECE value. 

However, as can be observed in the column corresponding to ResNet 56, 
this trend is certainly \emph{not} monotone. 
The calibrated ResNet 56 with the empirical ECE $1.84\%$ is accepted while the calibrated ResNet 56 with a smaller value $1.57\%$ is rejected. 
Furthermore, the test results reveal that models with relatively large (or small) empirical $\ell_1\text{-ECE}$ values may not necessarily be poorly (or well) calibrated since the $\ell_1\text{-ECE}$ values measured can be highly dominated by the sample noise.

\begin{table}
\centering
    \begin{tabular}{lcccccc}
    \toprule
\multirow{2}{*}{} & \multicolumn{2}{c}{MobileNet-v2} & \multicolumn{2}{c}{ResNet 56} & \multicolumn{2}{c}{ShuffleNet-v2} \\
                  & $\widehat{\ell_1\text{-ECE}}$        & Calibrated?        & $\widehat{\ell_1\text{-ECE}}$      & Calibrated?       & $\widehat{\ell_1\text{-ECE}}$      & Calibrated?     \\\midrule
No Calibration      & 11.87\%            & reject                   & 15.2\%          & reject                  & 9.08\%         & reject               \\                  
Platt Scaling       & 1.40\%            & accept                   & 1.84\%          & accept                & 1.34\%         & accept                \\
Poly. Scaling       & 1.69\%            & reject                   & 1.91\%         & reject                  & 1.81\%         &  accept               \\
Isot. Regression    & 1.76\%            & accept                   & 2.33\%           & reject                  & 1.38\%         & accept  \\
Hist. Binning       & 1.66\%            & reject                   & 2.44\%          & reject                  & 2.77\%         & reject  \\
Scal. Binning       & 1.85\%            & reject                   & 1.57\%          & reject                  & 1.65\%         & accept  \\
\bottomrule          
\end{tabular}
\caption{The values of the empirical $\ell_1\text{-ECE}$ \citep{guo2017calibration} and the testing results, via adaptive T-Cal and multiple binomial testing, of models trained on CIFAR-100.}
\label{tab:cifar-100}
\end{table}

\paragraph{ImageNet. } We repeat the above experiments on models pre-trained on the ImageNet dataset. We examine three pre-trained models provided in the torchvision package in PyTorch: DenseNet 161, ResNet 152, and EfficientNet-b7. We split the validation set of $50,000$ images into a calibration set and a test set---of sizes $10,000$ and $40,000$, respectively. We use polynomials of degree $5$ in polynomial scaling.

The results are listed in Table \ref{tab:imagenet}. The values of the debiased empirical $\ell_2\text{-ECE}$ \citep{kumar2019verified} are provided in Table \ref{tab:imagenet-2}, in Appendix \ref{sec:more_experiments}. As can be seen, the test results here generally align with the empirical ECE values.

\begin{remark}
    While it is hard to verify the H\"older smoothness of $\text{res}_f$ in these empirical datasets, we believe that some level of justification would be possible assuming (1) enough differentiability on $f$ (which we think to be true for common neural net architectures and activation functions), (2) $Y$ being deterministic given $X$ (which is reasonable for low-noise datasets such as ImageNet), and 
    (3) the set of inputs $\mathcal{C}_k = \{x \in \mathcal{X}: (Y | X = x) = e_k\}$ corresponding to each class being a Lipschitz domain, locally the graph of a Lipschitz continuous function.
Given these assumptions, each coordinate of the regression function $\text{reg}_f(z)$ can be written as $[\text{reg}_f(z)]_k = \mathbb{E}[[Y]_k | f(X) = z] = P_Z(\mathcal{C}_k \cap \{f(x) = z\}) / P_Z(\{f(x) = z\})$; assuming the denominator is strictly positive.
Then, H\"older continuity may follow from the inverse function theorem and the coarea formula \cite[Theorem 3.2.3]{federer2014geometric}, which expresses the measure of a level set as an integral of the Jacobian.

However, this argument still requires making the essentially unverifiable Assumption (3) from the above paragraph.
In some cases, this assumption can be viewed as reasonable: for instance, one may reasonably think that image manifolds for classes in ImageNet are locally Lipschitz; and thus Assumption (3) may hold.
Under such conditions, this rough argument may provide an idea of why H\"older smoothness could be reasonable for certain predictive models such as neural net architectures.

When the H\"older condition does not hold, we still have the type I error guarantee, but may not have type II error control. 
We expect that the H\"older condition might be drastically violated when there is obvious discontinuity of the predictors/regression functions (e.g., a decision tree/random forest trained on data having discrete features).
\end{remark}

\begin{table}
\centering
    \begin{tabular}{lcccccc}
    \toprule
\multirow{2}{*}{} & \multicolumn{2}{c}{DenseNet 161} & \multicolumn{2}{c}{ResNet 152} & \multicolumn{2}{c}{EfficientNet-b7} \\
                  & $\widehat{\ell_1\text{-ECE}}$        & Calibrated?        & $\widehat{\ell_1\text{-ECE}}$      & Calibrated?       & $\widehat{\ell_1\text{-ECE}}$      & Calibrated?     \\\midrule
No Calibration      & 5.67\%            & reject                   & 4.99\%          & reject                 & 2.82\%        & reject              \\                  
Platt Scaling       & 1.58\%           &    reject                & 1.41\%          & reject                & 1.90\%         & reject                \\
Poly. Scaling       & 0.62\%            &  accept                  & 0.64\%         &  accept                 & 0.71\%         & accept                \\
Isot. Regression    & 0.63\%            & reject                   & 0.80\%           & reject                  & 1.06\%         & reject  \\
Hist. Binning       & 0.46\%            & reject                   & 1.26\%          & reject                  & 0.88\%         & reject  \\
Scal. Binning       & 1.55\%            & reject                   & 1.40\%          & reject                  & 1.97\%         & reject  \\
\bottomrule          
\end{tabular}
\caption{The values of the empirical $\ell_1\text{-ECE}$ \citep{guo2017calibration} and the testing results, via adaptive T-Cal and multiple binomial testing, of models trained on ImageNet.}
\label{tab:imagenet}
\end{table}

\section{Lower Bounds for Detecting Mis-calibration}

To complement our results on the performance of the plug-in tests proposed earlier, we now show some fundamental lower bounds for detecting mis-calibration. We also provide a reduction that allows us to test calibration via two-sample tests and sample splitting. We show that this has a minimax optimal performance, but empirically does not perform as well as our previous test.

\subsection{Impossibility for General Continuous Mis-calibration Curves}
\label{ic}

In Proposition \ref{impossibility}, we show that detecting mis-calibration is impossible, even when the sample size $n$ is arbitrarily large unless the mis-calibration curve $\res$ has some level of smoothness. 
Intuitively, if the mis-calibration curve can be arbitrarily non-smooth, then it can oscillate between positive and negative values with arbitrarily high frequency, and these oscillations cannot be detected from a finite sample.

In this regard, one needs to be careful when concluding the quality of calibration from a finite sample. If we only assume that the mis-calibration curve is a continuous function of the probability predictions, then it is impossible to tell apart calibrated and mis-calibrated models. Further, for more complex models such as deep neural networks, one expects the predicted probabilities to be able to capture larger and larger classes of functions; thus this result is even more relevant for modern large-scale machine learning. 

\sloppy
Let $\mathcal{P}_1^\text{cont}(\ep, p)$ be the family of probability distributions $P$ over $(Z, Y)$
such that $\ell_p\text{-ECE}_P(f) \geq \ep$ and every entry of the mis-calibration curve $\text{res}_{f, P}$ is continuous.
This is a larger set of distributions than $ \mathcal{P}_{1}(\ep, p, s)$ in \eqref{p1}, because we only assume continuity, not H\"older smoothness. 
Denote the corresponding minimax type II error by $R_n^\text{cont}(\ep, p)$, namely
$$R_n^\text{cont}(\ep, p) := \inf_{\xi \in \Phi_n(\alpha)} \sup_{P \in \mathcal{P}_1^\textnormal{cont}(\ep, p)} P(\xi=0).$$
This has the same interpretation as before, namely, it is the best possible false negative rate for detecting mis-calibration for data distributions belonging to $\mathcal{P}_1^\text{cont}(\ep, p)$, in a worst-case sense.

\begin{proposition}[Impossibility of detecting mis-calibration]
\label{impossibility}
Let $\ep_0 = 0.1$.
For any level $\alpha \in (0,1)$, the minimax type II error $R_{n}^\textnormal{cont}(\ep_0, p)$ for testing the null hypothesis of calibration at level $\alpha$ against the hypothesis $P \in \mathcal{P}_1^\textnormal{cont}(\ep_0, p)$ of general continuous mis-calibration curves satisfies $R_{n}^\textnormal{cont}(\ep_0, p) \geq 1-\alpha$ for all $n$.
\end{proposition}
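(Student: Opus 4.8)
The plan is to exhibit, for every sample size $n$, a prior distribution over mis-calibrated alternatives in $\mathcal{P}_1^\textnormal{cont}(\ep_0, p)$ whose induced mixture of data distributions is exactly equal to the null distribution under which $f$ is perfectly calibrated. If the mixture over the alternative is indistinguishable from the null at the level of the distribution of the full sample, then no test---and in particular no level-$\alpha$ test---can have power exceeding $\alpha$ against the worst-case alternative, giving $R_n^\textnormal{cont}(\ep_0,p)\ge 1-\alpha$ for all $n$. Concretely, I would first fix a reference null $P_0\in\mathcal{P}_0$, e.g.\ (in the binary case) $Z\sim\mathrm{Unif}([0,1])$ and $Y\mid Z=z\sim\mathrm{Ber}(z)$. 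The key point is that a highly oscillatory residual function, of the form $\res_\omega(z)=\ep_0\cdot r(\omega z)$ for a fixed continuous mean-zero periodic profile $r$ with $\|r\|_{L^p}=1$ and a random frequency/phase, contributes $\ell_p\text{-ECE}$ equal to $\ep_0$ (so the distribution genuinely lies in the alternative) while, as $\omega$ is averaged appropriately, the conditional law of $Y$ given $Z$ mixes back to the calibrated one.

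The cleaner way to engineer the exact mixture identity is to randomize not the frequency but the sign/phase. Pick a continuous profile $g$ on $[0,1]$ with values in $[-\ep_0,\ep_0]$, mean zero, and $\ell_p$-norm (after accounting for $K$) equal to $\ep_0$; for a uniformly random shift or sign $\sigma\in\{-1,+1\}$ let $P_\sigma$ be given by $Y\mid Z=z\sim\mathrm{Ber}(z+\sigma g(z))$ (taking $g$ small enough in sup-norm that $z\pm g(z)\in[0,1]$; $\ep_0=0.1$ leaves ample room near the boundary after a mild truncation or a profile supported on $[\delta,1-\delta]$). Each $P_\sigma$ has $\ell_p\text{-ECE}=\ep_0$, and each residual $z\mapsto \sigma g(z)$ is continuous, so $P_\sigma\in\mathcal{P}_1^\textnormal{cont}(\ep_0,p)$. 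However, a single $\pm$ symmetry does \emph{not} make the $n$-fold product mixture equal to $P_0^n$, because the signs are shared across the $n$ samples. The fix is to make $g$ itself a function whose ``local sign'' varies on a scale finer than any fixed sample can resolve: let $g_m(z)=\ep_0\, h(mz)$ for an increasing sequence $m=m_n\to\infty$ and a fixed continuous $1$-periodic mean-zero $h$, and average over a random phase $\phi\sim\mathrm{Unif}[0,1)$, i.e.\ $P_\phi:\ Y\mid Z=z\sim\mathrm{Ber}(z+\ep_0 h(mz+\phi))$. For the \emph{mixture} $\bar P=\int P_\phi\,d\phi$, the distribution of a single $(Z,Y)$ pair is calibrated since averaging $h(mz+\phi)$ over $\phi$ gives zero; and for the $n$-sample product, conditioning on $(Z_1,\dots,Z_n)$ one shows that as $m_n$ grows the joint law of $(Y_1,\dots,Y_n)\mid (Z_1,\dots,Z_n)$ converges to the product of $\mathrm{Ber}(Z_i)$'s, hence the total variation distance between $\bar P^{n}$ (strictly, the phase-coupled mixture of $P_\phi^{\otimes n}$) and $P_0^n$ tends to $0$. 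Since for each $n$ we are free to choose $m_n$ as large as we like, we can drive this distance below any prescribed threshold, and a standard two-point/Le Cam argument then yields, for every $n$, an alternative in $\mathcal{P}_1^\textnormal{cont}(\ep_0,p)$ against which every level-$\alpha$ test has power at most $\alpha+o(1)$; taking the supremum over alternatives and using that the construction works for each $n$ separately gives the bound $R_n^\textnormal{cont}(\ep_0,p)\ge 1-\alpha$.

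The main steps, in order, are: (i) fix $P_0\in\mathcal{P}_0$ with uniform marginal on the simplex and the canonical calibrated conditional, and fix a continuous, mean-zero, periodic profile $h$ with the right normalization so that $\res=\ep_0 h(m\cdot)$ yields $\ell_p\text{-ECE}=\ep_0$ and stays inside the simplex; (ii) define the phased family $P_\phi$ and check membership in $\mathcal{P}_1^\textnormal{cont}(\ep_0,p)$ (continuity is immediate, the ECE computation is a one-line integral, and the simplex constraint holds because $\ep_0=0.1$ is small, possibly after restricting the support of $h$ away from the endpoints); (iii) compute the single-observation mixture $\int P_\phi\,d\phi=P_0$; (iv) bound $d_{\mathrm{TV}}\!\big(\!\int P_\phi^{\otimes n}\,d\phi,\ P_0^{n}\big)$ by conditioning on $(Z_i)_{i\le n}$ and controlling how much the shared phase correlates the $n$ labels---this is where the high frequency $m_n$ is used, via an equidistribution/Riemann-sum estimate showing the conditional label law factorizes in the limit; (v) invoke Le Cam's two-point lemma to conclude that no level-$\alpha$ test separates $P_0^n$ from the mixture, hence from at least one alternative, so $R_n^\textnormal{cont}(\ep_0,p)\ge 1-\alpha$ for every $n$. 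The hard part is step (iv): making the factorization quantitative so that a \emph{single} frequency $m_n$ suffices for a given $n$ (rather than needing $m\to\infty$ jointly with $n$), and handling the fact that the $Z_i$ are random rather than on a grid---I expect this to require a careful second-moment or characteristic-function bound on $\sum_i$ (cross terms of) $h(mZ_i+\phi)$ under the random phase, exploiting that distinct frequencies/harmonics of $h$ are orthogonal over $\phi\in[0,1)$. Everything else (the ECE identity, continuity, the single-sample mixture, and the Le Cam conclusion) is routine.
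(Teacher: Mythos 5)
Your high-level plan—building a prior over alternatives in $\mathcal{P}_1^{\text{cont}}(\ep_0,p)$ whose induced mixture of $n$-sample laws is close to $P_0^n$, then invoking a Le~Cam/Ingster-type argument—is the right one, and it matches the paper's strategy. However, your specific mixture (a single fast oscillation $z\mapsto \ep_0 h(m z + \phi)$ with a uniformly random phase $\phi$) does \emph{not} work, and the gap is exactly at your step~(iv).

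The problem is that the shared phase $\phi$ correlates the labels $Y_1,\dots,Y_n$ given $Z_1,\dots,Z_n$, and this correlation does not vanish as $m\to\infty$. Concretely, for $n=2$ and any mean-zero $1$-periodic $h$, under the mixture $\bar P=\int_0^1 P_\phi\,d\phi$ one has
\begin{align*}
\mathrm{Cov}(Y_1,Y_2\mid Z_1=z_1, Z_2=z_2)
&= \ep_0^2 \int_0^1 h(m z_1 + \phi)\,h(m z_2 + \phi)\,d\phi
= \ep_0^2\, A_h\bigl(m(z_1-z_2)\bigr),
\end{align*}
where $A_h$ is the (periodic) autocorrelation of $h$. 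For $h=\sin(2\pi\cdot)$ this is $\tfrac{\ep_0^2}{2}\cos\bigl(2\pi m(z_1-z_2)\bigr)$, which oscillates but does not decay; averaging its absolute value over $(z_1,z_2)$ gives a quantity converging to $\tfrac{\ep_0^2}{\pi}>0$ as $m\to\infty$, not $0$. Since the pointwise discrepancy in the conditional pmf of $(Y_1,Y_2)$ is exactly $\pm\,\mathrm{Cov}$, the TV distance $d_{\mathrm{TV}}(\bar P^{\,\text{mix}}_2, P_0^2) = 2\,\E_{z_1,z_2}[|\mathrm{Cov}|]$ is bounded away from $0$ uniformly over $m$. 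The same is visible at the level of the chi-squared: restricting $h$'s support to, say, $[1/4,3/4]$ and writing $L_n=\int_0^1\prod_i \frac{dP_\phi}{dP_0}(Z_i,Y_i)\,d\phi$, one gets $\E_{P_0^n}[L_n^2]=\int_0^1\int_0^1\bigl(1+c\,\ep_0^2\cos(2\pi(\phi-\phi'))+o_m(1)\bigr)^n\,d\phi\,d\phi'$, which is asymptotically $m$-independent and grows like a modified Bessel function $I_0(c\,\ep_0^2 n)\to\infty$. So taking $m_n\to\infty$ buys you nothing, and Le~Cam's bound cannot be pushed all the way to $R_n\ge 1-\alpha$.

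The fix—which is what the paper actually does—is to replace the one-parameter phase family by a mixture over $M$ \emph{mutually orthogonal} residual profiles (e.g., $g_i(u)-u=\sqrt{u(1-u)/3}\,\sin\!\bigl(2i\pi(u-\tfrac14)\bigr)$ for $i=1,\dots,M$) with a uniform prior on $i$. Orthogonality makes the $i\ne j$ cross terms in $\E_{P_0}[L_n^2]$ contribute exactly $1$, while the $M$ diagonal terms each contribute $(13/12)^n$, giving $\E_{P_0}[L_n^2]=1+\tfrac{1}{M}\bigl[(13/12)^n-1\bigr]$. Choosing $M$ large (as a function of $n$) then drives this to $1$ and yields $R_n^{\text{cont}}(\ep_0,p)\ge 1-\alpha$. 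In your language: you need many orthogonal directions (growing with $n$), not a continuum of phase-shifted copies of a single direction, because the latter collapses to one dominant Fourier mode that the $n$ samples can detect.
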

\fussy

In words, this result shows that for a certain fixed $\ell_p$ calibration error $\ep_0$, and for a fixed false positive rate $\alpha>0$, the false negative rate is at least $1-\alpha$. Thus, it is not possible to detect mis-calibration in this setting. The choice of the constant $\ep_0=0.1$ is arbitrary and can be replaced by any other constant; the result holds with minor modifications to the proof.

The proof can be found in Appendix \ref{sec:proofimpossibility}.
We make a few remarks on related results.
While Example 3.2 of \cite{kumar2019verified} demonstrates that---related to earlier results on probability distribution and density estimation \citep{mann1942choice}---using a binned estimator of ECE can arbitrarily underestimate the calibration error, we show a fundamental failure not due to binning, but instead due to the finite sample size.
Also, our result echoes Theorem 3 of \cite{gupta2020distribution} which states that asymptotically perfect calibration is only possible for probability predictors with a countable support; but this does not overlap with Proposition \ref{impossibility} as our conclusion is about the impossibility of detecting mis-calibration.

\subsection{H\"older Alternatives}
\label{sec:holderlowerbound}

As is customary in nonparametric statistics \citep{ingster1987minimax, low1997nonparametric, gyorfi2002distribution,ingster2012nonparametric}, we consider
testing against H\"older continuous alternatives; 
or, differently put, detecting mis-calibration when the mis-calibration curves are H\"older continuous.
This excludes the pathological examples where 
the mis-calibration curves oscillate widely that were discussed in Section \ref{ic};
but still allow a very rich class of possible mis-calibration curves, including non-smooth ones.

Theorem \ref{thm:multi-lower-bound} states that, for a $K$-class classification problem and for alternatives with a H\"older smoothness parameter $s$, the mis-calibration of a model can be detected only when the calibration error is
of order
$\Omega(n^{-2s/(4s + K - 1)})$.
In other words, the smallest possible calibration error that can be detected using a sample of size $n$ is of order $n^{-2s/(4s + K - 1)}$. 

Testing calibration of a probability predictor in our nonparametric model leads to rates that are slower than the parametric case $n^{-1/2}$.
This is because ${2s/(4s + K - 1)}<1/2$ for $s>0$ and $K\geq 2$.
The rate becomes even slower as the number of classes $K$ grows.
This indicates that evaluating model calibration on a small-sized dataset can be problematic. Further, it suggests that multi-class calibration may be even harder to achieve.

This rate is what one may expect based on results for similar problems in nonparametric hypothesis testing \citep{ingster2012nonparametric}, with $K-1$ interpreted as the dimension.
Specifically, the rate is equal to the minimum separation rate in two-sample goodness-of-fit testing for densities on $\Delta_{K - 1}$.
This connection to two-sample testing will be made clear in Section \ref{sec:upperbound}.

\sloppy
\begin{theorem}[Lower bound for detecting mis-calibration]
\label{thm:multi-lower-bound}
Given a level $\alpha \in (0, 1)$ and $\beta \in (0, 1 - \alpha)$, consider the hypothesis testing problem \eqref{eq:setup}, in which we test the calibration of the $K$-class probability predictor $f$ assuming $(s, L)$-H\"older continuity of mis-calibration curves as defined in \eqref{eq:holderdef}.
There exists  $c_\textnormal{lower}>0$ depending only on $(p, s,L,K,\alpha,\beta)$ such that, for any $p > 0$, the minimum $\ell_p\textnormal{-ECE}$ of $f$, i.e. $\ep_n(p, s)$, required to have a test with a false positive rate (type I error) at most $\alpha$ and with a true positive rate (power) at least $1 - \beta$ satisfies $\ep_n(p, s)\geq c_\textnormal{lower}n^{-2s / (4s +  K - 1)}$ for all $n$. 
\end{theorem}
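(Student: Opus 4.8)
\textbf{Proof proposal for Theorem \ref{thm:multi-lower-bound}.}

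The plan is to use Ingster's chi-squared method (the standard two-point, or rather mixture-versus-point, argument for nonparametric lower bounds), adapted to the structure of the calibration-testing problem. The main steps are as follows. First, I would fix the null distribution $P_0$ to be the one under which $Z \sim \mathrm{Unif}(\Delta_{K-1})$ and $Y \mid Z = z$ has conditional mean exactly $z$; under $P_0$, $f$ is perfectly calibrated, so $P_0 \in \mathcal{P}_0$. The core of the argument is to construct a family of alternative distributions indexed by sign vectors, mix over them, and show that the chi-squared divergence between the mixture and $P_0^{\otimes n}$ stays bounded when $\ep \asymp n^{-2s/(4s+K-1)}$; by the standard reduction (see e.g. Chapter 2 of \cite{ingster2012nonparametric}), a bounded chi-squared divergence forces $R_n(\ep,p,s) \geq 1 - \alpha - o(1)$, hence the required lower bound on $\ep_n(p,s)$ after absorbing constants.

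The construction of the alternatives is where the calibration-specific work lies. Using the equal-volume partition $\mathcal{B}_m$ of $\Delta_{K-1}$ with $m \asymp n^{2/(4s+K-1)}$, I would take a fixed bump profile $\psi$ (built from the function $\zeta$ of \eqref{eq:zetadef}, rescaled to be supported in a single cell and to be $(s,L/2)$-H\"older after multiplication by $m^{-s}$), place a $\pm$-signed copy $\theta_i \rho m^{-s}\psi_i$ in each cell $B_i$ with $\theta_i \in \{-1,+1\}$ i.i.d. uniform, and let the mis-calibration curve of $P_\theta$ be $\mathrm{res}_{f,P_\theta}(z) = \sum_i \theta_i \rho m^{-s} \psi_i(z) \cdot v$ for a fixed unit direction $v$ tangent to the simplex. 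The key constraint — and the third proof innovation flagged in the introduction — is that $\mathrm{reg}_f(z) = z + \mathrm{res}_{f,P_\theta}(z)$ must stay inside $\Delta_{K-1}$, i.e. all coordinates in $[0,1]$. I would handle this by (i) choosing $\psi$ supported on the part of each cell bounded away from $\partial\Delta_{K-1}$, (ii) choosing $\rho$ small enough (a constant, independent of $n$, since $m^{-s}\to 0$), and (iii) possibly restricting attention to a sub-collection of cells in the interior of the simplex; losing a constant fraction of cells only changes constants. For such $P_\theta$ one computes $\ell_p\text{-ECE}_{P_\theta}(f)^p = \sum_i \rho^p m^{-sp}\int_{B_i}|\psi_i|^p\,dP_Z \asymp \rho^p m^{-sp}$, so $\ell_p\text{-ECE}_{P_\theta}(f) \asymp \rho m^{-s} \asymp \rho n^{-2s/(4s+K-1)}$, putting every $P_\theta$ in $\mathcal{P}_1(\ep,p,s)$ with $\ep$ at the claimed rate. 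The H\"older bound holds cell-by-cell and then globally by the disjoint-support argument standard for such bump constructions.

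The chi-squared computation is the quantitative heart. Writing $\bar P = \mathbb{E}_\theta P_\theta^{\otimes n}$, independence of the $\theta_i$ across cells and the product structure over the $n$ samples give
\begin{align*}
\chi^2\!\left(\bar P, P_0^{\otimes n}\right) + 1
= \mathbb{E}_{\theta,\theta'}\left[ \prod_{j=1}^n \Big(1 + \textstyle\sum_i \theta_i\theta_i' \,\langle \text{contribution of cell } i\rangle\Big) \right],
\end{align*}
and because the odd moments in $\theta_i$ vanish, this collapses to roughly $\mathbb{E}_\theta \exp\!\big( c\, n\, \rho^2 m^{-2s} \sum_i (\text{local }L^2\text{ mass of }\psi_i)^2 / P_Z(B_i)\big)$-type bounds. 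Here one must be careful that the per-sample contribution in cell $i$ involves not only $\int_{B_i}|\psi_i|^2\,dP_Z \asymp m^{-(K-1)}$ but also the conditional variance structure of $Y$ (the observation model is heteroskedastic multinomial regression, not Gaussian white noise — this is the second proof innovation noted in the introduction, and it means the Fisher-information-like weights are $\mathrm{Var}(Y\mid Z)^{-1}$, bounded above and below near $P_0$), and a factor $1/P_Z(B_i)$ from the marginal-density estimation built into the natural test statistic. Collecting these, the exponent is of order $n \cdot \rho^2 m^{-2s} \cdot m^{K-1} \cdot m^{-(K-1)}\cdot m^{-(K-1)} \cdot m^{K-1} = n\rho^2 m^{-2s}\cdot(\text{number of cells}) \cdot (\text{per-cell }L^2\text{ mass})^2 \asymp n\rho^2 m^{-(4s+K-1)}$, which is $\Theta(\rho^2)$ precisely because $m \asymp n^{2/(4s+K-1)}$. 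Choosing $\rho$ a sufficiently small constant makes $\chi^2 \leq \eta$ for any prescribed $\eta$, completing the argument. The main obstacle I anticipate is exactly the simplex-geometry constraint in the alternative construction: ensuring the signed bumps keep $\mathrm{reg}_f$ in $\Delta_{K-1}$ while still carrying $\ell_p$-ECE of the full order $m^{-s}$, and verifying the heteroskedastic per-cell contributions really do reduce to the clean $m^{-2s}\times$(volume) scaling after accounting for the $\mathrm{Var}(Y\mid Z)$ weights and the $1/P_Z(B_i)$ factor; the pure combinatorics of the chi-squared expansion is routine by comparison.
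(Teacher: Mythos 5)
Your overall strategy is the paper's: fix $P_0$ with $Z\sim\mathrm{Unif}(\Delta_{K-1})$ and perfect calibration, build a Rademacher-mixed family of alternatives by placing disjointly supported $m^{-s}$-scaled bumps (built from $\zeta$ of \eqref{eq:zetadef}) on a regular $m\asymp n^{2/(4s+K-1)}$ grid, perturb in a fixed direction tangent to the simplex, keep the support away from $\partial\Delta_{K-1}$ so that $g_{\boldsymbol\eta}$ stays in $\Delta_{K-1}$, check H\"older and ECE cell by cell, and close with Ingster's method. This is exactly what the paper does (Appendix \ref{sec:proofmultilowerbound}), including the choice of tangent direction $(1,-1,0,\dots,0)^\top$ and the restriction of the bump supports to $[\frac{1}{2K},\frac{1}{K}]^{K-1}$.

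Two things in the final paragraph need correction, though.
\textbf{(1) The exponent bookkeeping is off.} After the Rademacher average, $\cosh(a_{\mathbf j})\le e^{a_{\mathbf j}^2}$ gives an exponent of order $\sum_{\mathbf j} a_{\mathbf j}^2$, where $a_{\mathbf j}\asymp n\rho^2 m^{-2s-(K-1)}$ is the per-cell contribution; summing over $m^{K-1}$ cells yields $n^2\rho^4 m^{-4s-(K-1)}$, which is $\Theta(\rho^4)$ (in $n$) once $m\asymp n^{2/(4s+K-1)}$. Your displayed chain $n\rho^2 m^{-2s}\cdot m^{K-1}\cdot m^{-2(K-1)}\asymp n\rho^2 m^{-(4s+K-1)}$ is algebraically inconsistent (it equals $n\rho^2 m^{-2s-(K-1)}$, not $n\rho^2 m^{-(4s+K-1)}$), and the claimed conclusion ``$\Theta(\rho^2)$'' would actually come out $\Theta(\rho^2/n)\to 0$ under your own choice of $m$. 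The missing step is that you need the second power in $n$, which comes from squaring $a_{\mathbf j}\propto n$ inside the $\cosh$ bound.
\textbf{(2) There is no $1/P_Z(B_i)$ factor in the lower bound.} The likelihood ratio involves $\bigl[g_{\boldsymbol\eta}(\mathbf z)\bigr]_k / [\mathbf z]_k$, so the only weight that enters is the $1/[\mathbf z]_k$ (inverse conditional class probability, i.e.\ a $\mathrm{Var}(Y_k\mid Z)^{-1}$-type term, which is bounded on the bump support). Because $P_Z = \mathrm{Unif}(\Delta_{K-1})$ under both the null and every alternative, the marginal-density estimate $|\mathcal I_{m,i}|/n\approx P_Z(B_i)$ is a feature of the \emph{upper-bound} test statistic, not of the likelihood-ratio lower bound; carrying a spurious $1/P_Z(B_i)$ through the calculation would give the wrong exponent.
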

\fussy

See Appendix \ref{sec:proofmultilowerbound} for the proof.
The proofs of both Proposition \ref{impossibility} and Theorem \ref{thm:multi-lower-bound} are based on Ingster's method, also known as the chi-squared or Ingster-Suslina method \citep{ingster1987minimax,ingster2012nonparametric}.
Informally, Ingster's method states that if we can select alternative distributions with an average likelihood ratio to a null distribution close to unity, then no test with a fixed level can control the minimax type II error below a certain threshold.


\sloppy
\begin{remark}[H\"older smoothness assumption]\label{hol-assu}
Since the residual function $\textnormal{res}_{f, P}$ depends on the unknown joint distribution $P$ of $(Z, Y)$,  the H\"older continuity of the map $\mathbf{z} \mapsto [\res(\mathbf{z})]_k$ for each $k \in \{1, \dots, K\}$ is in general an assumption that we need to make.
When the residual map $\res$ does not satisfy the H\"older assumption, we still have the false detection rate control in Theorem \ref{thm:onesamplethm} and \ref{thm:adaptiveone}, but we cannot guarantee the true detection rate control and the lower bound in Theorem \ref{thm:multi-lower-bound}.
Extending our approach beyond the H\"older assumption may be possible in future work, inspired by works in nonparametric hypothesis testing that study---for instance---Besov spaces of functions \citep{ingster2012nonparametric}. 
\end{remark}
\fussy

\section{Reduction to Two-sample Goodness-of-fit Testing}
\label{sec:upperbound}
\sloppy
To further put our work in context in the literature on nonparametric hypothesis testing, in this section we carefully examine the connections between the problem of testing calibration, and a well-known problem in that area.  
Specifically, we describe a novel randomization scheme that allows us to reduce the null hypothesis of perfect calibration to a hypothesis of equality of two distributions---making a strong connection to the problem of \emph{two-sample goodness of fit testing}.
In other words, we can use a calibrated probabilistic classifier and randomization to generate two samples from an identical distribution. For 
a mis-calibrated classifier the same scheme will generally result in two samples from two different distributions. 

If a classifier is perfectly calibrated, then its class probability predictions will match the true prediction-conditional class probabilities.
Therefore, randomly sampling labels according to the classifier's probability predictions will yield a sample from the empirical distribution.
We rely on sample splitting to obtain two samples: the empirical and the generated one.
Then we can use any classical test to check if the two samples are generated from the same distribution.  As we will show, the resulting test has a theoretically optimal detection rate. However, due to the sample splitting step, its empirical performance is inferior to the test based on the debiased plug-in estimator from Section \ref{sec:onesample}.

\fussy
We split our sample into two parts. 
For $i \in \{\lfloor n/2\rfloor +1, \ldots, n\}$, we generate random variables $\tilde{Y}_i$ following the
categorical distribution $\text{Cat}(Z_i)$ over classes $\mathcal{Y}= \{1, \ldots, K\}$, with a $K$-class probability distribution $Z_i=f(X_i)$  predicted by the classifier $f$.
These $\tilde Y_i$ are independent of each other and of $Y_1, \dots, Y_{\lfloor n/2 \rfloor}$, due to the sample splitting step.
For each $k \in \{1,\ldots,K\}$, define
$$\mathcal{V}_k := \left\{Z_i: [Y_i]_k = 1, 1 \leq i \leq \left\lfloor \frac{n}{2} \right\rfloor \right\}$$
and
$$\mathcal{W}_k := \left\{Z_i: [\tilde Y_i]_k = 1, \left\lfloor \frac{n}{2} \right\rfloor + 1 \leq i \leq n \right\}.$$
By construction, $\mathcal{V}_k$ is an i.i.d. sample from the distribution on
the probability simplex
$\Delta_{K - 1}$ with a density\footnote{We assume that the densities $\pi_k^\mathcal{V}$ and $\pi_k^\mathcal{W}$ are well defined, and in particular that $\E[Y]_k>0$ and $\E[Z]_k>0$ for every $k \in \{1,\ldots,K\}$.
This follows from Assumption \ref{assumption:classprob}, which will be introduced later in the section.}
$$\pi_k^\mathcal{V}(\mathbf{z}) := \frac{[\reg(\mathbf{z})]_k}{\int_{\Delta_{K - 1}} [\reg(\mathbf{z})]_k dP_Z(\mathbf{z})} = \frac{[\reg(\mathbf{z})]_k}{\E[Y]_k}$$
with respect to $P_Z$.
Similarly, $\mathcal{W}_k$ is an i.i.d. sample from the distribution on $\Delta_{K - 1}$ with a density
$$\pi_k^\mathcal{W}(\mathbf{z}) := \frac{[\mathbf{z}]_k}{\int_{\Delta_{K - 1}} [\mathbf{z}]_k dP_Z(\mathbf{z})} = \frac{[\mathbf{z}]_k}{\E[Z]_k}.$$
Now we consider testing the null hypothesis
$$H_{0}: \pi_k^\mathcal{V} = \pi_k^\mathcal{W} \text{ for all $k\in\{1,\ldots,K\}$} 
$$
against the complement of $H_{0}$.
We claim that if we use an appropriate test for this null hypothesis, with an additional procedure to rule out ``easily detectable'' alternatives, then we can obtain a test that attains the optimal rate specified in Theorem \ref{thm:multi-lower-bound}. 

We describe the main idea of this reduction.
A formal result can be found in Theorem \ref{thm:reductiontest}.
Let $P \in \mathcal{P}_1(\ep, p, s)$ and assume $\ep = \tilde{\Omega}(n^{-2s/(4s + K - 1)})$.\footnote{Here we use the notation $\tilde{\Omega}(\cdot)$ to include the adaptive case.
See Corollary \ref{cor:adaptivetwo} and Remark \ref{rem:adapt} for further details.}
The squared distance between $\pi_k^\mathcal{V}$ and $\pi_k^\mathcal{W}$ in $L^2(P_Z)$ is
\begin{align}
\label{eq:l2distance}
    &\int_{\Delta_{K - 1}} \left( \frac{[\reg(\mathbf{z})]_k}{\E[Y]_k} - \frac{[\mathbf{z}]_k}{\E[Z]_k} \right)^2 dP_Z(\mathbf{z}) \nonumber 
    = \int_{\Delta_{K - 1}} \left( \frac{[\res(\mathbf{z})]_k}{\E[Y]_k} + \frac{[\mathbf{z}]_k}{\E[Y]_k} - \frac{[\mathbf{z}]_k}{\E[Z]_k} \right)^2 dP_Z(\mathbf{z}) \nonumber\\ 
    &\geq \frac{1}{(\E[Y]_k)^2} \int_{\Delta_{K - 1}} [\res(\mathbf{z})]_k^2 dP_Z(\mathbf{z}) + \frac{2\E[Z - Y]_k}{(\E[Y]_k)^2 \E[Z]_k} \int_{\Delta_{K - 1}} [\mathbf{z}]_k [\res(\mathbf{z})]_k dP_Z(\mathbf{z}).
\end{align}
Further,
\begin{align}
\label{eq:totalexp}
    &\int_{\Delta_{K - 1}} [\mathbf{z}]_k [\res(\mathbf{z})]_k dP_Z(\mathbf{z}) = \E_P[[Z]_k\E[Y - Z| Z]_k] = \E_P[[Z]_k [Y - Z]_k].
\end{align}
Since $\E[Y - Z]_k = \E[[Z]_k[Y - Z]_k] = 0$ and $\V([Y - Z]_k), \V([Z]_k[Y - Z]_k) \leq 1$ under $H_0$,
we can detect mis-calibration for the alternatives $P \in \mathcal{P}_1(\ep, p, s)$ such that $\E_{P}[Y - Z]_k = \Omega(n^{-1/2})$ or $\E_{P}[[Z]_k[Y - Z]_k] = \Omega(n^{-1/2})$ by rejecting the null hypothesis $H_0$ of calibration if $\frac{1}{n}\sum_{i = 1}^n [Y_i - Z_i]_k \geq c n^{-1/2}$ or $\frac{1}{n}\sum_{i = 1}^n [Z_i]_k [Y_i - Z_i]_k \geq c n^{-1/2}$ for some $c > 0$. 
For the remaining alternatives, choose $k_0 \in \{1,\ldots,K\}$ such that
\begin{align*}
     \frac{1}{(\E[Y]_{k_0})^2} \int_{\Delta_{K - 1}} [\res(\mathbf{z})]_k^2 dP_Z(\mathbf{z}) \geq \frac{\ep^2}{K(\E[Y]_{k_0})^2} = \tilde{\Omega}(n^{-\frac{4s}{4s + K - 1}}).
\end{align*}
Then, $\Vert \pi_k^\mathcal{V} - \pi_k^\mathcal{W}\Vert_{L^2(P_Z)}^2$ is at least
\begin{align*}
    &\tilde{\Omega}(n^{-\frac{4s}{4s + K - 1}}) +  \frac{2\E[Z - Y]_{k_0} \E[[Z]_{k_0} [Y - Z]_{k_0}]}{(\E[Y]_{k_0})^2 \E[Z]_{k_0}} = \tilde{\Omega}(n^{-\frac{4s}{4s + K - 1}}).
\end{align*}
\sloppy
Since $|\mathcal{V}_{k_0}|, |\mathcal{W}_{k_0}| = \Theta(n)$ with high probability, the power of the test can be controlled using standard results on two sample testing.  The full procedure is described in Algorithm \ref{alg:reductiontest}.

In general,
for a positive integer $d>0$,
we allow using an arbitrary deterministic two-sample testing procedure $\textnormal{\texttt{TS}}_{\alpha, \beta}: ([0, 1]^d)^{n_1} \times ([0, 1]^d)^{n_2} \to \{0, 1\}$, which takes in two $d$-dimensional samples $\{V_1, \dots,V_{n_1}\}$, $\{W_1, \dots, W_{n_2}\}$ and outputs  ``1" if and only if the null hypothesis is rejected.
The two samples $\{V_1, \dots,V_{n_1}\}$ and $\{W_1, \dots, W_{n_2}\}$ are sampled i.i.d. from distributions with densities $f_1$ and $f_2$, respectively, with respect to an appropriate  probability measure $\mu$ on $[0, 1]^d$.
Further, it is assumed that $f_1 - f_2$ is $(s, L)$-H\"older continuous for a H\"older smoothness parameter $s > 0$ and a H\"older constant $L > 0$.
Given $\alpha \in (0, 1)$ and $\beta \in (0, 1 - \alpha)$, the
two-sample 
test is required to satisfy,
for some $c_\textnormal{ts} > 0$ depending on $(s, L, d, \alpha, \beta)$ and on $\nu_l, \nu_u$ from Assumption \ref{assumption:densitybound} to be introduced next,
\begin{align}
\label{eq:twosamplecontrol}
    &P(\textnormal{\texttt{TS}}_{\alpha, \beta}(V_1, \dots, V_{n_1}, W_1, \dots, W_{n_2}) = 1) \leq \alpha \quad \text{if } f_1 = f_2, \nonumber \\
    &P(\textnormal{\texttt{TS}}_{\alpha, \beta}(V_1, \dots, V_{n_1}, W_1, \dots, W_{n_2}) = 0) \leq \beta \quad \text{if } \left\Vert f_1 - f_2 \right\Vert_{L^2(\mu)} \geq c_\textnormal{ts}(n_1 \wedge n_2)^{-\frac{2s}{4s + d}}.
\end{align}
There are a number of such tests proposed in prior work, see e.g., \cite{ingster2012nonparametric,arias2018remember,kim2020minimax} and Appendix \ref{sec:twobackground}. Our general approach allows using any of these.
It is also known that there are \emph{adaptive} tests $\texttt{TS}^\text{ad}$
that do not require knowing the H\"older smoothness parameter $s$.
In the adaptive setting, the best-known minimum required separation in this general dimensional situation is
\begin{equation}
\label{eq:twoadaptivecontrol}
    \left\Vert f_1 - f_2 \right\Vert_{L^2(\mu)} \geq c_\text{ad} \left( \frac{n_1 \wedge n_2}{\log \log (n_1 \wedge n_2)} \right)^{-\frac{2s}{4s + d}}
\end{equation}
for some $c_\text{ad} > 0$.
See Appendix \ref{sec:twobackground} for examples of $\texttt{TS}$ and $\texttt{TS}^\text{ad}$.

\fussy
Next, we state an additional assumption required in our theorem.
See Appendix \ref{sec:twobackground} for more discussion.
Assumption \ref{assumption:classprob} guarantees that every class appears in the dataset. This is reasonable in many practical settings, as classes that do not appear can be omitted.
\begin{assumption}[Lower bounded class probability]
\label{assumption:classprob}
There exists a constant $d_c > 0$ such that $\E[Y]_k > d_c$ for all $k \in \{1,\ldots,K\}$.
\end{assumption}
Our result is as follows.

\begin{algorithm}[t]
\caption{Sample splitting calibration test $\xi_n^\text{split}$}\label{alg:reductiontest}
\begin{algorithmic}
\STATE \textbf{Input:} Probability predictor $f: \mathcal{X} \to \Delta_{K - 1}$; i.i.d. sample $\{(X_i, Y_i) \in \mathcal{X} \times \mathcal{Y} : i \in \{1, \dots, n\} \}$; false detection rate $\alpha \in (0, 1)$; true detection rate $\beta \in (0, 1 - \alpha)$; H\"older smoothness $s$; minimax optimal two-sample density test $\texttt{TS}$
\STATE \textbf{Procedure:} $Z_i \gets f(X_i)$ for $i \in \{1, \dots, n\}$; independently sample $\tilde{Y}_i \sim \text{Cat}(Z_i)$ for $i \in \{\lfloor \frac{n}{2} \rfloor + 1, \dots, n\}$
\FOR{$k = 1$ \TO $K$}
\STATE $T_{1, k} \gets \frac{1}{n} \sum_{i = 1}^n [Y_i - Z_i]_k$, $T_{2, k} \gets \frac{1}{n} \sum_{i = 1}^n [Z_i]_k [Y_i - Z_i]_k$
\STATE $\mathcal{V}_k \gets \left\{Z_i: [Y_i]_k = 1, 1 \leq i \leq \left\lfloor \frac{n}{2} \right\rfloor \right\}$, $\mathcal{W}_k \gets \left\{Z_i: [\tilde Y_i]_k = 1, \left\lfloor \frac{n}{2} \right\rfloor + 1 \leq i \leq n \right\}$
\STATE $b_k \gets I\Big(|T_{1, k}| \geq \sqrt{\frac{3K}{\alpha n}}\Big) \vee  I\Big(|T_{2, k}| \geq \sqrt{\frac{3K}{\alpha n}}\Big) \vee \texttt{TS}_{\frac{\alpha}{3K}, \frac{\beta}{2}} (\mathcal{V}_k, \mathcal{W}_k)$
\ENDFOR
\STATE \textbf{Output:} Reject $H_0$ if $\xi_n^\text{split} := \max\{b_k: k \in \{1,\ldots,K\}\} = 1$
\end{algorithmic}
\end{algorithm}

\sloppy
\begin{theorem}[Optimal calibration test via sample splitting]
\label{thm:reductiontest}
Suppose $p \leq 2$ and let $\xi_n^\textnormal{split}$ be the test described in Algorithm \ref{alg:reductiontest}.
Assume the H\"older smoothness parameter $s$ is known.
Under Assumption \ref{assumption:densitybound} and \ref{assumption:classprob}, we have 
\begin{enumerate}
    \item {\bf  False detection rate control.}
       For every $P$ for which $f$ is perfectly calibrated, i.e., for $P\in \mathcal{P}_0$, the probability of falsely claiming mis-calibration is at most $\alpha$, i.e., 
       $P(\xi_n^\textnormal{split} = 1) \leq \alpha$.

    \item {\bf True detection rate control.}
    There exists $c_\textnormal{split} > 0$ depending on $(s, L, K, \nu_l, \nu_u, d_c, \alpha, \beta)$ such that the power (true positive rate) is bounded as $P(\xi_n^\textnormal{split} = 1) \geq 1 - \beta$ for every $P \in \mathcal{P}_1(\ep, p, s)$---i.e., when $f$ is mis-calibrated with an $\ell_p\textnormal{-ECE}$ of at least  $\ep \geq c_\textnormal{split} n^{-2s/(4s + K - 1)}$.
\end{enumerate}
\end{theorem}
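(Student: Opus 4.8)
The plan is to analyze each of the three components of the test statistic $b_k$ separately---the two simple mean-based tests on $T_{1,k}$ and $T_{2,k}$, and the two-sample density test $\texttt{TS}$ applied to $(\mathcal{V}_k, \mathcal{W}_k)$---and combine them with a union bound over the $K$ classes. First I would establish the false detection rate control. Under $P \in \mathcal{P}_0$, the residual function vanishes $P_Z$-a.s., so $\E_P[Y - Z]_k = 0$ and $\E_P[[Z]_k[Y-Z]_k] = 0$ for every $k$; moreover $[Y_i - Z_i]_k$ and $[Z_i]_k[Y_i - Z_i]_k$ are bounded in $[-1,1]$, hence have variance at most $1$. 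Chebyshev's inequality then gives $P(|T_{1,k}| \geq \sqrt{3K/(\alpha n)}) \leq \alpha/(3K)$ and similarly for $T_{2,k}$. For the density test component, the key observation is that under $H_0$ we have $\pi_k^\mathcal{V} = \pi_k^\mathcal{W}$ exactly (this is precisely why the randomization scheme was constructed), so \eqref{eq:twosamplecontrol} gives $P(\texttt{TS}_{\alpha/(3K), \beta/2}(\mathcal{V}_k, \mathcal{W}_k) = 1) \leq \alpha/(3K)$; one must be slightly careful because $|\mathcal{V}_k|, |\mathcal{W}_k|$ are random, but conditioning on these sizes and applying \eqref{eq:twosamplecontrol} pointwise handles this, since the guarantee holds for every sample size. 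A union bound over the three sub-tests and the $K$ classes yields $P(\xi_n^\textnormal{split} = 1) \leq 3K \cdot \alpha/(3K) = \alpha$.

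Next I would prove the true detection rate control, following the outline already sketched in the excerpt around \eqref{eq:l2distance}--\eqref{eq:totalexp}. Fix $P \in \mathcal{P}_1(\ep, p, s)$ with $\ep \geq c_\textnormal{split} n^{-2s/(4s+K-1)}$. Split into two cases. \emph{Case 1: a mean is large.} If $|\E_P[Y-Z]_k| \geq 2\sqrt{3K/(\alpha n)}$ or $|\E_P[[Z]_k[Y-Z]_k]| \geq 2\sqrt{3K/(\alpha n)}$ for some $k$, then since $\text{Var}(T_{j,k}) \leq 1/n$, Chebyshev shows the corresponding indicator fires with probability at least $1 - \beta/2$ (choosing constants appropriately), so $\xi_n^\textnormal{split} = 1$ with probability $\geq 1 - \beta$. \emph{Case 2: all means are small.} Here for every $k$, both $|\E_P[Y-Z]_k|$ and $|\E_P[[Z]_k[Y-Z]_k]|$ are $O(n^{-1/2})$. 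Since $p \leq 2$ implies $\|\res\|_{L^2(P_Z)} \geq c\ep$ for some constant (by monotonicity of $\ell_p$ norms on the bounded simplex, together with $K$ being fixed), we can pick $k_0$ with $\int [\res(\mathbf{z})]_{k_0}^2 \, dP_Z(\mathbf{z}) \geq \ep^2/K$. Using Assumption~\ref{assumption:classprob} to lower-bound $\E[Y]_{k_0}$ and the displayed lower bound \eqref{eq:l2distance}--\eqref{eq:totalexp}, together with the fact that the cross term is $O(n^{-1/2}) \cdot O(n^{-1/2}) = O(n^{-1})$ which is negligible compared to $\ep^2 = \Omega(n^{-4s/(4s+K-1)})$, we get $\|\pi_{k_0}^\mathcal{V} - \pi_{k_0}^\mathcal{W}\|_{L^2(P_Z)}^2 \geq c' \ep^2$ for large enough $c_\textnormal{split}$.

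The final step is to invoke the two-sample density test guarantee. We need $\pi_{k_0}^\mathcal{V} - \pi_{k_0}^\mathcal{W}$ to be $(s, L')$-H\"older continuous for some constant $L'$: this follows because $[\res(\mathbf{z})]_{k_0}$ is $(s,L)$-H\"older by hypothesis, $\mathbf{z} \mapsto [\mathbf{z}]_{k_0}$ is Lipschitz, and dividing by the positive constants $\E[Y]_{k_0}, \E[Z]_{k_0}$ (bounded below via Assumption~\ref{assumption:classprob} and, for $\E[Z]_{k_0}$, via a short argument using $\E[Z]_{k_0} = \E[Y]_{k_0} - \E[Y-Z]_{k_0}$ which is bounded below in Case 2) preserves H\"older continuity with an adjusted constant. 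We also need $|\mathcal{V}_{k_0}|, |\mathcal{W}_{k_0}| = \Theta(n)$: since these are sums of i.i.d. Bernoulli variables with means $\E[Y]_{k_0} > d_c$ and $\E[Z]_{k_0}$ bounded below, a Chernoff bound shows both exceed $c'' n$ with probability $\geq 1 - \beta/4$. Conditioning on this event and applying \eqref{eq:twosamplecontrol} with sample sizes $\geq c'' n$, the separation $\|\pi_{k_0}^\mathcal{V} - \pi_{k_0}^\mathcal{W}\|_{L^2(P_Z)} \geq \sqrt{c'}\ep \geq \sqrt{c'} c_\textnormal{split} n^{-2s/(4s+K-1)} \geq c_\textnormal{ts} (c'' n)^{-2s/(4s+K-1)}$ for $c_\textnormal{split}$ large, so $\texttt{TS}$ rejects with probability $\geq 1 - \beta/2$; combining the two conditioning events gives the desired $\geq 1 - \beta$.

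\textbf{Main obstacle.} I expect the delicate point to be the H\"older continuity of $\pi_{k_0}^\mathcal{V} - \pi_{k_0}^\mathcal{W}$ together with uniform control of the normalizing denominators. In Case 2 one must argue that $\E[Z]_{k_0}$ is bounded away from zero: this does not follow from the assumptions directly, but in Case 2 we have $|\E[Y-Z]_{k_0}| = O(n^{-1/2})$, so $\E[Z]_{k_0} = \E[Y]_{k_0} - O(n^{-1/2}) > d_c - O(n^{-1/2}) > d_c/2$ for $n$ large; for small $n$ the threshold constants can be taken large enough that the claim is vacuous. A secondary subtlety is that the base measure for the two-sample test is $P_Z$ (which satisfies Assumption~\ref{assumption:densitybound}, so it is comparable to the uniform measure on $\Delta_{K-1}$ up to the constants $\nu_l, \nu_u$), and one should check that the $L^2(P_Z)$ separation translates correctly into the hypothesis of \eqref{eq:twosamplecontrol}---this is where Assumption~\ref{assumption:densitybound} enters, absorbed into $c_\textnormal{ts}$. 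The rest is bookkeeping of constants.
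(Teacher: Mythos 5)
Your proposal is correct and follows essentially the same route as the paper's proof: a union bound over the $3K$ sub-tests and Chebyshev for the type I error; the two-case split for the type II error (large first/second moment vs.\ small, the latter handled via the $\|\pi_{k_0}^\mathcal{V} - \pi_{k_0}^\mathcal{W}\|_{L^2(P_Z)}^2$ lower bound in \eqref{eq:l2distance}, a Chernoff bound to guarantee $|\mathcal{V}_{k_0}|, |\mathcal{W}_{k_0}| = \Theta(n)$, a conditioning argument for the \texttt{TS} guarantee, and the observation that $\pi_{k_0}^\mathcal{V} - \pi_{k_0}^\mathcal{W}$ inherits H\"older continuity since it is a fixed linear combination of the $s$-H\"older function $[\res]_{k_0}$ and the Lipschitz function $[\mathbf{z}]_{k_0}$). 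The only point worth flagging is that in your Case 1 the threshold $2\sqrt{3K/(\alpha n)}$ does not by itself ensure failure probability at most $\beta$; the paper uses the additive threshold $n^{-1/2}(\sqrt{3K/\alpha} + 1/\sqrt{\beta})$ so that after subtracting the rejection cut-off, Chebyshev leaves exactly a $1/\sqrt{\beta n}$ margin, and you should adopt a threshold of this form rather than a multiplicative factor of $2$.
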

\fussy
The proof is in Appendix \ref{sec:proofreductiontest}.
Theorem \ref{thm:multi-lower-bound} and Theorem \ref{thm:reductiontest} together imply that the minimax optimal detection rate for calibration is  $\ep_n(p, s) \asymp n^{-2s / (4s + K - 1)}$.
By replacing $\texttt{TS}$ with an adaptive test $\texttt{TS}^\text{ad}$, we obtain an adaptive version of the test $\xi_n^\text{split}$.

\sloppy
\begin{corollary}[Adaptive test via sample splitting]
\label{cor:adaptivetwo}
Suppose $p \leq 2$ and let $\xi_n^\textnormal{ad-s}$ be the test described in Algorithm \ref{alg:reductiontest} with \textnormal{\texttt{TS}} replaced by an adaptive two-sample test $\textnormal{\texttt{TS}}^\textnormal{ad}$.
Under Assumption \ref{assumption:densitybound} and \ref{assumption:classprob}, we have 
\begin{enumerate}
    \item {\bf  False detection rate control.}  For every $P$ for which $f$ is perfectly calibrated, i.e., for $P\in \mathcal{P}_0$, the probability of falsely claiming mis-calibration is at most $\alpha$, i.e.,  $P(\xi_n^\textnormal{ad-s} = 1) \leq \alpha$.
    
    \item {\bf True detection rate control.} There exists $c_\textnormal{ad-s} > 0$ depending on $(s, L, K, \nu_l, \nu_u, d_c, \alpha, \beta)$ such that the power (true positive rate) is bounded as $P(\xi_n^\textnormal{ad-s} = 1) \geq 1 - \beta$ for every $P \in \mathcal{P}_1(\ep, p, s)$---i.e., when $f$ is mis-calibrated with an $\ell_p\textnormal{-ECE}$ of at least $\ep \geq c_\textnormal{ad-s} (n / \log\log n)^{-2s/(4s + K - 1)}$.
\end{enumerate}
\end{corollary}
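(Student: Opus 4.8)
The plan is to mirror the proof of Theorem \ref{thm:reductiontest} almost verbatim, replacing the minimax optimal two-sample test $\texttt{TS}$ by its adaptive counterpart $\texttt{TS}^\textnormal{ad}$ and tracking how the guarantee \eqref{eq:twoadaptivecontrol} — rather than \eqref{eq:twosamplecontrol} — propagates through. Since $\xi_n^\textnormal{ad-s}$ differs from $\xi_n^\textnormal{split}$ only in this substitution, the structure of the argument (an ``or'' over the three detection events inside each $b_k$, then a max over $k$) and every computation involving $T_{1,k}$ and $T_{2,k}$ carry over unchanged.

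For the false detection rate, suppose $P\in\mathcal{P}_0$. Fixing a class $k$, we have $\E_P[T_{1,k}]=\E[Y-Z]_k=0$ and $\E_P[T_{2,k}]=\E[[Z]_k[Y-Z]_k]=0$ with variances at most $1/n$, so Chebyshev gives $P(|T_{1,k}|\ge\sqrt{3K/(\alpha n)})\le\alpha/(3K)$ and likewise for $T_{2,k}$. For the third event, under $H_0$ we have $\pi_k^\mathcal{V}=\pi_k^\mathcal{W}$; conditioning on the random sizes $|\mathcal{V}_k|,|\mathcal{W}_k|$, which by the sample-splitting step index i.i.d.\ draws from $\pi_k^\mathcal{V}$ and $\pi_k^\mathcal{W}$ respectively, the level guarantee of $\texttt{TS}^\textnormal{ad}$ yields $P(\texttt{TS}^\textnormal{ad}_{\alpha/3K,\,\beta/2}(\mathcal{V}_k,\mathcal{W}_k)=1)\le\alpha/(3K)$ conditionally, hence marginally. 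A union bound over the three events gives $P(b_k=1)\le\alpha/K$, and a further union bound over $k\in\{1,\dots,K\}$ gives $P(\xi_n^\textnormal{ad-s}=1)\le\alpha$.

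For the true detection rate, fix $P\in\mathcal{P}_1(\ep,p,s)$ with $\ep\ge c_\textnormal{ad-s}(n/\log\log n)^{-2s/(4s+K-1)}$ and split into cases as in the sketch preceding Algorithm \ref{alg:reductiontest}. If $|\E_P[Y-Z]_k|\ge 2\sqrt{3K/(\alpha n)}$ or $|\E_P[[Z]_k[Y-Z]_k]|\ge 2\sqrt{3K/(\alpha n)}$ for some $k$, Chebyshev shows the corresponding threshold test inside $b_k$ fires with probability at least $1-\beta$. Otherwise all these means are $O(n^{-1/2})$; pigeonholing on $k$ produces $k_0$ with $\int[\res]_{k_0}^2\,dP_Z\ge\ep^2/K$, and combining \eqref{eq:l2distance}--\eqref{eq:totalexp} with Assumption \ref{assumption:classprob} (which, together with $|\E_P[Y-Z]_{k_0}|=O(n^{-1/2})$, also lower-bounds $\E[Z]_{k_0}$ for large $n$) yields $\|\pi_{k_0}^\mathcal{V}-\pi_{k_0}^\mathcal{W}\|_{L^2(P_Z)}^2\ge \ep^2/(K(\E[Y]_{k_0})^2)-O(n^{-1})=\tilde\Omega(n^{-4s/(4s+K-1)})$ once $c_\textnormal{ad-s}$ is large enough. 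Standard binomial concentration shows $|\mathcal{V}_{k_0}|\wedge|\mathcal{W}_{k_0}|\ge cn$ with probability at least $1-\beta/2$; on that event, since $\pi_{k_0}^\mathcal{V}-\pi_{k_0}^\mathcal{W}$ is $(s,L')$-H\"older (from the $(s,L)$-H\"older continuity of $[\res]_{k_0}$, the smoothness of $\mathbf{z}\mapsto[\mathbf{z}]_{k_0}$, and the boundedness of the class probabilities) and $\|\pi_{k_0}^\mathcal{V}-\pi_{k_0}^\mathcal{W}\|_{L^2(P_Z)}\ge c_\textnormal{ad}(cn/\log\log(cn))^{-2s/(4s+K-1)}$ — using $\log\log(cn)\asymp\log\log n$ and choosing $c_\textnormal{ad-s}$ accordingly, and invoking Assumption \ref{assumption:densitybound} so that $P_Z$ is an admissible base measure — the guarantee \eqref{eq:twoadaptivecontrol} applied to $\texttt{TS}^\textnormal{ad}$ gives $P(\texttt{TS}^\textnormal{ad}(\mathcal{V}_{k_0},\mathcal{W}_{k_0})=0\mid |\mathcal{V}_{k_0}|,|\mathcal{W}_{k_0}|)\le\beta/2$. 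Hence $P(b_{k_0}=1)\ge 1-\beta$, so $P(\xi_n^\textnormal{ad-s}=1)\ge 1-\beta$.

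The argument contains essentially no new idea beyond Theorem \ref{thm:reductiontest}, so the \emph{main obstacle} is purely bookkeeping: one must verify that the adaptive two-sample rate \eqref{eq:twoadaptivecontrol}, when instantiated with sample sizes $\Theta(n)$ and base measure $P_Z$, degrades only by the stated $\log\log n$ factor. This needs $\log\log(cn)\asymp\log\log n$, a clean chaining of the constants $c_\textnormal{ad-s}\to c_\textnormal{ad}\to c_\textnormal{ts}$ through Assumptions \ref{assumption:densitybound} and \ref{assumption:classprob}, and the observation that the level control of $\texttt{TS}^\textnormal{ad}$ holds conditionally on the random sample sizes produced by the splitting step so that it survives marginalization.
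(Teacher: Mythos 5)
Your proposal follows the same route as the paper: Corollary~\ref{cor:adaptivetwo} is obtained by rerunning the proof of Theorem~\ref{thm:reductiontest} verbatim, substituting $\texttt{TS}^\textnormal{ad}$ for $\texttt{TS}$ and the guarantee \eqref{eq:twoadaptivecontrol} for \eqref{eq:twosamplecontrol}, and every computation involving $T_{1,k}$, $T_{2,k}$, the conditional level control of the two-sample test, and the binomial concentration of $|\mathcal{V}_{k_0}|,|\mathcal{W}_{k_0}|$ carries over unchanged. One point deserves to be stated more precisely. The filtering step leaves an $O(n^{-1})$ subtraction on $\|\pi_{k_0}^\mathcal{V}-\pi_{k_0}^\mathcal{W}\|_{L^2(P_Z)}^2$; in the non-adaptive version (see \eqref{eq:corcompare}) it is absorbed into the main term because $n^{-1}\le n^{-4s/(4s+K-1)}$ holds for every $n$. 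After the substitution, the analogous bound $n^{-1}\le(n/\log\log n)^{-4s/(4s+K-1)}$ holds only for $n$ above some threshold $N'$ — this is the one genuinely new technical wrinkle, and it is where the paper's proof actually differs from that of Theorem~\ref{thm:reductiontest}. Your phrase ``$-O(n^{-1})=\tilde\Omega(n^{-4s/(4s+K-1)})$ once $c_\textnormal{ad-s}$ is large enough'' papers over this: the $O(n^{-1})$ constant does not involve $c_\textnormal{ad-s}$, so enlarging $c_\textnormal{ad-s}$ does not by itself make the absorption work at small $n$. The clean resolution (used in the paper) is to prove the claim for $n\ge N'$, and for $n<N'$ observe that choosing $c_\textnormal{ad-s}\ge(N'/\log\log N')^{2s/(4s+K-1)}$ forces $\ep>1$, so $\mathcal{P}_1(\ep,p,s)$ is empty and the power statement is vacuous. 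Also, a small slip: after the substitution the constant chain terminates at $c_\textnormal{ad}$ from \eqref{eq:twoadaptivecontrol}; $c_\textnormal{ts}$ does not appear. Everything else — $\log\log(cn)\asymp\log\log n$ when passing from $n_1\wedge n_2\ge d_c n/8$ to $n$, the H\"older continuity of $\pi_{k_0}^\mathcal{V}-\pi_{k_0}^\mathcal{W}$, and marginalizing the conditional level control over the random sample sizes — is exactly what is needed and matches the paper.
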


\begin{remark}[Adaptation cost and optimality]
\label{rem:adapt}
\cite{spokoiny1996adaptive, ingster2000adaptive} develop an adaptive chi-squared test for one-dimensional goodness-of-fit testing which can adapt to an unknown H\"older smoothness parameter $s$ while only losing a $(\log \log n)^{s / (4s + 1)}$ factor in the separation rate.
The test was proven to be minimax optimal in the adaptive setting.
\cite{arias2018remember} extend the adaptive test to a general dimension $d$ and attain an adaptive test at the cost of $\sqrt{\log n}$ factor.
\cite{kim2020minimax} provide a stronger analysis for their permutation test and reduce the adaptation cost to $(\log \log n)^{2s / (4s + d)}$.
To our knowledge, the minimax optimality of these adaptive tests in general dimensions is so far not established.

While the adaptive test in Corollary \ref{cor:adaptivetwo} requires an additional factor of $(\log\log n)^{2s / (4s + K - 1)}$ in the separation rate, Theorem \ref{thm:adaptiveone} requires a factor of $(\log n)^{s / (4s + K - 1)}$.
This gap comes from the requirement \eqref{eq:twoadaptivecontrol} which we borrow from \cite{kim2020minimax}.
Theorem 6.1 and Lemma C.1 of \cite{kim2020minimax}
develops combinatorial concentration inequalities to improve a polynomial dependency on $\alpha$ in the separation rate to a logarithmic dependency.
This results in the $(\log\log n)^{2s / (4s + K - 1)}$ factor in their adaptive test.
Since our proof of Theorem \ref{thm:onesamplethm} uses a quadratic tail bound from Chebyshev's inequality, the adaptation cost in Theorem \ref{thm:adaptiveone} is $(\log n)^{s / (4s + K - 1)}$. 
Currently, it appears challenging to improve the polynomial dependence for $T_{m, n}^\textnormal{d}$, due to its complicated conditional structure.

\end{remark}
\fussy

\subsection{Comparison with the Debiased Plug-in Test}

\begin{figure}
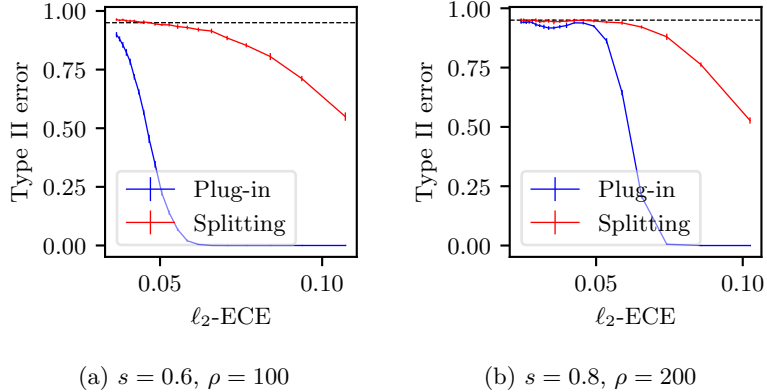

    \centering
    \captionsetup[subfigure]{justification=centering}
    \begin{subfigure}{0.32\textwidth}
        \centering
        \input{images/plugin_vs_splitting100.pgf}
        \caption{$s = 0.6$, $\rho = 100$}
        \label{fig:plugvssplit100}
    \end{subfigure}
    \begin{subfigure}{0.32\textwidth}
        \centering
        \input{images/plugin_vs_splitting200.pgf}
        \caption{$s = 0.8$, $\rho = 200$}
        \label{fig:plugvssplit200}
    \end{subfigure}

    \caption{Type II error comparison for $\xi_{m_*, n}$ and $\xi_n^\text{split}$. The horizontal dashed line indicates a type II error of $1 - \alpha = 0.95$. Since $\xi_n^\text{split}$ relies on sampling splitting, its effective sample size is much smaller than that of the plug-in test. This results in higher type II errors as can be seen in the figure. Standard error bars are plotted over 10  repetitions.}
    \label{fig:type2error}
\end{figure}

We compare the empirical performances of the debiased plug-in test $\xi_{m_*, n}$ and the sample splitting test $\xi_n^\text{split}$.
As described in Section \ref{sec:powanalysis}, we study the type II error against the fixed alternative where the mis-calibration is specified as $H_1: (Z, Y) \sim P_{1, m}$, for various values of $m$.
We use a sample size of $n = 20,000$ and pairs of H\"older smoothness and scaling parameter indicated in Figure \ref{fig:type2error}.
The critical value for $\alpha = 0.05$ and the corresponding type II error are estimated via 1,000 Monte Carlo simulations.
For the sample splitting test $\xi_n^\text{split}$, we use the chi-squared two-sample test of \cite{arias2018remember}.

Since $\xi_n^\text{split}$ relies on sample splitting and discards some of the observations, its effective sample size is smaller than that of the debiased plug-in test.
For this reason, we find that T-Cal outperforms the sample splitting test by a large margin.
While the sample splitting test reveals a theoretically interesting connection to two-sample density testing, it appears empirically suboptimal.

\section{Conclusion}
\sloppy
This paper studied the problem of testing model calibration from a finite sample.
We analyzed the plug-in estimator of $\ell_2\text{-ECE}(f)^2$ as a test statistic for calibration testing.
We discovered that the estimator needs debiasing and becomes minimax optimal when the number of bins is chosen appropriately.
We also provided an adaptive version of the test, which can be used without knowing the H\"older smoothness parameter $s$.
We tested T-Cal with a broad range of experiments, including several neural net architectures and post-hoc calibration methods.

On the theoretical side, we provided an impossibility result for testing calibration against general continuous alternatives.
Assuming that the calibration curve is $s$-H\"older-smooth, we derived a lower bound of $\Omega(n^{-2s / (4s+K-1)})$ on the calibration error required for a model to be distinguished from a perfectly calibrated one.
We also discussed a reduction to two-sample testing and showed that the resulting test also matches the lower bound.

Interesting future directions include (1) developing a testing framework for comparing calibration of predictive models, (2) extending the theoretical result to $\ell_p\text{-ECE}$ with $p > 2$ and other calibration concepts such as top(-$k$), within-$k$, and marginal calibration, (3) developing a minimax estimation theory for calibration error, and (4) establishing local rates for testing calibration.

\fussy


\section*{Acknowledgements}

This work was supported in part by the NSF TRIPODS 1934960, NSF DMS 2046874 (CAREER),  ARO W911NF-20-1-0080, DCIST, and NSF CAREER award CIF-1943064, and Air Force Office of Scientific Research Young
Investigator Program (AFOSR-YIP) \#FA9550-20-1-0111 award.
We thank the editors and associate editor for their work in handling our manuscript; and the reviewers for their very thorough reading and many helpful suggestions that have significantly improved our work.
We are grateful to a number of individuals for their comments, discussion, and feedback;
in particular, we would like to thank
Sivaraman Balakrishnan,
Chao Gao,
Chirag Gupta, 
Lucas Janson,
Adel Javanmard,
Edward Kennedy,
Ananya Kumar, 
Yuval Kluger,
Mohammed Mehrabi,
Alexander Podkopaev,
Yury Polyanskiy,
Mark Tygert, and
Larry Wasserman.

\section{Appendix}

\paragraph{Notations.}
For completeness and to help the reader, we present (and in some cases recall) some notations. 
We will use the symbols $:=$ or $=:$ to define quantities in equations.
We will occasionally use bold font for vectors.
For an integer $d \geq 1$, we denote $[d] := \{1,\ldots,d\}$ and $\mathbf{1}_d := (1,1,\ldots, 1)^\top \in \R^d$.
For a vector $\mathbf{v} \in \R^d$, we will sometimes write $[\mathbf{v}]_i$ for the $i$-th coordinate of $\mathbf{v}$, for any $i\in [d]$.
The minimum of two scalars $a,b \in \R$ is denoted by $\min(a,b)$ or $a \wedge b$; their maximum is denoted by $\max(a,b)$ or $a \vee b$.
We denote the $d$-dimensional Lebesgue measure by $\text{Leb}_d$.
For a function $h: \R^d \to \R$, $1 \leq p < \infty$, and a measure $\mu$ on $\R^d$, we let $\left\Vert h \right\Vert_{L^p(\mu)} := (\int |h|^p d \mu )^{1/p}$.
When $\mu = \text{Leb}_d$, we omit $\mu$ and write $\left\Vert h \right\Vert_{L^p}$.
If $p = \infty$, then $\left\Vert h \right\Vert_{L^p} := \text{ess sup}_{x \in \R^d} |h(x)|$.
We denote the $\ell_p$-norm of $\mathbf{x} = (x_1, \dots, x_d)^\top\in \R^d$ by $\left\Vert \mathbf{x} \right\Vert_p := ( \sum_{i = 1}^d |x_i|^p )^{1/p}$.
When $p$ is unspecified, $\Vert \cdot \Vert$ stands for $\Vert \cdot \Vert_2$.

For two sequences $(a_n)_{n\geq 1}$ and  $(b_n)_{n\geq 1}$ with $b_n \neq 0$,
we write $a_n \asymp b_n$ if $0 < \liminf_n a_n / b_n \leq \limsup_n a_n / b_n < \infty$.
When the index $n$ is self-evident, we may omit it above.
We use the Bachmann-Landau asymptotic notations $\Omega(\cdot), \Theta(\cdot)$ to hide constant factors in inequalities and use $\tilde{\Omega}(\cdot), \tilde{\Theta}(\cdot)$ to also hide logarithmic factors.
For a Lebesgue measurable set $A \subseteq \R^d$, we denote by $\mathds{1}_{A}:\R^d \to \{0,1\}$ its indicator function where $\mathds{1}_{A}(\mathbf{x}) = 1$ if $\mathbf{x} \in A$ and $\mathds{1}_{A}(\mathbf{x}) = 0$ otherwise.
For a real number $s \in \mathbb{R}$, we denote the largest integer less than or equal to $s$ by $\lfloor s \rfloor$.
Also, the smallest integer greater than or equal to $s$ is denoted by $\lceil s \rceil$.

For an integer $d \geq 1$, a vector $\boldsymbol \gamma = (\gamma_1, \dots, \gamma_{d})^\top \in \mathbb{N}^{d}$ is called a multi-index.
We write $|\boldsymbol \gamma| := \gamma_1 + \cdots + \gamma_d$.
For a vector $\mathbf{x} = (x_1, \dots, x_d) \in \R^d$ and a multi-index $\boldsymbol \gamma = (\gamma_1, \dots, \gamma_d)^\top \in \mathbb{N}^d$, we write $\mathbf{x}^{\boldsymbol \gamma} := x_1^{\gamma_1} \cdots x_d^{\gamma_d}$.
For a sufficiently smooth function $f:\R^d \to \R$, we denote its partial derivative of order $\boldsymbol \gamma = (\gamma_1, \dots, \gamma_{d})^\top$ by  $f^{(\boldsymbol \gamma)} := \partial^{\gamma_1}_{1} \cdots \partial^{\gamma_d}_{d} f$.
A pure partial derivative with respect to an individual coordinate $i \in [d]$ is also denoted as $\partial_i f$.
For two sets $S,T$, a map $f:S\to T$, and a subset $S' \subseteq S$, we denote by $f(S')$ the image of $S'$ under $f$.
The support of a function $f:S\to \mathbb{R}$ is the set of points $\supp(f) := \{a\in S: f(a) \neq 0\}$.

The uniform distribution on a compact set $S \subseteq \R^d$ is denoted by $\text{Unif}(S)$.
The binomial distribution with $n \in \mathbb{N}$ trials and success probability $p \in [0, 1]$ is denoted by $\text{Bin}(n, p)$, and write $\text{Ber}(p) := \text{Bin}(1, p)$.
For an integer $d \geq 2$, we let $\Delta_{d - 1} := \{\mathbf{z} = (z_1,\ldots, z_d)^\top \in [0, 1]^d : z_1 + \cdots + z_d = 1 \}$ be the $(d - 1)$-dimensional probability simplex.
We denote the multinomial distribution with $n \in \mathbb{N}$ trials and class probability vector $\mathbf{p} \in \Delta_{d - 1}$ by $\text{Multi}(n, \mathbf{p})$, and write $\text{Cat}(\mathbf{p}) := \text{Multi}(1, \mathbf{p})$.
For a joint distribution $(X,Y) \sim P$, we will write $P_X,P_Y$ for the marginal distributions of $X,Y$, respectively.
For a distribution $Q$ and a random variable $Z\sim Q$, we will denote expectations of functions of $Z$ with respect to $Q$ as $\E f(Z)$, $\E_Z f(Z)$, $\E_{Q} f(Z)$, or $\E_{Z \sim Q} f(Z)$.
We abbreviate almost surely by  ``a.s.", and almost everywhere by ``a.e."

\subsection{Proofs}

\subsubsection{Proof of Theorem \ref{thm:onesamplethm}}\label{sec:proofonesamplethm}

We first state a Lemma used in the proof.
This lemma generalizes Lemma 3 in \cite{arias2018remember} from the uniform measure on the cube to a general probability measure on the probability simplex.
See Section 3.2.2 of \cite{ingster2012nonparametric} for a discussion of how such results connect to geometric notions like Kolmogorov diameters.

\begin{lemma}
\label{discretel2multi}
    For $m \in \mathbb{N}_+$, let $\mathcal{B}_m = \{B_1, \dots, B_{m^{K - 1}}\}$ be the partition of $\Delta_{K - 1}$ defined in Appendix \ref{multibinning}, and $\mu$ be a probability measure on $\Delta_{K - 1}$ such that $\mu(B_i) > 0$ for all $i \in [m^{K - 1}]$.
    For any continuous function $h : \Delta_{K - 1} \rightarrow \mathbb{R}$, define
    $$W_m[h] := \sum_{i = 1}^{m^{K - 1}} \frac{\int_{B_i} h(\mathbf{z}) d\mu(\mathbf{z})}{\mu(B_i)} \mathds{1}_{B_i}.$$
    There are $b_1, b_2 > 0$ depending on $(K, s, L)$ such that for every $h \in \mathcal{H}_K(s, L)$,
    $$\left\Vert W_m[h] \right\Vert_{L^2(\mu)} \geq b_1 \left\Vert h \right\Vert_{L^2(\mu)} - b_2 m^{-s}.$$
    In other words,
    $$ \left(\sum_{i = 1}^{m^{K - 1}} \frac{\mathbb{E}_\mu[h(Z) I(Z \in B_i)]^2}{\mu(B_i)} \right)^\frac{1}{2} \geq b_1 \E_\mu[h(Z)^2]^\frac{1}{2} - b_2 m^{-s}.$$
\end{lemma}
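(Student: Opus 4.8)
The plan is to use that $W_m[h]$ is precisely the conditional expectation $\E_\mu[h(Z)\mid Z\in B_{I(Z)}]$, where $I(Z)$ is the index of the cell of $\mathcal B_m$ containing $Z$; equivalently, $W_m[h]$ is the $L^2(\mu)$-orthogonal projection of $h$ onto the subspace $\mathcal S_m\subseteq L^2(\mu)$ of functions constant on each $B_i$. Two consequences drive the argument: $W_m$ is a contraction on $L^2(\mu)$ (Jensen), so $\|W_m[g]\|_{L^2(\mu)}\le\|g\|_{L^2(\mu)}\le\|g\|_{L^\infty}$, and by orthogonality $\|W_m[h]\|_{L^2(\mu)}^2=\|h\|_{L^2(\mu)}^2-\|h-W_m[h]\|_{L^2(\mu)}^2$. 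Hence it suffices to exhibit, for each $h\in\mathcal H_K(s,L)$, one ``smooth comparison'' $g$ with both $\|h-g\|_{L^2(\mu)}$ and $\|g-W_m[g]\|_{L^2(\mu)}$ of order $m^{-s}$, since then
\begin{align*}
\|W_m[h]\|_{L^2(\mu)}&\ \ge\ \|W_m[g]\|_{L^2(\mu)}-\|h-g\|_{L^2(\mu)}\\
&\ \ge\ \|g\|_{L^2(\mu)}-\|g-W_m[g]\|_{L^2(\mu)}-\|h-g\|_{L^2(\mu)}\\
&\ \ge\ \|h\|_{L^2(\mu)}-Cm^{-s}.
\end{align*}
Note the naive choice $g=h$ does not work once $s>1$: binning a smooth function onto piecewise constants costs $O(m^{-1})$, not $O(m^{-s})$, which is exactly why the statement permits a constant $b_1\le1$ instead of forcing $b_1=1$.

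For the comparison I would take $g:=S_{1/m}h$, the mollification of $h$ at scale $m^{-1}$ by a fixed, smooth, compactly supported kernel $\phi$ on $\R^{K-1}$ reproducing polynomials of degree $\le\lceil s\rceil$ (all moments of order $1,\dots,\lceil s\rceil$ vanish, $\int\phi=1$). Writing $d:=\lceil s\rceil-1$, membership in $\mathcal H_K(s,L)$ gives $h\in C^d$ with $d$-th derivatives that are $(s-d)$-Hölder with constant $L$; expanding $h$ to order $d$ about each point and using the vanishing moments of $\phi$ to kill the polynomial part of the Taylor expansion yields $\|h-S_{1/m}h\|_{L^\infty}\le C(K,s,L)\,m^{-s}$, hence $\|h-S_{1/m}h\|_{L^2(\mu)}\le C(K,s,L)\,m^{-s}$ since $\mu$ is a probability measure. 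The second estimate, $\|S_{1/m}h-W_m[S_{1/m}h]\|_{L^2(\mu)}\le C(K,s,L)\,m^{-s}$, is a Poincaré-type bound: each cell $B_i$ has Euclidean diameter $O(m^{-1})$ by the equal-volume construction of $\mathcal B_m$ (Appendix~\ref{multibinning}), and $\sup_{B_i}|S_{1/m}h-W_m[S_{1/m}h]|\le\operatorname{osc}_{B_i}(S_{1/m}h)$, so one is reduced to controlling the oscillation of the scale-$m^{-1}$ mollification of $h$ across a cell of size $m^{-1}$ — which is $O(m^{-s})$ once the low-degree polynomial trend of $h$ has been accounted for (writing $\nabla S_{1/m}h$ against the zero-mean kernel $\nabla\phi_{1/m}$ and estimating the Taylor remainder). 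Substituting the two estimates into the display proves the lemma with $b_1$ that may be taken equal to $1$ in the interior, the general $b_1\le1$ in the statement leaving room for the constant losses in the two obstacle steps below.

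The main difficulty is twofold. First, $\Delta_{K-1}$ is a bounded polytope, so $S_{1/m}h$ requires values of $h$ within distance $O(m^{-1})$ of $\partial\Delta_{K-1}$; I would handle this by first extending $h$ to an $(s,L')$-Hölder function on an $O(1)$-neighborhood of $\Delta_{K-1}$ with $L'\le C(K,s)L$ (a Whitney/Stein extension), or by switching to one-sided polynomial-reproducing kernels near the boundary, both of which cost only a multiplicative $(K,s)$-constant that is absorbed into $b_2$. Second, and more delicate, is keeping $b_1,b_2$ free of $\mu$ when $s>1$: the mollified function can inherit a large gradient from a low-degree polynomial trend in $h$, so naive per-cell Poincaré bounds would introduce a $\mu$-dependent constant; one must instead argue, following Lemma~3 of \cite{arias2018remember}, that the portion of $h$ not captured by the binning lives at scales of order $m^{-1}$ or finer and therefore has $L^2(\mu)$-norm $O(m^{-s})$ purely because of the Hölder constraint (a scale-$m^{-1}$ Hölder-$(s,L)$ function has sup-norm $O(L m^{-s})$). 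This is precisely what forces the conclusion into the form with a general $b_1\in(0,1]$, and I expect the careful $\mu$-uniform bookkeeping in this step to be the bulk of the work.
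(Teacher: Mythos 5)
Your skeleton
\[
\|W_m[h]\| \;\ge\; \|W_m[g]\| - \|h-g\| \;\ge\; \|g\| - \|g-W_m[g]\| - \|h-g\| \;\ge\; \|h\| - Cm^{-s}
\]
requires both $\|h-g\|_{L^2(\mu)}=O(m^{-s})$ \emph{and} $\|g-W_m[g]\|_{L^2(\mu)}=O(m^{-s})$, and the second estimate is false whenever $s>1$. Projecting a function with a nontrivial linear part onto piecewise constants over cells of diameter $\asymp m^{-1}$ incurs a loss of order $m^{-1}$, not $m^{-s}$: take $K=2$, $\mu$ uniform, $h(z)=z-\tfrac12$, and any $s>1$; then $g=S_{1/m}h$ is essentially linear with $O(1)$ slope, and $\|g-W_m[g]\|_{L^2}\asymp m^{-1}$. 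No mollification fixes this, because the obstruction is a zeroth-scale polynomial trend, not a high-frequency wiggle. Your first obstacle (boundary extension) is real but routine; it is this second obstacle that breaks the argument. You flag it in the closing paragraph, but the proposed fix --- that ``the portion of $h$ not captured by the binning lives at scales $m^{-1}$ or finer and therefore has $L^2(\mu)$-norm $O(m^{-s})$'' --- is not correct: $h-W_m[h]$ is not a scale-$m^{-1}$ Hölder function with small Hölder constant, and generically has $L^2$-norm $\Theta(m^{-1})$. This is exactly why the lemma \emph{cannot} be proved with $b_1=1$ (even up to an $\epsilon$) at rate $m^{-s}$, and why $b_1<1$ is intrinsic, not a bookkeeping artifact.

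The paper's proof avoids ever comparing $W_m[u]$ to $u$. Instead it first replaces $h$, on a coarser grid of cells of diameter $\asymp r/m$, by local Taylor polynomials $u_j$ of degree $\lceil s\rceil-1$, giving $\|h-u\|_{L^\infty}\le b\,m^{-s}$ (this part matches your mollification idea). The new ingredient is a \emph{coercivity} statement (Lemma 15 in the paper): for the finite-dimensional space $\mathcal{P}^K_q$ of polynomials of degree $\le q$ on a simplex partitioned into its $r^{K-1}$ sub-simplices, $\|W_m[v]\|_{L^2(\mu)}\ge a_1\|v\|_{L^2(\mu)}$ for all $v\in\mathcal{P}^K_q$ and $m$ large, proved by a compactness--contradiction argument (if $\|W_m[v_l]\|\to 0$ while $\|v_l\|=1$, pass to a convergent subsequence $v_l\to v_\infty$; but $W_m[v_\infty]\to v_\infty$ pointwise, forcing $\|v_\infty\|=0$). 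The point is that $W_m[v]$ need not be \emph{close} to $v$, but it retains a fixed fraction $a_1<1$ of its $L^2$-mass. Combining the Taylor step, the contraction $\|W_m\|\le 1$, and the coercivity then gives $\|W_m[h]\|\ge a_1\|u\|-bm^{-s}\ge a_1\|h\|-(1+a_1)bm^{-s}$, with $b_1=a_1<1$ forced by the coercivity constant rather than being a slack you could push to $1$. Replacing your ``second estimate'' with a coercivity lemma of this type is the missing idea.
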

The proof can be found in Appendix \ref{app:prooflemma}.

\paragraph{Overview of the proof.}
The proof follows the classical structure of upper bound arguments in nonparametric hypothesis testing, see e.g., \cite{arias2018remember,kim2020minimax} for recent examples.
We compute or bound the mean and variance of $T_{m_*, n}^\text{d}$ under null distributions $P_0 \in \mathcal{P}_0$ and alternative distributions $P_1 \in \mathcal{P}_{s, L, K}$ with a large ECE.
Using Lemma \ref{discretel2multi}, we can find a lower bound on $\mathbb{E}_{P_1}[T_{m_*, n}^\text{d}] - \mathbb{E}_{P_0}[T_{m_*, n}^\text{d}]$.
The variances $\V_{P_0}(T_{m_*, n}^\text{d})$ and $\V_{P_1}(T_{m_*, n}^\text{d})$ can be also upper bounded.
We argue that the mean difference $\mathbb{E}_{P_1}[T_{m_*, n}^\text{d}] - \mathbb{E}_{P_0}[T_{m_*, n}^\text{d}]$ is significantly larger than the square root of the variances $\V_{P_0}(T_{m_*, n}^\text{d})$ and $\V_{P_1}(T_{m_*, n}^\text{d})$.
The conclusion follows from Chebyshev's inequality.

\begin{proof}
Let $N_i := |\mathcal{I}_{m_*, i}|$ for each $i \in [m_*^{K - 1}]$ and $\mathcal{I} := \{i \in [m_*^{K - 1}]: N_i \geq 1\}$.
Also write $\mathbf{N} := (N_1, \dots, N_{m_*^{K - 1}})^\top$, $\mathbf{Z} := (Z_1, \dots, Z_n)^\top$, $\barY := Y - Z$, and $\barY_j := Y_j - Z_j$ for all $j\in [n]$.
By Assumption \ref{assumption:densitybound},
$$\int_{\Delta_{K - 1}} \Vert \res(\mathbf{z}) \Vert^2 dP_Z(\mathbf{z}) \leq  \nu_u \int_{\Delta_{K - 1}} \Vert \res(\mathbf{z}) \Vert^2 d\mathbf{z},$$
where the latter integral is with respect to the uniform measure $\text{Unif}(\Delta_{K - 1})$ on $\Delta_{K - 1}$.
Therefore, we may assume $P_Z = \text{Unif}(\Delta_{K - 1})$ by merging $\nu_u$ with $c$.
We prove the theorem for $p = 2$.
Then, the general case follows since $\mathcal{P}_1(\ep, p, s) \subseteq \mathcal{P}_1(\ep, 2, s)$ for all $p \leq 2$.

Let $P_0 \in \mathcal{P}_0$ and $P_1 \in \mathcal{P}_{s, L, K}$ be a null and an alternative distribution over $(Z, Y)$, respectively.
Write $\ep := \ell_p\text{-ECE}_{P_1}(f)$.
Under $P_0$ and conditioned on $\mathbf{Z}$, recalling $T_{m, n}^\textnormal{d}$ from \eqref{eq:onesampledef},
\begin{align*}
    &\E_{P_0}[T_{m_*, n}^\textnormal{d} \mid \mathbf{Z}] = \frac{1}{n} \sum_{i \in \mathcal{I}} \frac{1}{N_i} \sum_{j_1 \neq j_2 \in \mathcal{I}_{m_*, i}} \E_{P_0}\left[ \barY_{j_1}^\top \barY_{j_2} \mid \mathbf{Z} \right] = 0
\end{align*}
because $\E_{P_0}[ \barY_{j_1}^\top \barY_{j_2} \mid \mathbf{Z}] = \E_{P_0}[\barY_{j_1} \mid \mathbf{Z}]^\top \E_{P_0}[\barY_{j_2} \mid \mathbf{Z}] = 0$ for all $j_1 \neq j_2 \in \mathcal{I}_{m_*, i}$.
Therefore,
\begin{equation}
\label{eq:nullmeanzero}
\E_{P_0}[T_{m_*, n}^\textnormal{d}] = \E_{P_0}[\E_{P_0}[T_{m_*, n}^\textnormal{d} \mid \mathbf{Z}]] = 0.
\end{equation}
Also,
\begin{align*}
    \text{Var}_{P_0}(T_{m_*, n}^\textnormal{d} \mid \mathbf{Z}) &= \frac{1}{n^2} \sum_{i \in \mathcal{I}} \frac{1}{N_i^2} \text{Var}_{P_0}\left( 2\sum_{j_1 < j_2 \in \mathcal{I}_{m_*, i}} \barY_{j_1}^\top \barY_{j_2} \mid \mathbf{Z} \right)\\
    &= \frac{1}{n^2} \sum_{i \in \mathcal{I}}  \frac{4}{N_i^2} \sum_{j_1 < j_2 \in \mathcal{I}_{m_*, i}} \text{Var}_{P_0}\left( \barY_{j_1}^\top \barY_{j_2} \mid \mathbf{Z} \right).
\end{align*}
Here we used that the cross terms in the expansion of $\text{Var}_{P_0}( \sum_{j_1 < j_2 \in \mathcal{I}_{m_*, i}} \barY_{j_1} \barY_{j_2} \mid \mathbf{Z})$ vanish since for $j_1,j_2,j_3 \in \mathcal{I}_{m_*, i}$ that are pairwise different,
\begin{align*}
    \text{Cov}_{P_0}\left(\barY_{j_1}^\top \barY_{j_2}, \barY_{j_1}^\top \barY_{j_3} \mid \mathbf{Z}\right)
    &= \E_{P_0}\left[\barY_{j_2}^\top \barY_{j_1}\barY_{j_1}^\top \barY_{j_3} \mid \mathbf{Z} \right] - \E_{P_0}\left[ \barY_{j_1}^\top \barY_{j_2}  \mid \mathbf{Z}\right] \E_{P_0}\left[ \barY_{j_1}^\top \barY_{j_3}  \mid \mathbf{Z}\right]\\
    &= \E_{P_0}\left[\barY_{j_2} \mid \mathbf{Z}\right]^\top \E_{P_0}\left[\barY_{j_1} \barY_{j_1}^\top \mid \mathbf{Z}\right] \E_{P_0}\left[\barY_{j_3} \mid \mathbf{Z}\right] = 0,
\end{align*}
and for $j_1, j_2, j_3, j_4 \in \mathcal{I}_{m_*, i}$ that are pairwise different,
$\text{Cov}_{P_0}\left( \barY_{j_1}^\top \barY_{j_2}, \barY_{j_3}^\top \barY_{j_4} \mid \mathbf{Z} \right) = 0$
by the independence of $\barY_{j_1}, \barY_{j_2}, \barY_{j_3}, \barY_{j_4}$ given $\mathbf{Z}$.
Further, since $\text{Var}_{P_0}(\barY_{j_1}^\top \barY_{j_2} \mid \mathbf{Z}) \leq K^2$ for all $j_1 < j_2 \in \mathcal{I}_{m_*, i}$,
\begin{equation*}
    \text{Var}_{P_0}(T_{m_*,n}^\text{d} \mid \mathbf{Z}) \leq \frac{1}{n^2} \sum_{i \in \mathcal{I}} \frac{2K^2 N_i(N_i - 1)}{N_i^2} \leq 2K^2 n^{-2} \sum_{i = 1}^{m_*^{K - 1}} I(N_i \geq 2).
\end{equation*}
Thus, by the law of total variance,
\begin{align*}
\text{Var}_{P_0}(T_{m_*, n}^\textnormal{d}) &= \E_{P_0} [\text{Var}_{P_0}(T_{m_*, n}^\textnormal{d} \mid \mathbf{Z})] + \text{Var}_{P_0}(\E_{P_0}[T_{m_*, n}^\textnormal{d} \mid \mathbf{Z}]) \leq 2 K^2 n^{-2} \sum_{i = 1}^{m_*^{K - 1}} P_0(N_i \geq 2)\\
&= 2K^2 m_*^{K - 1} n^{-2} P_0(N_1 \geq 2).
\end{align*}
Since $N_1$ follows a Binomial distribution with $n$ trials and success probability $m_*^{-K + 1}$, and as
$(1 - x)^{n - 1} \geq 1 - (n - 1) x$ for any $x \in [0, 1]$, we see that
\begin{align}
\label{eq:n1upper}
   P_0(N_1 \geq 2) =  1- \left( 1 - \frac{1}{m_*^{K - 1}} \right)^{n-1}\left( 1 + \frac{n-1}{m_*^{K - 1}} \right) \leq 1 \wedge \frac{n^2}{m_*^{2(K - 1)}}.
\end{align}
Therefore, defining $\tau^2$ below,
\begin{align}
\label{eq:nullvarbound}
\text{Var}_{P_0}(T_{m_*, n}^\text{d}) \leq 2K^2\left( m_*^{K - 1} n^{-2} \wedge m_*^{-(K - 1)} \right) =: \tau^2.
\end{align}

Under $P_1$, we have
\begin{align*} 
    \mathbb{E}_{P_1}[T_{m_*, n}^\textnormal{d} \mid \mathbf{Z}] &= \frac{1}{n} \sum_{i \in \mathcal{I}} \frac{1}{N_i} \sum_{j_1 \neq j_2 \in \mathcal{I}_{m_*, i}} \E_{P_1}\left[ \barY_{j_1}^\top \barY_{j_2} \mid \mathbf{Z} \right]\\
    &= \frac{1}{n} \sum_{i \in \mathcal{I}} \frac{1}{N_i} \sum_{j_1 \neq j_2 \in \mathcal{I}_{m_*, i}} \res(Z_{j_1})^\top \res(Z_{j_2}),
\end{align*}
since for each $i \in \mathcal{I}$, $\E_{P_1}[\barY_{j_1}^\top \barY_{j_2} \mid \mathbf{Z}] = \E_{P_1}[\barY_{j_1} \mid \mathbf{Z}]^\top \E_{P_1}[\barY_{j_2} \mid \mathbf{Z}] = \res(Z_{j_1})^\top \res(Z_{j_2})$ for all $j_1 \neq j_2 \in \mathcal{I}_{m_*, i}$.
Moreover,
\begin{align}
    \E_{P_1}[T_{m_*, n}^\textnormal{d} \mid \mathbf{N}] &= \E_{P_1}[\E_{P_1}[T_{m_*, n}^\text{d} \mid \mathbf{Z}] \mid \mathbf{N}]\nonumber\\
    &= \frac{1}{n} \sum_{i \in \mathcal{I}} \frac{N_i(N_i - 1)}{N_i} \mathbb{E}_{P_1}[\res(Z) \mid Z \in B_i]^\top \mathbb{E}_{P_1}[\res(Z) \mid Z \in B_i] \nonumber \\
    &=\frac{1}{n} \sum_{i \in \mathcal{I}} (N_i - 1) \left\Vert \E_{P_1} \left[ \res(Z) \mid Z \in B_i \right] \right\Vert^2\label{etd}
\end{align}
and
\begin{align*}
    \E_{P_1}[T_{m_*,n}^\text{d}] &= \E_{P_1}[\E_{P_1}[T_{m_*, n}^\text{d} \mid \mathbf{N}]]\\
    &= \frac{1}{n} \sum_{i = 1}^{m_*^{K - 1}}  \E_{P_1}[I(N_i \geq 1)(N_i - 1)] \left\Vert \E_{P_1}[\res(Z) \mid Z \in B_i] \right\Vert^2.
\end{align*}
For any $x \in [0, 1]$, we have $1 - nx + \binom{n}{2} x^2 - \binom{n}{3} x^3 \leq (1 - x)^n \leq 1 - nx + \binom{n}{2} x^2.$
Applying the inequality for $x = m_*^{-(K - 1)}$, we derive
\begin{align*}
   &\frac{1}{4} \left( \frac{n}{m_*^{K - 1}} \wedge \frac{n^2}{m_*^{2(K - 1)}} \right) \leq \binom{n}{2} \frac{1}{m_*^{2(K - 1)}} - \binom{n}{3} \frac{1}{m_*^{3(K - 1)}}\\
   &\leq  \E_{P_1}[I(N_i \geq 1)(N_i - 1)] = \frac{n}{m_*^{K - 1}} - 1 + \left(1 - \frac{1}{m_*^{K - 1}} \right)^{n} \leq \binom{n}{2} \frac{1}{m_*^{2(K - 1)}} \leq \frac{n^2}{m_*^{2(K - 1)}}.
\end{align*}
Also, since $(1 - x)^n \leq 1$ for $x \in [0, 1]$, we have
\begin{align*}
    \E_{P_1} [I(N_i \geq 1)(N_i - 1)] \leq \frac{n}{m_*^{K - 1}}.
\end{align*}
Therefore,
\begin{align}
\label{eq:lowandupper}
\frac{1}{4} \left(\frac{n}{m_*^{K - 1}} \wedge \frac{n^2}{m_*^{2(K - 1)}} \right) &\leq \E_{P_1}[I(N_i \geq 1)(N_i - 1)] \leq 
\left(\frac{n}{m_*^{K - 1}} \wedge \frac{n^2}{m_*^{2(K - 1)}} \right). 
\end{align}

By Lemma \ref{discretel2multi}, and as $\ep = (\sum_{k = 1}^K \E_{P_1}[[\res(Z)]_k^2])^{1/2} \geq K^{-1/2} \sum_{k = 1}^K (\E_{P_1}[[\res(Z)]_k^2])^{1/2}$ by the Cauchy-Schwarz inequality,
\begin{align}
\label{eq:disclower}
    & \sum_{i = 1}^{m_*^{K - 1}}  m_*^{-(K - 1)}\left\Vert \E_{P_1}[\res(Z) I(Z \in B_i)] \right\Vert^2 
    = \sum_{i = 1}^{m_*^{K - 1}} \sum_{k = 1}^K m_*^{-(K - 1)} \E_{P_1}\left[[\res(Z)]_k^2 I(Z \in B_i)\right]\nonumber\\
    &= \sum_{k = 1}^K W_{m_*}^2[[\res(Z)]_k]
    \geq \sum_{k = 1}^K  \left(b_1 \E_{P_1}\left[[\res(Z)]_k^2\right]^\frac{1}{2} - b_2 m_*^{-s}\right)^2  \nonumber \\
    &\geq \sum_{k = 1}^K  \left( b_1^2 \E_{P_1}\left[ [\res(Z)]_k^2 \right] - 2b_1b_2 \E_{P_1}\left[ [\res(Z)]_k^2 \right]^\frac{1}{2} m_*^{-s} \right) 
    \geq b_1^2 \ep^2 - 2\sqrt{K }b_1b_2 m_*^{-s} \ep.
\end{align}
By \eqref{eq:lowandupper} and \eqref{eq:disclower}, defining $\delta$ below,
\begin{equation}
\label{mu}
    \E_{P_1}[T_{m_*, n}^\textnormal{d}] \geq \frac{1}{4}(b_1^2\ep^2 - 2\sqrt{K}b_1b_2 m_*^{-s} \ep) \left(1 \wedge \frac{n}{m_*^{K - 1}} \right)=: \delta. 
\end{equation}

Moreover, we find
\begin{align*}
    &\text{Var}_{P_1}(T_{m_*, n}^\textnormal{d} \mid \mathbf{N}) = \frac{1}{n^2} \sum_{i \in \mathcal{I}} \frac{1}{N_i^2} \text{Var}_{P_1}\left( 2\sum_{j_1 < j_2 \in \mathcal{I}_{m_*, i}} \barY_{j_1}^\top \barY_{j_2} \mid \mathbf{N} \right)\\
    &= \frac{1}{n^2} \sum_{i \in \mathcal{I}} \frac{1}{N_i^2} \left[2N_i(N_i - 1) \text{Var}_{P_1}\left( \barY_{1}^\top \barY_{2} \mid Z_1, Z_2 \in B_i \right) \right.\\
    &\hspace{60pt} \left. + 4N_i(N_i - 1)(N_i - 2) \text{Cov}_{P_1}\left( \barY_{1}^\top \barY_{2}, \barY_{1}^\top \barY_{3}\mid Z_1, Z_2, Z_3 \in B_i \right) \right]\\
    &\leq \frac{1}{n^2} \sum_{i \in \mathcal{I}} \left( \frac{2K^2N_i(N_i - 1)}{N_i^2} + \frac{4K^2N_i(N_i - 1)(N_i - 2)}{N_i^2}  \Vert \E_{P_1}[\res(Z) \mid Z \in B_i] \Vert^2 \right).
\end{align*}
Further, by equations \eqref{eq:n1upper}, \eqref{eq:lowandupper} and since $\Vert \mathbb{E}_{P_1}[\res(Z) \mid Z \in B_i]\Vert^2 \leq \mathbb{E}_{P_1}[\Vert \res(Z) \Vert^2 \mid Z \in B_i]$,
\begin{align}
\label{eq:evbound}
    &\E_{P_1}[\text{Var}_{P_1}(T_{m_*, n}^\text{d} \mid \mathbf{N})]\nonumber\\
    &\leq \frac{1}{n^2} \sum_{i = 1}^{m_*^{K - 1}} \left( 2K^2 P_1(N_i \geq 2) + 4K^2 \E_{P_1}[(N_i - 1) I(N_i \geq 1)] \Vert \E_{P_1}[\res(Z) \mid Z \in B_i] \Vert^2 \right) \nonumber\\
    &\leq \tau^2 + 4K^2 m_*^{K - 1}n^{-2}  \E_{P_1}[(N_1 - 1) I(N_i \geq 1)] \sum_{i = 1}^{m_*^{K - 1}} \E_{P_1}[\Vert \res(Z) \Vert^2 I(Z \in B_i)] \nonumber\\
    &\leq \tau^2 + 4K^2 \ep^2 (n^{-1} \wedge m_*^{-(K - 1)}).
\end{align}
Also, from \eqref{etd},
\begin{align*}
    \text{Var}_{P_1}(\E_{P_1}[T_{m_*,n}^\text{d} \mid \mathbf{N}]) &\leq \frac{1}{n^2} \left(\sum_{i = 1}^{m_*^{K - 1}}  \sqrt{\text{Var}_{P_1} [I(N_i \geq 1)(N_i - 1)]} \left\Vert \E_{P_1} \left[ \res(Z) \mid Z \in B_i \right] \right\Vert^2 \right)^2\\
    &= \frac{1}{n^2} \text{Var}_{P_1} [I(N_1 \geq 1)(N_1 - 1)] m_*^{2(K - 1)} \ep^4.
\end{align*}
Writing $x = m_*^{-(K - 1)}$, we have
\begin{align*}
    \text{Var}_{P_1} [I(N_1 \geq 1)(N_1 - 1)] =  nx (1 - x) + (1 - x)^n - (1 - x)^{2n} - 2n x (1 - x)^n.
\end{align*}
Using that $1 - nx \leq (1 - x)^n$, we find
\begin{align*}
    \text{Var}_{P_1} [I(N_i \geq 1)(N_i - 1)] & = nx(1 - x) + (1 - x)^n [1 - (1 - x)^n - 2nx]\\
    &\leq nx(1 - x) \leq nx = \frac{n}{m_*^{K - 1}}.
\end{align*}
Similarly,
\begin{align*}
    \text{Var}_{P_1} [I(N_i \geq 1)(N_i - 1)] &\leq nx + (1 - x)^{n - 1}[1 - (1 - x)^n - 2nx]\\
    &\leq nx + (1 - (n - 1)x)(- nx) \leq n^2 x^2 = \frac{n^2}{m_*^{2(K - 1)}}.
\end{align*}

Therefore,
\begin{align}
\label{eq:vebound}
    \text{Var}_{P_1}(\E_{P_1}[T_{m_*,n}^\text{d} \mid \mathbf{N}]) \leq \left( \frac{m_*^{K - 1}}{n} \wedge 1\right) \ep^4.
\end{align}
By equations \eqref{eq:evbound}, \eqref{eq:vebound}, and the law of  total variance, defining $\sigma^2$ below,
\begin{align}
\label{sigma}
 \text{Var}_{P_1}(T_{m_*, n}^\text{d}) &= \text{Var}_{P_1}(\E_{P_1}[T_{m_*, n}^\text{d} \mid \mathbf{N}]) + \E_{P_1}[\text{Var}_{P_1}(T_{m_*, n}^\text{d} \mid \mathbf{N})] \nonumber\\
 &\leq \tau^2 + 4K^2 \ep^2 (n^{-1} \wedge m_*^{-(K - 1)}) + \left( \frac{m_*^{K - 1}}{n} \wedge 1\right) \ep^4 =: \sigma^2.
\end{align}
Recalling that $m_* = \lfloor n^{2 / (4s + K - 1)} \rfloor$,
we choose $c > 0$ such that $\ep \geq c n^{-2s/(4s + K - 1)}$ implies
\begin{align*}
    &\left(\sqrt{\frac{2}{\alpha}} + \sqrt{\frac{2}{\beta}} \right) K (m_*^\frac{K - 1}{2} n^{-1} \wedge m_*^{-\frac{K - 1}{2}}) + \sqrt{\frac{4}{\beta}} K \ep \left(n^{-\frac{1}{2}} \wedge m_*^{-\frac{K - 1}{2}} \right) + \sqrt{\frac{1}{\beta}} \left( \frac{m_*^{K - 1}}{n} \wedge 1\right)^\frac{1}{2} \ep^2 \\&\leq \frac{1}{4}( b_1^2\ep^2 - 2\sqrt{K}b_1b_2 m_*^{-s} \ep) \left(1 \wedge \frac{n}{m_*^{K - 1}} \right)
\end{align*}
for all large enough $n$.
For $\tau$, $\delta$, and $\sigma$ from \eqref{eq:nullvarbound}, \eqref{mu}, and \eqref{sigma}, this gives
\begin{equation}
\label{multiconstraint}
\frac{\tau}{\sqrt{\alpha}} + \frac{\sigma}{\sqrt{\beta}} \leq \delta.
\end{equation}
By equations \eqref{eq:nullmeanzero}, \eqref{eq:nullvarbound}, 
the definition of $\xi_{m_*, n}$ from Algorithm \eqref{alg:onesampletest},
and Chebyshev's inequality,
\begin{align}
\label{eq:cheby1}
    P_0(\xi_{m_*, n} = 1) &\leq P_0\left(T_{m_*, n}^\textnormal{d} \geq \frac{\tau}{\sqrt{\alpha}} \right) \leq \frac{\text{Var}_{P_0}(T_{m_*, n}^\textnormal{d})}{\tau^2 / \alpha} \leq \alpha.
\end{align}
By equations \eqref{mu}, \eqref{sigma}, \eqref{multiconstraint}, and Chebyshev's inequality,
\begin{align}
\label{eq:cheby2}
    &P_1\left(T_{m_*, n}^\textnormal{d} < \frac{\tau}{\sqrt{\alpha}} \right) \leq P_1 \left( T_{m_*, n}^\textnormal{d} - \E_{P_1}[T_{m_*, n}^\textnormal{d}] \leq \frac{\tau}{\sqrt{\alpha}} - \delta  \right) \nonumber \\
    & \leq P_1\left(\left|T_{m_*, n}^\textnormal{d} - \E_{P_1}[T_{m_*, n}^\textnormal{d}] \right| \geq \frac{\sigma}{\sqrt{\beta}} \right)
    \leq \frac{\text{Var}_{P_1}(T_{m_*, n}^\textnormal{d})}{\sigma^2 / \beta}
    \leq \beta.
\end{align}

By the above arguments, Theorem \ref{thm:onesamplethm} holds for all $n \geq N$, where $N \in \mathbb{N}_+$ depends on $(s, L, K, \nu_l, \nu_u, \alpha, \beta).$
If we require $c$ to further satisfy $c \geq N^{2s / (4s + K - 1)}$,
then the family $\mathcal{P}_1(\ep, p, s)$ is empty for $n < N$ given $\ep \geq c n^{-2s/(4s + K - 1)} > 1$.
Therefore, Theorem \ref{thm:onesamplethm} becomes vacuously true for $n < N$, and thereby true for all $n \in \mathbb{N}$.
This finishes the proof.

\end{proof}

\subsubsection{Proof of Lemma \ref{discretel2multi}}
\label{app:prooflemma}

We state and prove Lemma \ref{lem:taylor} and \ref{lem:poly} which we use in the proof of Lemma \ref{discretel2multi}.

\begin{lemma}
\label{lem:taylor}
    Fix $h \in \mathcal{H}_K(s, L)$ and $\mathbf{z}_0 \in \Delta_{K - 1}$.
    Let $u$ be the $(\lceil s \rceil - 1)$-th order Taylor series of $h$ at $\mathbf{z}_0$.
    There is $L'$ depending on $(K, s, L)$ such that
    \beq\label{taylor}
    |h(\mathbf{z}) - u(\mathbf{z})| \leq L'\left\Vert \mathbf{z} - \mathbf{z}_0 \right\Vert^s
    \eeq
    for all $\mathbf{z} \in \Delta_{K - 1}.$
\end{lemma}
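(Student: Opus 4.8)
\textbf{Proof proposal for Lemma \ref{lem:taylor}.} The plan is to reduce the statement on the simplex to a familiar Taylor-remainder estimate on an interval (for $K=2$) or on a lower-dimensional simplex (for general $K$), using the definition of H\"older continuity $\mathcal{H}_K(s,L)$ and the standard Taylor formula with integral (or Lagrange) remainder. First I would recall the definition of $\mathcal{H}_K(s,L)$ from Appendix \ref{sec:holderdef}: identifying $\Delta_{K-1}$ with a $(K-1)$-dimensional polytope via a fixed affine chart, a function $h$ lies in $\mathcal{H}_K(s,L)$ iff all its partial derivatives of order $\lceil s\rceil-1$ exist and satisfy a H\"older-$(s-\lceil s\rceil+1)$ bound with constant $L$. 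Write $d=\lceil s\rceil - 1$ and $\theta = s - \lceil s\rceil + 1 \in (0,1]$, so $s = d + \theta$.

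The key step is the Taylor expansion with integral remainder applied along the segment from $\mathbf{z}_0$ to $\mathbf{z}$, which lies in $\Delta_{K-1}$ by convexity. Writing $\mathbf{v} = \mathbf{z} - \mathbf{z}_0$ and $g(t) = h(\mathbf{z}_0 + t\mathbf{v})$ for $t \in [0,1]$, the $d$-th order Taylor expansion of the one-variable function $g$ at $0$ gives
\[
g(1) - \sum_{j=0}^{d} \frac{g^{(j)}(0)}{j!} = \frac{1}{(d-1)!}\int_0^1 (1-t)^{d-1}\bigl(g^{(d)}(t) - g^{(d)}(0)\bigr)\,dt,
\]
and the left-hand side equals $h(\mathbf{z}) - u(\mathbf{z})$ by the multivariate chain rule and the multinomial expansion of $g^{(j)}(0)$ in terms of the partial derivatives $h^{(\boldsymbol\gamma)}(\mathbf{z}_0)$, $|\boldsymbol\gamma| = j \leq d$. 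Since $g^{(d)}(t) = \sum_{|\boldsymbol\gamma|=d}\binom{d}{\boldsymbol\gamma}\mathbf{v}^{\boldsymbol\gamma}\, h^{(\boldsymbol\gamma)}(\mathbf{z}_0 + t\mathbf{v})$, the H\"older bound on the order-$d$ derivatives yields $|g^{(d)}(t) - g^{(d)}(0)| \leq \bigl(\sum_{|\boldsymbol\gamma|=d}\binom{d}{\boldsymbol\gamma}|\mathbf{v}^{\boldsymbol\gamma}|\bigr) L\, (t\|\mathbf{v}\|)^{\theta} \leq C_{K,d}\, L\, \|\mathbf{v}\|^{d}\cdot t^\theta \|\mathbf{v}\|^\theta$, where $C_{K,d}$ bounds the sum of multinomial coefficients times the normalization converting $\|\mathbf{v}\|_1^d$-type quantities to $\|\mathbf{v}\|^d$. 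Plugging this into the remainder integral and evaluating $\int_0^1 (1-t)^{d-1} t^\theta\,dt = B(\theta+1, d)$, a finite constant, gives $|h(\mathbf{z}) - u(\mathbf{z})| \leq L'\|\mathbf{z}-\mathbf{z}_0\|^{s}$ with $L' = C_{K,d}\, L\, B(\theta+1,d)/(d-1)!$, depending only on $(K,s,L)$. The case $d=0$ (i.e., $0 < s \leq 1$) is even simpler: $u(\mathbf{z}) = h(\mathbf{z}_0)$ and the bound is immediate from the definition of $\mathcal{H}_K(s,L)$.

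I expect the only mildly delicate point to be bookkeeping with the affine chart: the $\mathcal{H}_K(s,L)$ definition in Appendix \ref{sec:holderdef} is phrased in terms of a particular coordinate system on $\Delta_{K-1}$, and one must check that the Euclidean norm $\|\mathbf{z}-\mathbf{z}_0\|$ appearing in \eqref{taylor} is comparable (up to a dimension-dependent constant, which can be absorbed into $L'$) to the norm in those coordinates, and that differentiating along the segment stays within the chart. This is routine since all norms on $\R^{K-1}$ are equivalent and the simplex is convex, but it is where the constant's dependence on $K$ enters. No new ideas beyond the classical Taylor-with-remainder argument are required.
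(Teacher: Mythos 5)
Your proposal is correct and takes essentially the same approach as the paper's proof: both perform a Taylor expansion of $h\circ\psi^{-1}$ to order $\lceil s\rceil - 1$ at $\psi(\mathbf{z}_0)$, isolate a remainder term involving the top-order partial derivatives evaluated away from the base point, and bound it using the H\"older condition from \eqref{eq:holderdef}. The only cosmetic difference is that you parameterize along the segment and use the integral form of the remainder, while the paper applies the multivariate Lagrange (mean-value) form directly in the chart; your handling of the chart and of the $d=0$ case is in order and the constants match in dependence on $(K,s,L)$.
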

        
\begin{proof}
Let $\psi = \pi_{-K}$ as in \eqref{eq:holderdef}.
Recall that for a multi-index $\boldsymbol \gamma = (\gamma_1, \dots, \gamma_{K-1})^\top \in \mathbb{N}^{K-1}$, we write $(\psi(\mathbf{z}) - \psi(\mathbf{z}_0))^{\boldsymbol \gamma}= \prod_{j\in[K-1]} (\psi_j(\mathbf{z}) - \psi_j(\mathbf{z}_0))^{\gamma_j}$.
By a Taylor series expansion, there exists $t \in [0, 1]$ such that
\begin{align*}
    h(\mathbf{z}) = (h \circ \psi^{-1}) ( \psi(\mathbf{z})) &= \sum_{\substack{\boldsymbol \gamma \in \mathbb{N}^{K - 1}\\0 \leq |\boldsymbol \gamma| \leq \lceil s \rceil - 2}} \frac{(h \circ \psi^{-1})^{(\boldsymbol \gamma)}(\psi(\mathbf{z}_0))}{|\boldsymbol \gamma|!} (\psi(\mathbf{z}) - \psi(\mathbf{z}_0))^{\boldsymbol \gamma}\\
    &+ \sum_{\substack{\boldsymbol \gamma \in \mathbb{N}^{K - 1}\\ |\boldsymbol \gamma| = \lceil s \rceil - 1}} \frac{(h \circ \psi^{-1})^{(\boldsymbol \gamma)}(t\psi(\mathbf{z}) + (1 - t) \psi(\mathbf{z}_0))}{|\boldsymbol \gamma|!} (\psi(\mathbf{z}) - \psi(\mathbf{z}_0))^{\boldsymbol \gamma}.
\end{align*}
Then, $h(\mathbf{z}) - u(\mathbf{z})$ equals
\begin{align*}
    &      \sum_{\substack{\boldsymbol \gamma \in \mathbb{N}^{K - 1}\\ |\boldsymbol \gamma| = \lceil s \rceil - 1}} \frac{(h \circ \psi^{-1})^{(\boldsymbol \gamma)}(t\psi(\mathbf{z}) + (1 - t) \psi(\mathbf{z}_0)) - (h \circ \psi^{-1})^{(\boldsymbol \gamma)}(\psi(\mathbf{z}_0))}{|\boldsymbol \gamma|!} (\psi(\mathbf{z}) - \psi(\mathbf{z}_0))^{\boldsymbol \gamma}.
\end{align*}
By the triangle inequality and the $s$-H\"older continuity of $h$,
\begin{align*}
    &|h(\mathbf{z}) - u(\mathbf{z})| \\
    &\leq \sum_{\substack{\boldsymbol \gamma \in \mathbb{N}^{K - 1}\\ |\boldsymbol \gamma| = \lceil s \rceil - 1}} \frac{|(h \circ \psi^{-1})^{(\boldsymbol \gamma)}(t\psi(\mathbf{z}) + (1 - t) \psi(\mathbf{z}_0)) - (h \circ \psi^{-1})^{(\boldsymbol \gamma)}(\psi(\mathbf{z}_0))|}{|\boldsymbol \gamma|!} \left\vert(\psi(\mathbf{z}) - \psi(\mathbf{z}_0))^{\boldsymbol \gamma} \right\vert\\
    &\leq \sum_{\substack{\boldsymbol \gamma \in \mathbb{N}^{K - 1}\\ |\boldsymbol \gamma| = \lceil s \rceil - 1}} \frac{L (t\Vert \psi(\mathbf{z}) - \psi(\mathbf{z}_0) \Vert)^{s - \lceil s \rceil + 1}}{|\boldsymbol \gamma|!} \Vert \psi(\mathbf{z}) - \psi(\mathbf{z}_0) \Vert^{\lceil s \rceil - 1}\\
    &\leq L'\Vert \psi(\mathbf{z}) - \psi(\mathbf{z}_0) \Vert^s \leq L' \Vert \mathbf{z} - \mathbf{z}_0 \Vert^s.
\end{align*}
\end{proof}
        
\begin{lemma}
\label{lem:poly}
Let $\mathcal{P}_q^K$ be the class of polynomials on $\mathbb{R}^K$ of degree at most $q$.
There are $a_1, a_2 > 0$ depending on $(K, q)$ such that
\beq\label{w-lb}
\left\Vert W_m[v] \right\Vert_{L^2(\mu)} \geq a_1 \left\Vert v \right\Vert_{L^2(\mu)}
\eeq
for every $v \in \mathcal{P}_q^K$ and $m \geq a_2$.
\end{lemma}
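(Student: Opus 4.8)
The plan is to recognize $W_m$ as the $L^2(\mu)$-orthogonal projection onto the $(m^{K-1})$-dimensional space $\mathcal{S}_m$ of functions that are constant on each cell $B_1,\dots,B_{m^{K-1}}$ of the partition $\mathcal{B}_m$: indeed, $W_m[h]$ replaces $h$ on $B_i$ by its $\mu$-average over $B_i$, which is exactly the element of $\mathcal{S}_m$ minimizing the $L^2(\mu)$-distance to $h$. Consequently, by the Pythagorean identity, $\|W_m[v]\|_{L^2(\mu)}^2 = \|v\|_{L^2(\mu)}^2 - \|v - W_m[v]\|_{L^2(\mu)}^2$, so it suffices to produce an approximation bound of the form $\|v - W_m[v]\|_{L^2(\mu)} \le C(K,q)\,m^{-1}\,\|v\|_{L^2(\mu)}$, after which one may take $a_2 := 2C(K,q)$ and $a_1 := \tfrac12$, since then $m \ge a_2$ forces $\|v-W_m[v]\|_{L^2(\mu)}^2 \le \tfrac14\|v\|_{L^2(\mu)}^2$.

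First I would bound the approximation error cell by cell. Since $W_m[v]$ is constant on $B_i$ and equals the $L^2(\mu|_{B_i})$-optimal constant, for any fixed point $\mathbf{z}_i \in B_i$ we have $\int_{B_i}(v - W_m[v])^2\,d\mu \le \int_{B_i}(v - v(\mathbf{z}_i))^2\,d\mu$. By the construction of $\mathcal{B}_m$ in Appendix \ref{multibinning}, each cell is convex with $\mathrm{diam}(B_i) \le c_K\,m^{-1}$, so the mean value theorem gives $|v(\mathbf{z}) - v(\mathbf{z}_i)| \le \bigl(\sup_{\Delta_{K-1}}\|\nabla v\|\bigr)\,c_K\,m^{-1}$ for all $\mathbf{z}\in B_i$; integrating over $B_i$ and summing, using $\sum_i \mu(B_i) = 1$, yields $\|v - W_m[v]\|_{L^2(\mu)}^2 \le c_K^2\,m^{-2}\,\bigl(\sup_{\Delta_{K-1}}\|\nabla v\|\bigr)^2$.

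The remaining step, which I expect to be the main obstacle, is to compare the seminorm $\sup_{\Delta_{K-1}}\|\nabla v\|$ with $\|v\|_{L^2(\mu)}$. Here I would invoke the equivalence of all norms on the finite-dimensional space obtained by restricting $\mathcal{P}_q^K$ to the compact set $\Delta_{K-1}$: since $v \mapsto \sup_{\Delta_{K-1}}\|\nabla v\|$ is a seminorm and $v \mapsto \|v\|_{L^2(\mu)}$ is a norm on this space (a nonzero polynomial of degree $\le q$ cannot vanish $\mu$-a.e.\ when $\mu$ has a density bounded below as in Assumption \ref{assumption:densitybound}, the regime in which the lemma is applied; equivalently one may take $\mu$ to be the uniform measure $\nu$ as in the proof of Theorem \ref{thm:onesamplethm}), there is a constant $C(K,q)$ with $\sup_{\Delta_{K-1}}\|\nabla v\| \le C(K,q)\,\|v\|_{L^2(\mu)}$. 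Substituting into the cell estimate gives exactly the desired bound $\|v - W_m[v]\|_{L^2(\mu)} \le c_K\,C(K,q)\,m^{-1}\,\|v\|_{L^2(\mu)}$, and choosing $a_1,a_2$ as above completes the argument. I expect the delicate point to be precisely this norm-equivalence comparison: it is where the hypothesis that $\mu$ is comparable to Lebesgue is genuinely used (it is false for an arbitrary $\mu$ with merely $\mu(B_i)>0$), and one should also check that the diameter constant $c_K$ is uniform over cells, which follows from the fact that the cells of $\mathcal{B}_m$ fall into finitely many congruence classes, each a fixed simplex rescaled by $m^{-1}$.
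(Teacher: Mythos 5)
Your argument is correct, and it is a genuinely different proof from the paper's. The paper proves Lemma~\ref{lem:poly} by contradiction and compactness: it normalizes a hypothetical counterexample sequence $\{v_l\}$ so that $\|v_l\|_{L^2(\mu)}=1$, extracts a convergent subsequence $v_{l_k}\to v_\infty$ by compactness of the unit sphere in the finite-dimensional space of polynomials of degree at most $q$ restricted to $\Delta_{K-1}$, and reaches a contradiction by combining the contraction property $\|W_m[h]\|_{L^2(\mu)}\le\|h\|_{L^2(\mu)}$ with the uniform convergence $W_m[v_\infty]\to v_\infty$. That argument is non-quantitative but requires of $\mu$ only that $\mu(B_i)>0$ for every cell and every $m$. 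Your route is instead a direct quantitative estimate: you identify $W_m$ as the $L^2(\mu)$-orthogonal projection onto piecewise constants, use the Pythagorean identity to reduce the claim to an approximation bound $\|v-W_m[v]\|_{L^2(\mu)}\le C\,m^{-1}\|v\|_{L^2(\mu)}$, and derive the latter cell-by-cell from a mean-value estimate on each simplex together with equivalence of the gradient seminorm and the $L^2(\mu)$ norm on the finite-dimensional polynomial space. This yields explicit expressions for $a_1$ and $a_2$ (and in fact $a_1=\tfrac{\sqrt{3}}{2}$ works, not merely $\tfrac{1}{2}$), which the compactness argument does not provide. The trade-off, which you correctly flag yourself, is that your final norm-equivalence step genuinely needs a two-sided density bound on $\mu$ with respect to the uniform measure on $\Delta_{K-1}$, as in Assumption~\ref{assumption:densitybound}; the paper's soft argument avoids this. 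Because the lemma is invoked in the proof of Theorem~\ref{thm:onesamplethm} only after $P_Z$ has been reduced to the uniform measure, the extra hypothesis is immaterial in context, but your version does establish a slightly less general form of the lemma. (A related small point: in both proofs the constants $a_1,a_2$ actually inherit a dependence on $\mu$---through $\nu_l$ in yours, through the compactness extraction in the paper's---so the lemma's phrasing ``depending on $(K,q)$'' should really be read with $\mu$ also fixed; this is harmless downstream.)
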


\sloppy
\begin{proof}
    If \eqref{w-lb} does not hold, then we can find a sequence $\{ m_l \}_{l = 1}^\infty$ increasing to infinity and a sequence of polynomials $\{v_l\}_{l = 1}^\infty \subseteq \mathcal{P}_q^K$ such that $\Vert W_{m_l}[v_l] \Vert_{L^2(\mu)} < \frac{1}{l} \left\Vert v_l \right\Vert_{L^2(\mu)}.$
    Dividing $v_l$ by $\left\Vert v_l \right\Vert_{L^2(\mu)}$, we may assume $\left\Vert v_l \right\Vert_{L^2(\mu)} = 1.$
    Now $\{v \in \mathcal{P}_q^K: \left\Vert v \right\Vert_{L^2(\mu)} = 1 \}$ is compact in the topology induced by the norm $\left\Vert \cdot \right\Vert_{L^2(\mu)} $, due to the Heine-Borel theorem because it is closed and bounded. Thus, we can find a convergent subsequence $\{v_{l_k}\}_{k = 1}^\infty$.
    Denote the limit by $v_\infty$.
    On one hand,
    \begin{align*}
        \left\Vert W_{m_{l_k}} [v_\infty] \right\Vert_{L^2(\mu)} &\leq \left\Vert W_{m_{l_k}} [v_\infty - v_{l_k}] \right\Vert_{L^2(\mu)} + \left\Vert W_{m_{l_k}} [v_{l_k}] \right\Vert_{L^2(\mu)} \leq \left\Vert v_\infty - v_l \right\Vert_{L^2(\mu)} + \frac{1}{l_k} \rightarrow 0.
    \end{align*}
    Here, we used that
    \begin{align}
    \label{contraction}
    \left\Vert W_m[v] \right\Vert_{L^2(\mu)}^2 &= \sum_{i = 1}^{m^{K - 1}} \frac{\left( \int_{B_i} v(\mathbf{z}) d\mu(\mathbf{z}) \right)^2}{\mu(B_i)}
    \leq \sum_{i = 1}^{m^{K - 1}} \int_{B_i} v(\mathbf{z})^2 d\mu(\mathbf{z})
    = \left\Vert v \right\Vert_{L^2(\mu)}^2.
    \end{align}
    On the other hand, $\Vert W_{m_{l_k}}[v_\infty] \Vert_{L^2(\mu)} \rightarrow \Vert v_\infty \Vert_{L^2(\mu)} = 1$
    since $W_{m_{l_k}}[v_\infty] \to v_\infty$ a.e. $\mu$ and $\Vert W_{m_{l_k}}[v_\infty] \Vert_{L^\infty} \leq \Vert v_\infty \Vert_{L^\infty}.$
    The conclusion follows due to the contradiction.
    
\end{proof}
\fussy

     Now we proceed with the proof of Lemma \ref{discretel2multi}.
    Since $a_1$ does not depend on $m$, for sufficiently large $m$,
    we can choose $r$ such that $m / r$ is an integer and $m \geq a_2$.
    Partition $\Delta_{K - 1}$ into $\mathcal{M} := (m / r)^{K - 1}$ simplices as described in Appendix \ref{multibinning} and call them $\tilde{B}_1, \dots, \tilde{B}_{\mathcal{M}}$.
    By this construction, we can ensure that each $\tilde{B}_j$, $j\in [\mathcal{M}]$, consists of $r^{K - 1}$ different simplices $B_i$ (also defined in  Appendix \ref{multibinning}).
    For $j\in [\mathcal{M}]$, let $u_j$ be the $(\lceil s \rceil - 1)$-th order Taylor expansion of $h$ at an arbitrary vertex of $\tilde B_j$.
    Define $u := \sum_{j = 1}^{\mathcal{M}} u_j \mathds{1}_{\tilde{B}_j}$.
    By equation \eqref{taylor},
    \beq\label{hubound}
    |h(\mathbf{z}) - u(\mathbf{z})| \leq L' \operatorname{diam}(\tilde B_1)^s = L' \operatorname{diam}(\Delta_{K - 1})^s \left ( \frac{r}{m} \right)^s  =: b m^{-s}\eeq
    for all $\mathbf{z} \in \Delta_{K - 1}$.
    Therefore, by 
    \eqref{contraction} and
    Lemma \ref{lem:taylor},
    \begin{align*}
        &\left\Vert W_m[h] \right\Vert_{L^2(\mu)} \geq \left\Vert W_m[u] \right\Vert_{L^2(\mu)} - \left\Vert W_m[u - h] \right\Vert_{L^2(\mu)}\\
        &\geq \left\Vert W_m[u] \right\Vert_{L^2(\mu)} - \left\Vert u - h \right\Vert_{L^2(\mu)}
        \geq \left\Vert W_m[u] \right\Vert_{L^2(\mu)} - b m^{-s}.
    \end{align*}
    Note that
    $$\left\Vert W_m[u] \right\Vert_{L^2(\mu)}^2 = \sum_{j = 1}^{\mathcal{M}} \left\Vert W_m[u_j \mathds{1}_{\tilde{B}_j}] \right\Vert_{L^2(\mu)}^2,$$
    and that Lemma \ref{lem:poly} with $q=\lceil s \rceil - 1$
    can be applied to $\tilde B_j$ and its $r^{K - 1}$ sub-simplices to get
    \begin{align*}
        \left\Vert W_m[u_j \mathds{1}_{\tilde{B}_j}] \right\Vert_{L^2(\mu)}^2 \geq a_1^2 \left\Vert u_j \mathds{1}_{\tilde{B}_j} \right\Vert_{L^2(\mu)}^2.
    \end{align*}
    Thus,
    \begin{align*}
        \left\Vert W_m[u] \right\Vert_{L^2(\mu)}^2
        &\geq \sum_{j = 1}^{\mathcal{M}} a_1^2 \left\Vert u_j \mathds{1}_{\tilde{B}_j} \right\Vert_{L^2(\mu)}^2
        = a_1^2 \left\Vert u \right\Vert_{L^2(\mu)}^2.
    \end{align*}
    In conclusion, combining the above inequalities, and by \eqref{hubound}
    \begin{align*}
        &\left\Vert W_m[h] \right\Vert_{L^2(\mu)} \geq a_1\left\Vert u \right\Vert_{L^2(\mu)} - bm^{-s} \geq a_1(\left\Vert h \right\Vert_{L^2(\mu)} - bm^{-s}) - bm^{-s}
        =: b_1 \left\Vert h \right\Vert_{L^2(\mu)} - b_2m^{-s}.
    \end{align*}
This finishes the proof of Lemma \ref{discretel2multi}.

\subsubsection{Proof of Remark \ref{rmk:compositenull}}
\label{sec:proofcompositenull}
\sloppy
We follow the same strategy in Appendix \ref{sec:proofonesamplethm}.
For null distributions $P_0$ and alternative distributions $P_1$ such that $\ell_p\text{-ECE}_{P_0}(f) \leq c_0 n^{-2s / (4s + K - 1)}$ and $\ep:= \ell_p\text{-ECE}_{P_1}(f) \geq c_1 n^{-2s / (4s + K - 1)}$, we show the mean difference $\E_{P_1}[T_{m_*, n}^\text{d}] - \E_{P_0}[T_{m_*, n}^\text{d}]$ is larger than $\V_{P_0}(T_{m_*, n}^\text{d})^{1/ 2}$ and $\V_{P_1}(T_{m_*, n}^\text{d})^{1/2}$.

\fussy

While $\E_{P_0}[T_{m_*, n}^\text{d}] = 0$ under the null hypothesis of perfect calibration, we now have
\begin{align}
\label{eq:compnullmean}
\E_{P_0}[T_{m_*, n}^\text{d}] &= \frac{1}{n} \sum_{i = 1}^{m_*^{K - 1}} \E_{P_0}[I(N_i \geq 1)(N_i - 1)] \Vert \E_{P_0}[\res(Z) \mid Z \in B_i] \Vert^2 \nonumber\\
&\leq (m_*^{-(K - 1)} \wedge m_*^{-2(K - 1)} n ) \sum_{i = 1}^{m_*^{K - 1}} \E_{P_0}[\Vert \res(Z) \Vert^2 \mid Z \in B_i] \nonumber\\
&\leq c_0^2 n^{-\frac{4s}{4s + K - 1}} (1 \wedge m_*^{-(K - 1)} n ).
\end{align}
Therefore,
$$\E_{P_1}[T_{m_*, n}^\text{d}] - \E_{P_0}[T_{m_*, n}^\text{d}] \geq \delta -  c_0^2 n^{-\frac{4s}{4s + K - 1}} (1 \wedge m_*^{-(K - 1)} n ) =: \delta'.$$
By the equation \eqref{sigma},
\begin{align*}
    \V_{P_0}(T_{m_*, n}^\text{d}) \leq \tau^2 + 5K^2 c_0^2 n^{-\frac{4s}{4s + K - 1}} (n^{-1} \wedge m_*^{-(K - 1)}) =: (\tau')^2.
\end{align*}
Similar to \eqref{multiconstraint}, we can choose large enough $c_1 > 0$ such that
$$\frac{\tau'}{\sqrt{\alpha}} + \frac{\sigma}{\sqrt{\beta}} \leq \delta'.$$
The conclusion follows from Chebyshev's inequality as in \eqref{eq:cheby1} and \eqref{eq:cheby2}.

\subsubsection{Proof of Theorem \ref{thm:adaptiveone}}
\label{sec:proofadaptiveone}
By the union bound, for $P \in \mathcal{P}_0$,
$$P(\xi_n^\text{ad} = 1) \leq \sum_{b = 1}^B P\left(\xi_{2^b, n}\left( \frac{\alpha}{B}\right) = 1\right) \leq \sum_{b = 1}^B \frac{\alpha}{B} = \alpha.$$
There exists $b_0 \in \{1,\ldots,B\}$ such that $2^{b_0 - 1} < (n / \sqrt{\log n})^{2 / (4s + K - 1)} \leq 2^{b_0}$.
Let $m_0 = 2^{b_0}$ and repeat the argument in the proof of Theorem \ref{thm:onesamplethm}.
The condition \eqref{multiconstraint} for type II error control is now changed to
$$ \sqrt{\frac{2}{\alpha}} K m_0^{\frac{K - 1}{2}} n^{-1} \sqrt{\log n} + \sqrt{\frac{7}{\beta}} K m_0^{\frac{K - 1}{2}} n^{-1} \leq \frac{1}{4}(b_1^2\ep^2 - 2\sqrt{K}b_1b_2 m_0^{-s} \ep),$$
which is satisfied when $\ep \geq c_\text{ad} ( n / \sqrt{\log n})^{-2s / (4s + K - 1)}$ for a sufficiently large $c_\text{ad} > 0$.
Assuming $\ep \geq c_\text{ad} ( n / \sqrt{\log n})^{-2s / (4s + K - 1)}$ and $P \in \mathcal{P}_1(\ep, p, s)$, we have
$$P(\xi_n^\text{ad} = 1) \geq P(\xi_{m_0, n} = 1) \geq 1 - \beta.$$
This finishes the proof.

\subsubsection{Proof of Proposition \ref{prop:biasmean}}\label{sec:proofbiasmean}

\paragraph{Overview of the proof.}
We repeat the computation in Appendix \ref{sec:proofonesamplethm}.
However, due to the bias term, now the mean difference $\mathbb{E}_{P_1}[T_{m_*, n}^\text{b}] - \mathbb{E}_{P_0}[T_{m_*, n}^\text{b}]$ cannot be lower bounded by a positive number.
Instead, we prove that $\mathbb{E}_{P_0}[T_{m_*, n}^\text{b}] \geq \mathbb{E}_{P_1}[T_{m_*, n}^\text{b}]$ holds for all large enough $n$.

\begin{proof}
We use the same notations as in Appendix \ref{sec:proofonesamplethm}.
Since
\begin{align*}
    \E_{P_0}[T_{m_*, n}^\text{b} \mid \mathbf{Z}] &= \frac{1}{n} \sum_{i \in \mathcal{I}} \frac{1}{N_i} \left[ \sum_{j \in \mathcal{I}_{m_*,i}} \E_{P_0}\left[ \barY_j^2 \mid \mathbf{Z} \right] + \sum_{j_1 \neq j_2 \in \mathcal{I}_{m_*, i}} \E_{P_0}\left[\barY_{j_1} \barY_{j_2} \mid \mathbf{Z} \right] \right]\\
    &= \frac{1}{n} \sum_{i \in \mathcal{I}} \frac{1}{N_i} \sum_{j \in \mathcal{I}_{m_*, i}} \left( Z_j - Z_j^2 \right),
\end{align*}
we have
\begin{equation}
\label{eq:biasnullmean}
\E_{P_0}[T_{m_*, n}^\text{b} \mid \mathbf{N}] = \E_{P_0}[\E_{P_0}[T_{m_*, n}^\text{b} \mid \mathbf{Z}] \mid \mathbf{N}] = \frac{1}{n} \sum_{i \in \mathcal{I}} \E_{P_0}\left[ Z - Z^2 \mid Z \in B_i \right].
\end{equation}
Similarly,
\begin{align*}
    &\E_{P_1}[T_{m_*, n}^\text{b} \mid \mathbf{Z}] = \frac{1}{n} \sum_{i \in \mathcal{I}} \frac{1}{N_i} \left[ \sum_{j \in \mathcal{I}_{m_*, i}} \E_{P_1}\left[ \barY_j^2 \mid \mathbf{Z} \right] + \sum_{j_1 \neq j_2 \in \mathcal{I}_{m_*, i}} \E_{P_1}\left[\barY_{j_1} \barY_{j_2} \mid \mathbf{Z} \right] \right]\\
    &= \frac{1}{n} \sum_{i \in \mathcal{I}} \frac{1}{N_i} \left[ \sum_{j \in \mathcal{I}_{m_*, i}} (\reg(Z_j) - \reg(Z_j)^2 + \res(Z_j)^2) + \sum_{j_1 \neq j_2 \in \mathcal{I}_{m_*, i}} \res(Z_{j_1}) \res(Z_{j_2}) \right],
\end{align*}
and thus
\begin{align}
\label{eq:biasaltmean}
    &\E_{P_1}[T_{m_*, n}^\text{b} \mid \mathbf{N}] = \E_{P_1}[\E_{P_1}[T_{m_*, n}^\text{b} \mid \mathbf{Z}] \mid \mathbf{N}] \\
    =& \frac{1}{n} \sum_{i \in \mathcal{I}} \left(\E_{P_1} \left[ \reg(Z) - \reg(Z)^2 + \res(Z)^2 \mid Z \in B_i \right] + (N_i - 1) \E_{P_1}\left[ \res(Z) \mid Z \in B_i \right]^2 \right).\nonumber 
\end{align}
Since $Z \sim \text{Unif}([0, 1])$ under both $P_0$ and $P_1$, the equations \eqref{eq:biasnullmean} and \eqref{eq:biasaltmean} imply
\begin{align*}
    &\E_{P_0}[T_{m_*, n}^\text{b} \mid \mathbf{N}] - \E_{P_1}[T_{m_*, n}^\text{b} \mid \mathbf{N}]\\
    &= \frac{1}{n} \sum_{i \in \mathcal{I}} \left(  \E[\res(Z) (2Z - 1) \mid Z \in B_i] - (N_i - 1) \E[\res(Z) \mid Z \in B_i]^2 \right)\\
    &\geq  \frac{1}{n} \sum_{i \in \mathcal{I}} \E[\res(Z) (2Z - 1) \mid Z \in B_i] - \rho^2 \left\Vert \zeta \right\Vert_{L^1}^2 m_*^{-2s}.
\end{align*}
Here we used that $\E[\res(Z) \mid Z \in B_i]^2 \leq \rho^2 \Vert \zeta \Vert_{L^1}^2 m_*^{-2s}$ for all $i \in [m_*]$ and $\sum_{i \in \mathcal{I}} (N_i - 1) \leq n$.
Taking total expectation,
\begin{align}
\label{eq:biasmeandiff1}
    &\E_{P_0}[T_{m_*, n}^\text{b}] - \E_{P_1}[T_{m_*, n}^\text{b}] \geq \frac{1}{n} \E \left[ \sum_{i \in \mathcal{I}} \E[\res(Z) (2Z - 1) \mid Z \in B_i]\right] - \rho^2 \left\Vert \zeta \right\Vert_{L^1}^2 m_*^{-2s} \nonumber \\
    &= \frac{1}{n} \sum_{i = 1}^{m_*} P(N_i \geq 1) \E[\res(Z) (2Z - 1) \mid Z \in B_i] - \rho^2 \left\Vert \zeta \right\Vert_{L^1}^2 m_*^{-2s} \nonumber \\
    &= \frac{1}{n} P\left(N_1 \geq 1\right) \sum_{i = 1}^{m_*} \E[\res(Z) (2Z - 1) \mid Z \in B_i] - \rho^2 \left\Vert \zeta \right\Vert_{L^1}^2 m_*^{-2s}.
\end{align}
From \eqref{eq:biasreg}, we see that $\E[\res(Z)(2Z - 1) \mid Z \in B_i] \geq 0$ for all $i \in [m_*]$.
Thus,
\begin{align}
\label{eq:biasmeandiff2}
    &\sum_{i = 1}^{m_*} \E[\res(Z) (2Z - 1) \mid Z \in B_i] \geq \sum_{i = \frac{m_*}{4} + 1}^{\frac{m_*}{8}} \E[\res(Z) (2Z - 1) \mid Z \in B_i] \nonumber\\
    &\geq \frac{1}{4} \sum_{i = \frac{m_*}{4} + 1}^{\frac{m_*}{8}} \E[-\res(Z) \mid Z \in B_i] = \frac{\rho}{32} \left\Vert \zeta \right\Vert_{L^1} m_*^{1 - s}.
\end{align}
Combining \eqref{eq:biasmeandiff1} and \eqref{eq:biasmeandiff2}, we find
\begin{align}
\label{eq:biasmeandiff3}
    \E_{P_0}[T_{m_*, n}^\text{b}] - \E_{P_1}[T_{m_*, n}^\text{b}] \geq  \frac{\rho}{32} P(N_1 \geq 1)  \left\Vert \zeta \right\Vert_{L^1} m_*^{1 - s}n^{-1} - \rho^2 \left\Vert \zeta \right\Vert_{L^1}^2 m_*^{-2s}.
\end{align}
Since $m_* = \lfloor n^{2 / (4s + 1)} \rfloor$ and $\frac{2}{4s + 1} < 1$, we find
$$\lim_{n \rightarrow \infty} P\left(N_1 \geq 1\right) = \lim_{n \rightarrow \infty}  1 - \left(1 - \frac{1}{m_*}\right)^n= 1.$$
Also, we have $m_*^{1 - s}n^{-1} \asymp n^{(1 - 6s) /(4s + 1)}$ and $m_*^{-2s} \asymp n^{-4s / (4s + 1)}$ with $\frac{1 - 6s}{4s + 1} > \frac{-4s}{4s + 1}$.
In conclusion, the RHS of \eqref{eq:biasmeandiff3} is positive for all large enough $n$.

\end{proof}

\subsubsection{Ingter's method}\label{sec:proofingster}
    \begin{lemma}[Ingster's method for the lower bound]
\label{ingster}
    Let $P_0 \in \mathcal{P}_0$ and $P_1, \dots, P_M \in \mathcal{P}_1(\ep, p, s)$ be probability distributions on $\Delta_{K - 1} \times \mathcal{Y}$, and suppose that $P_1, \dots, P_M$ are absolutely continuous with respect to $P_0$.
    For an i.i.d. sample $\{(Z_i, Y_i): i \in \{1, \dots, n\}\}$ from $P_0$,
    define the average likelihood ratio between $P_1, \dots, P_M$ and $P_0$ as
    $$L_n := \frac{1}{M} \sum_{i = 1}^M \prod_{j = 1}^n \frac{dP_i}{dP_0} (Z_j, Y_j).$$
    If $\mathbb{E}_{P_0}[L_n^2] \leq 1 + (1 - \alpha - \beta)^2$, then
    the minimax type II error (false negative rate) for testing $H_0: P \in \mathcal{P}_0$ against $H_1: P \in \mathcal{P}_1(\ep, p, s)$ at level $\alpha$ satisfies
    $R_n(\ep, p, s) \geq \beta$ and
    the minimum separation rate to ensure type II error at most $\beta$ obeys
    $\ep_n(\beta; p, s)\geq \ep$.
\end{lemma}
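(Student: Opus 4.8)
The plan is to run the standard Ingster--Suslina (chi-squared) reduction from minimax testing to a mixture-versus-null comparison. First I would reduce the minimax type~II error to a Bayes risk: for any $\xi\in\Phi_n(\alpha)$, since $P_1,\dots,P_M\in\mathcal{P}_1(\ep,p,s)$ and probabilities are taken under the $n$-fold product,
\begin{equation*}
\sup_{P\in\mathcal{P}_1(\ep,p,s)}P(\xi=0)\ \ge\ \max_{1\le i\le M}P_i^n(\xi=0)\ \ge\ \frac{1}{M}\sum_{i=1}^M P_i^n(\xi=0).
\end{equation*}
Writing $\bar P:=\frac1M\sum_{i=1}^M P_i^n$ for the mixture of the product alternatives, the right-hand side equals $1-\E_{\bar P}[\xi]$, so it remains to show $\E_{\bar P}[\xi]$ cannot exceed $\frac12(1+\alpha-\beta)$.

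Next I would compare $\bar P$ with the product null $P_0^n$. Because $0\le\xi\le1$, in the convention $\|\mu-\nu\|_{\mathrm{TV}}=\frac12\int|d\mu/d\lambda-d\nu/d\lambda|\,d\lambda$ one has $\E_{\bar P}[\xi]-\E_{P_0^n}[\xi]\le\|\bar P-P_0^n\|_{\mathrm{TV}}$, while $\E_{P_0^n}[\xi]\le\alpha$ since $P_0\in\mathcal{P}_0$ and $\xi$ has level $\alpha$. Cauchy--Schwarz then bounds total variation by the chi-square divergence, $\|\bar P-P_0^n\|_{\mathrm{TV}}\le\frac12\sqrt{\chi^2(\bar P\,\|\,P_0^n)}$ with $\chi^2(\bar P\,\|\,P_0^n)=\E_{P_0^n}\big[(d\bar P/dP_0^n)^2\big]-1$. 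The key identification is $d\bar P/dP_0^n=L_n$: since $P_i\ll P_0$, each $P_i^n\ll P_0^n$ with derivative $\prod_{j=1}^n\frac{dP_i}{dP_0}(Z_j,Y_j)$, and averaging over $i$ gives exactly $L_n$. Hence $\chi^2(\bar P\,\|\,P_0^n)=\E_{P_0}[L_n^2]-1\le(1-\alpha-\beta)^2$ by hypothesis, so for every $\xi\in\Phi_n(\alpha)$,
\begin{equation*}
\sup_{P\in\mathcal{P}_1(\ep,p,s)}P(\xi=0)\ \ge\ 1-\alpha-\tfrac12(1-\alpha-\beta)\ =\ \tfrac12(1-\alpha+\beta)\ >\ \beta,
\end{equation*}
the strict inequality using $\beta<1-\alpha$. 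Taking the infimum over $\xi\in\Phi_n(\alpha)$ yields $R_n(\ep,p,s)\ge\frac12(1-\alpha+\beta)>\beta$. For the last claim, I would note that $\ep\mapsto R_n(\ep,p,s)$ is non-increasing because larger $\ep$ shrinks $\mathcal{P}_1(\ep,p,s)$; hence $R_n(\ep',p,s)\ge R_n(\ep,p,s)>\beta$ for all $\ep'\le\ep$, so $\{\ep'>0:R_n(\ep',p,s)\le\beta\}\subseteq(\ep,\infty)$ and therefore $\ep_n(\beta;p,s)\ge\ep$ by definition.

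I do not expect a genuine obstacle in this lemma; it is essentially measure-theoretic bookkeeping once the total-variation/chi-square inequality is invoked. The only points needing attention are (i) fixing the normalization of $\|\cdot\|_{\mathrm{TV}}$ so that $|\E_\mu\xi-\E_\nu\xi|\le\|\mu-\nu\|_{\mathrm{TV}}$ holds for $0\le\xi\le1$, and (ii) reading the hypothesis $\E_{P_0}[L_n^2]$ as an expectation over the full i.i.d.\ sample under $P_0^n$. The genuinely hard part of Ingster's method --- constructing the alternatives $P_1,\dots,P_M$ whose average squared likelihood ratio stays close to unity --- is carried out elsewhere, in the proofs of Proposition~\ref{impossibility} and Theorem~\ref{thm:multi-lower-bound}, not in this lemma.
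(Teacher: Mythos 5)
Your argument is correct and runs the same Ingster--Suslina reduction as the paper: reduce the minimax risk to a Bayes risk against the mixture $\bar P=\frac1M\sum_i P_i^n$, identify $L_n=d\bar P/dP_0^n$, and bound the resulting deficiency term via Cauchy--Schwarz by $\sqrt{\E_{P_0}[L_n^2]-1}$. The one (harmless) deviation is that you route through the total-variation/$\chi^2$ inequality with the factor $\tfrac12$, giving the strictly sharper conclusion $R_n(\ep,p,s)\ge\tfrac12(1-\alpha+\beta)>\beta$; the paper bounds $\left|\E_{P_0}[\xi(L_n-1)]\right|\le\E_{P_0}|L_n-1|$ without the $\tfrac12$ and lands exactly at $R_n\ge\beta$. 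Both establish the lemma, and your strict inequality in fact removes a small edge-case concern in the final step deducing $\ep_n(\beta;p,s)\ge\ep$ from monotonicity.
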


    The proof follows from the results of \cite{ingster1987minimax,ingster2012nonparametric}; see also Lemma G.1 in \cite{kim2020minimax} for a very clear statement.
    By definition, it holds that
    \begin{align*}
        R_n(\ep, p, s) &= \inf_{\xi \in \Phi_n(\alpha)} \sup_{P \in \mathcal{P}_1(\ep, p, s)} \E_P[1 - \xi]
        \geq \inf_{\xi \in \Phi_n(\alpha)} \frac{1}{M} \sum_{i = 1}^M \E_{P_i}[1 - \xi]
        \\
        &= \inf_{\xi \in \Phi_n(\alpha)} \left( \E_{P_0}[1 - \xi] + \frac{1}{M} \sum_{i = 1}^M \E_{P_i}[1 - \xi] - \E_{P_0}[1 - \xi] \right)\\
        &\geq 1 - \alpha + \inf_{\xi \in \Phi_n(\alpha)} \left( \frac{1}{M} \sum_{i = 1}^M \E_{P_i} [1 - \xi] - \E_{P_0} [1 - \xi] \right).
    \end{align*}
    where the last inequality holds because $ \E_{P_0} [\xi] \leq \alpha$.
    Further,
    \begin{align*}
        &\left\vert \frac{1}{M} \sum_{i = 1}^M \E_{P_i}[1 - \xi] - \E_{P_0}[1 - \xi] \right\vert = \left\vert \E_{P_0}[\xi] - \frac{1}{M} \sum_{i = 1}^M \E_{P_i}[\xi] \right\vert= \left\vert \E_{P_0}[\xi] - \E_{P_0}[\xi L_n] \right\vert\\
        &\leq \E_{P_0}[|L_n - 1|]
        \leq \sqrt{\E_{P_0}[L_n^2] - 1} \leq 1 - \alpha - \beta
    \end{align*}
    by a change of variables and the Cauchy-Schwarz inequality.
    Therefore, we have
    $R_n(\ep, p, s) \geq 1 - \alpha - (1 - \alpha - \beta) = \beta$.
    Finally, since $\ep\mapsto R_n(\ep, p, s)$ is non-increasing, we find $\ep_n(p, s) \geq \ep$.

\subsubsection{Proof of Proposition \ref{impossibility}}\label{sec:proofimpossibility}

\paragraph{Overview of the proof.}
We construct distributions $P_1, \dots, P_M$ over $(Z, Y)$ under which the predictor $f$ has an $\ell_p$-ECE of at least $\ep_0 = 0.1$.
We can choose the mis-calibration curves of $P_1, \dots, P_M$ to be orthogonal in $L^2$, so that the cross terms in the expansion of $\E_{P_0}[L_n^2]$ cancel out.
By choosing $M$ sufficiently large, we can ensure that $\E_{P_0}[L_n^2]$ is at most $1 + (1 - \alpha - \beta)^2$.
The conclusion follows from Lemma \ref{ingster}.

\begin{proof}
We prove Proposition \ref{impossibility} for the binary case.
The generalization to the multi-class case follows the same argument and is omitted.
    The construction in this proof is inspired by \cite{ingster1987minimax, ingster2000adaptive, burnashev1979minimax}.
    Let $P_0$ be a null distribution 
    over $(Z, Y) \in [0,1] \times \{0, 1\}$
    defined as follows: the distribution of the predicted probabilities follows
    $Z \stackrel{P_0}{\sim} \text{Unif}([0,1])$ and
    $P_0(Y = 1 \mid Z=z) = z$ for all $z \in [0,1]$.
    Under $P_0$, the probability predictor $f$ is perfectly calibrated.
    For each $i \in [M]$, let
    $$g_i(u) := \begin{cases} u + \sqrt{\frac{u(1 - u)}{3}} \sin\left(2 i \pi(u - \frac{1}{4})\right) & u \in [\frac{1}{4}, \frac{3}{4}], \\ 
    u & u \notin [\frac{1}{4}, \frac{3}{4}],\end{cases}$$
    and define $P_i$ as follows:
    $Z \stackrel{P_i}{\sim} \text{Unif}([0, 1])$
    and $P_i(Y = 1 \mid Z = z)= g_i(z)$ for all $z \in [0,1]$.
  
    It can be verified that $0 \leq g_i(u) \leq 1$ for all $u \in [0, 1]$.
    Since $p \geq 1$, for all $i\in [M]$, the $\ell_p$-ECE of the probability predictor $f$ under $P_i$ is lower bounded as
    \begin{align*}
        \ell_p\text{-ECE}_{P_i}(f) &\geq \ell_1\text{-ECE}_{P_i}(f) = 2\int_0^1 |g_i(u) - u| du\\
        &= 2\int_{\frac{1}{4}}^{\frac{3}{4}} \sqrt{\frac{u(1 - u)}{3}} \left\vert \sin \left(2i\pi\left(u - \frac{1}{4}\right)\right) \right\vert du
        \geq 0.1.
    \end{align*}
    Thus we know that $P_i \in \mathcal{P}_1^\text{cont}(\ep_0, p)$ for all $i \in [M]$.
    Now, observe that
    $$L_n = \frac{1}{M} \sum_{i = 1}^M \prod_{j = 1}^n \frac{dP_i}{dP_0}(Z_j, Y_j) = \frac{1}{M} \sum_{i = 1}^M \prod_{j = 1}^n \frac{1 - Y_j + (2Y_j - 1)g_i(Z_j)}{1 - Y_j + (2Y_j - 1)Z_j}$$
    and thus, for a random variable $(Z, Y) \sim P_0$, and defining $A_{a,b}$ below
    \begin{align*}
        \mathbb{E}_{P_0}[L_n^2] &= \frac{1}{M^2} \sum_{a, b \in [M]} \mathbb{E}_{P_0} \left[ \prod_{j = 1}^n \frac{1 - Y_j + (2Y_j - 1)g_{a}(Z_j)}{1 - Y_j + (2Y_j - 1)Z_j} \cdot \frac{1 - Y_j + (2Y_j - 1)g_{b}(Z_j)}{1 - Y_j + (2Y_j - 1)Z_j} \right]\\
        &= \frac{1}{M^2} \sum_{a, b \in [M]} \mathbb{E}_{P_0}\left[\frac{1 - Y + (2Y - 1)g_{a}(Z)}{1 - Y + (2Y - 1)Z} \cdot \frac{1 - Y + (2Y - 1)g_{b}(Z)}{1 - Y + (2Y - 1)Z} \right]^n\\
        &=: \frac{1}{M^2} \sum_{a, b \in [M]} \mathbb{E}_{P_0}\left[A_{a,b}\right]^n.
    \end{align*}
    In the second line, we have used the independence of the observations.
    If $a = b$, then
    \begin{align*}
        \mathbb{E}_{P_0}\left[A_{a,b}\right]= &\int_0^1 u \frac{g_{a}(u)^2}{u^2} + (1 - u) \frac{(1 - g_{a}(u))^2}{(1 - u)^2} du
        =1 + \int_0^1 \frac{(g_{a}(u) - u)^2}{u(1 - u)} du\\
        = &1 + \int_{\frac{1}{4}}^{\frac{3}{4}} \frac{1}{3} \sin^2\left(2a \pi \left(u - \frac{1}{4}\right)\right) du
        = \frac{13}{12}.
    \end{align*}
    If $a \neq b$, then
    \begin{align*}
        \mathbb{E}_{P_0}\left[A_{a,b}\right]= &\int_0^1 u \frac{g_{a}(u)g_{b}(u)}{u^2} + (1 - u) \frac{(1 - g_{a}(u))(1 - g_{b}(u))}{(1 - u)^2} du\\
        = &1 + \int_0^1 \frac{(g_{a}(u) - u)(g_{b}(u) - u)}{u(1 - u)} du\\
        &= 1 + \int_{\frac{1}{4}}^{\frac{3}{4}} \frac{1}{3} \sin\left(2a \pi \left(u - \frac{1}{4}\right)\right) \sin\left(2b \pi \left(u - \frac{1}{4}\right)\right) du
        = 1.
    \end{align*}
    Therefore,
    $$\mathbb{E}_{P_0}[L_n^2] = \frac{1}{M^2}\left( M \left( \frac{13}{12} \right)^n + (M^2 - M) \right).$$
    Choose a large enough $M \in \mathbb{N}_+$ such that $M \geq [(13/12)^n-1]/(1-\alpha-\beta)^2$.
    Then,
    \begin{align*}
        \frac{1}{M^2}\left( M \left( \frac{13}{12} \right)^n + (M^2 - M) \right) \leq 1 + (1 - \alpha - \beta)^2,
    \end{align*}
    and the result follows by Lemma \ref{ingster}.
    
\end{proof}

\subsubsection{Proof of Theorem \ref{thm:multi-lower-bound}}\label{sec:proofmultilowerbound}
\paragraph{Overview of the proof.}
We construct $m^{K - 1}$ distributions under which the predictor $f$ has an $\ell_p$-ECE of $\Omega(n^{-2s / (4s + K - 1)})$.
The mis-calibration curves are constructed by linearly combining bump functions with disjoint supports.
By properly scaling them, we can guarantee H\"older continuity.
Also, the mis-calibration curves are chosen to be ``almost'' orthogonal in $L^2$, so that the cross terms in the expansion of $\E_{P_0}[L_n^2]$ are small.
We use Lemma \ref{ingster} to conclude.

\begin{proof}
    The  proof is inspired by the lower bound arguments in  \cite{arias2018remember}.
    For $m := \lceil n^{2 / (4s + K - 1)} \rceil$ and $\boldsymbol \eta \in \{ \pm 1 \}^{[m]^{K - 1}}$,
    we define alternative distributions $P_{\boldsymbol \eta} \in \mathcal{P}_1(\ep , p, s)$ with $\ep := c_\textnormal{lower}n^{-2s / (4s + K-1)}$ and use Lemma \ref{ingster} to prove $\ep_n(p, s) \geq c_\text{lower} n^{-2s / (4s + K - 1)}$.
    Let $\zeta: \R \to \R$ be the function from \eqref{eq:zetadef}.
    It can be verified that $\zeta$ is infinitely differentiable and its derivatives of every order are bounded.
    For $\psi = \pi_{-K}:(z_1, \dots, z_K)^\top \mapsto (z_1, \dots, z_{K - 1})^\top$, we see that $[\frac{1}{2K}, \frac{1}{K}]^{K - 1} \subseteq \psi(\Delta_{K - 1} \cap [\frac{1}{2K}, 1]^K)$.
    For each $\mathbf{j} = (j_1,\ldots, j_{K-1})^\top \in [m]^{K-1}$,
    define $\Psi_\mathbf{j}: \R^{K-1} \to \R$ by
    \begin{align}\label{eqn:Psi-def}
        \Psi_{\mathbf{j}}(x_1, \dots, x_{K-1}) := m^{-s}   \prod_{k = 1}^{K-1} \zeta \Big( m(2K x_k - 1)  - j_k + 1 \Big).
    \end{align}
    Then, each $\Psi_\mathbf{j}$ is supported on the cube
    $$\supp (\Psi_\mathbf{j}) = \prod_{k = 1}^{K-1} \left(\frac{j_k - 1 + m}{2Km}, \frac{j_k + m}{2Km}\right).$$
    The sets $\supp(\Psi_\mathbf{j})$ are disjoint for different indices $\mathbf{j} \in [m]^{K-1}$, and we have
    $$\bigcup_{\mathbf{j} \in [m]^{K - 1}} \supp(\Psi_\mathbf{j}) \subseteq \left[ \frac{1}{2K}, \frac{1}{K} \right]^{K - 1} \subseteq \psi\left(\Delta_{K - 1} \cap \left[\frac{1}{2K}, 1\right]^K\right).$$
    
    Let $c_{\alpha, \beta} := \left(\log\left(1 + (1 - \alpha - \beta)^2 \right)  \right)^{1 / 4}$ and
    \beq
    \label{cprime-1}
    \rho := \left(\max_{t\in \{0, \ldots, \lceil s \rceil\} } \left\Vert \zeta^{(t)} \right\Vert_{L^\infty}^{K-1}\right)^{-1} \left( \frac{1}{2K} \wedge \frac{L (2K)^{-\lceil s \rceil}}{2\sqrt{K - 1}} \wedge \frac{L (2K)^{-\lceil s \rceil + 1}}{4} \wedge \frac{c_{\alpha, \beta} (2K)^{\frac{K - 1}{2}}}{2\sqrt{K!}}\right).
    \eeq
    By the definition of $\rho$ in \eqref{cprime-1}, we see that
    \begin{equation}
    \label{multilower-1}
        \rho \left\Vert \zeta \right\Vert_{L^\infty}^{K - 1} \leq \frac{1}{2K},
    \end{equation}
    \begin{equation}
    \label{multilower-2}
        \rho \sqrt{K - 1} (2K)^{\lceil s \rceil} \left(\max_{t\in \{0, \ldots, \lceil s \rceil\}} \left\Vert \zeta^{(t)} \right\Vert_{L^\infty}^{K-1}\right) \leq \frac{L}{2},
    \end{equation}
    \begin{equation}
    \label{multilower-3}
        2\rho  (2K)^{\lceil s \rceil - 1}\left(\max_{t\in \{0, \ldots, \lceil s \rceil- 1\} } \left\Vert \zeta^{(t)} \right\Vert_{L^\infty}^{K-1}\right) \leq \frac{L}{2},
    \end{equation}
    and
    \begin{equation}
    \label{multilower-4}
        4\rho^2 K! (2K)^{-K + 1} \left\Vert \zeta \right\Vert_{L^2}^{2(K-1)} \leq c_{\alpha, \beta}^2 \leq 1 .
    \end{equation}
    
    For each $\boldsymbol \eta \in \{\pm 1\}^{[m]^{K-1}}$, define $g_{\boldsymbol \eta}: \Delta_{K - 1} \to \R^{K}$ by
    $$g_{\boldsymbol \eta}(\mathbf{z}) := \mathbf{z} + \rho  \left( \sum_{\mathbf{j} \in [m]^{K-1}} {\boldsymbol \eta}_{\mathbf{j}} \left(\Psi_\mathbf{j} \circ \psi\right)(\mathbf{z}) \right) (1, -1, 0, \dots, 0)^\top.$$
    Then we have $g_{\boldsymbol \eta}(\Delta_{K - 1}) \subseteq \Delta_{K - 1}$ for all $\boldsymbol \eta \in \{\pm 1\}^{[m]^{K - 1}}$.
    This is because
    \begin{align*}
    \sum_{k = 1}^K [g_{\boldsymbol \eta}(\mathbf{z})]_k = \sum_{k = 1}^K [\mathbf{z}]_k + \rho \left( \sum_{\mathbf{j} \in [m]^{K-1}} {\boldsymbol \eta}_{\mathbf{j}} \left(\Psi_{\mathbf{j}} \circ \psi\right)(\mathbf{z}) \right) - \rho \left( \sum_{\mathbf{j} \in [m]^{K-1}} {\boldsymbol \eta}_{\mathbf{j}} \left(\Psi_{\mathbf{j}} \circ \psi\right)(\mathbf{z}) \right) = 1
    \end{align*}
    for all $\mathbf{z} \in \Delta_{K - 1}$ and
    \begin{align*}
            [g_{\boldsymbol \eta}(\mathbf{z})]_k &= \begin{cases}[\mathbf{z}]_k \pm \rho \sum_{\mathbf{j} \in [m]^{K - 1}} \boldsymbol \eta_{\mathbf{j}} (\Psi_\mathbf{j} \circ \psi)(\mathbf{z}) \geq \frac{1}{2K} -\rho m^{-s} \left\Vert \zeta \right\Vert_{L^\infty}^{K-1},  \\ &\hspace{-50pt} \text{if } \mathbf{z} \in [\frac{1}{2K}, 1]^K, k \in \{1, 2\},\\  [\mathbf{z}]_k, &\hspace{-50pt} \text{otherwise.} \end{cases}\\
            &\geq 0
    \end{align*}
    for all $\mathbf{z} \in \Delta_{K - 1}$ and $k \in \{1,\ldots,K\}$ by \eqref{multilower-1}.
        
     Next, we claim that each coordinate function of the mapping $\mathbf{z} \mapsto g_{\boldsymbol \eta}(\mathbf{z}) - \mathbf{z}$ belongs to $\mathcal{H}_K(s, L)$, i.e., for all multi-indices $\boldsymbol \gamma \in \mathbb{N}^{K-1}$ with $|\boldsymbol \gamma| = \lceil s \rceil - 1$ and $\mathbf{x}_1, \mathbf{x}_2 \in \psi(\Delta_{K - 1})$,
    \begin{equation}
    \label{resholder}
    \rho\left\vert \sum_{\mathbf{j} \in [m]^{K - 1}} {\boldsymbol \eta}_\mathbf{j} \left( \Psi_\mathbf{j}^{(\boldsymbol \gamma)}(\mathbf{x}_1) - \Psi_\mathbf{j}^{(\boldsymbol \gamma)}(\mathbf{x}_2) \right) \right\vert \leq L \left\Vert \mathbf{x}_1 - \mathbf{x}_2 \right\Vert^{s - \lceil s \rceil + 1}.
    \end{equation}
    If $m \left\Vert \mathbf{x}_1 - \mathbf{x}_2\right\Vert \leq 1$, then from the mean value theorem
    \begin{align}\label{eqn:gvuoqwndas}
        \rho \left\vert \Psi_\mathbf{j}^{(\boldsymbol \gamma)}(\mathbf{x}_1) - \Psi_\mathbf{j}^{(\boldsymbol \gamma)}(\mathbf{x}_2) \right\vert &\leq \rho \sqrt{K - 1} \max_{\substack{\boldsymbol \gamma \in \mathbb{N}^{K - 1}\\|\boldsymbol \gamma'| = \lceil s \rceil}} \left\Vert \Psi_\mathbf{j}^{(\boldsymbol \gamma')} \right\Vert_{L^\infty} \left\Vert \mathbf{x}_1 - \mathbf{x}_2 \right\Vert.
    \end{align}
    By the definition of $\Psi_\mathbf{j}$ in  \eqref{eqn:Psi-def}, for any $\boldsymbol \gamma' = (\gamma_1', \dots, \gamma_{K - 1}')^\top \in \mathbb{N}^{K - 1}$ with $|\boldsymbol \gamma'| = \lceil s \rceil$,
    \begin{align}\label{eqn:viowejfa}
        \left\Vert \Psi_\mathbf{j}^{(\boldsymbol \gamma^\prime)} \right\Vert_{L^\infty} = m^{|\boldsymbol{\gamma}^\prime|-s}(2K)^{|\boldsymbol{\gamma}^\prime|}\prod_{k=1}^{K-1}\left\Vert \zeta^{(\gamma'_k)} \right\Vert_{L^\infty} \leq  m^{\lceil s\rceil -s}(2K)^{\lceil s\rceil}\max_{t\in \{0, \ldots, \lceil s \rceil\}} \left\Vert \zeta^{(t)} \right\Vert_{L^\infty}^{K-1}.
    \end{align}
    Plugging \eqref{eqn:viowejfa} into \eqref{eqn:gvuoqwndas}, and using \eqref{multilower-2}, we reach 
    \begin{align*}
        \rho \left\vert \Psi_\mathbf{j}^{(\boldsymbol \gamma)}(\mathbf{x}_1) - \Psi_\mathbf{j}^{(\boldsymbol \gamma)}(\mathbf{x}_2) \right\vert
        &\leq\rho \sqrt{K - 1} m^{\lceil s \rceil - s} (2K)^{\lceil s \rceil} \left(\max_{t\in \{0, \ldots, \lceil s \rceil\}} \left\Vert \zeta^{(t)} \right\Vert_{L^\infty}^{K-1}\right)  \left\Vert \mathbf{x}_1 - \mathbf{x}_2 \right\Vert\\
        &\leq \frac{L}{2} m^{\lceil s \rceil - s} \left\Vert \mathbf{x}_1 - \mathbf{x}_2 \right\Vert 
        \leq \frac{L}{2} \left\Vert \mathbf{x}_1 - \mathbf{x}_2 \right\Vert^{s - \lceil s \rceil + 1}.
    \end{align*}
    If $m \left\Vert \mathbf{x}_1 - \mathbf{x}_2 \right\Vert > 1$, using \eqref{multilower-3}, we similarly get
        \begin{align*}
        &\rho \left\vert \Psi_\mathbf{j}^{(\boldsymbol \gamma)}(\mathbf{x}_1) - \Psi_{\mathbf{j}}^{(\boldsymbol \gamma)}(\mathbf{x}_2) \right\vert \leq 2\rho \max_{|\boldsymbol \gamma'| = \lceil s \rceil - 1} \left\Vert \Psi_\mathbf{j}^{(\boldsymbol \gamma')} \right\Vert_{L^\infty}\\
        &\leq 2\rho m^{\lceil s \rceil - s - 1} (2K)^{\lceil s \rceil - 1} \left(\max_{t\in \{0, \ldots, \lceil s \rceil- 1\} } \left\Vert \zeta^{(t)} \right\Vert_{L^\infty}^{K-1}\right) \leq \frac{L}{2} m^{\lceil s \rceil - s - 1}
        \leq \frac{L}{2} \left\Vert \mathbf{x}_1 - \mathbf{x}_2 \right\Vert^{s - \lceil s \rceil + 1}. 
    \end{align*}
     Thus, for any $\mathbf{j} \in [m]^{K - 1}$ and $\mathbf{x}_1,\mathbf{x}_2\in \psi(\Delta_{K-1})$, it holds that
    \begin{equation}
    \label{singleholder}
        \rho\left\vert \Psi_\mathbf{j}^{(\boldsymbol \gamma)}(\mathbf{x}_1) - \Psi_\mathbf{j}^{(\boldsymbol \gamma)}(\mathbf{x}_2) \right\vert \leq \frac{L}{2} \left\Vert \mathbf{x}_1 - \mathbf{x}_2 \right\Vert^{s - \lceil s \rceil + 1}.
    \end{equation}
    
    Given $\mathbf{x}_1, \mathbf{x}_2 \in \psi(\Delta_{K - 1})$, there can be two cases: (1) there exists $\mathbf{j}_1 \in [m]^{K - 1}$ such that
    $$\rho\left( \sum_{\mathbf{j} \in [m]^{K-1}} \boldsymbol \eta_{\mathbf{j}} \left( \Psi_\mathbf{j}^{(\boldsymbol \gamma)} (\mathbf{x}_1) - \Psi_\mathbf{j}^{(\boldsymbol \gamma)} (\mathbf{x}_2) \right) \right) = \rho \boldsymbol \eta_{\mathbf{j}_1} \left( \Psi_{\mathbf{j}_1}^{(\boldsymbol \gamma)}(\mathbf{x}_1) - \Psi_{\mathbf{j}_1}^{(\boldsymbol \gamma)}(\mathbf{x}_2) \right);$$
    or (2) there exist distinct $\mathbf{j}_1, \mathbf{j}_2 \in [m]^{K - 1}$ such that $\mathbf{x}_1 \in \supp(\Psi_{\mathbf{j}_1})$ and $\mathbf{x}_2 \in \supp(\Psi_{\mathbf{j}_2})$.
    In the first case, \eqref{resholder} directly follows from \eqref{singleholder}.
    In the second case, choose a point $\mathbf{x}_3$ on the line segment connecting $\mathbf{x}_1$ and $\mathbf{x}_2$ such that $\mathbf{x}_3 \notin \supp(\Psi_{\mathbf{j}_1}) \cup \supp(\Psi_{\mathbf{j}_2})$.
    Such a point exists since $\supp(\Psi_{\mathbf{j}_1})$, $\supp(\Psi_{\mathbf{j}_2})$ are open and $\supp(\Psi_{\mathbf{j}_1}) \cap \supp(\Psi_{\mathbf{j}_2}) = \varnothing$.
    For any $\boldsymbol \gamma \in \mathbb{N}^{K - 1}$ with $|\boldsymbol \gamma| = \lceil s \rceil - 1$, we have
        \begin{align*}
        &\rho \left\vert \sum_{\mathbf{j} \in [m]^{K-1}} \boldsymbol \eta_\mathbf{j} \left( \Psi_\mathbf{j}^{(\boldsymbol \gamma)}(\mathbf{x}_1) - \Psi_\mathbf{j}^{(\boldsymbol \gamma)}(\mathbf{x}_2) \right) \right\vert =\rho\left\vert \boldsymbol \eta_{\mathbf{j}_1} \Psi_{\mathbf{j}_1}^{(\boldsymbol \gamma)}(\mathbf{x}_1) - \boldsymbol \eta_{\mathbf{j}_2} \Psi_{\mathbf{j}_2}^{(\boldsymbol \gamma)}(\mathbf{x}_2) \right\vert\\
        &=\rho\left\vert \boldsymbol \eta_{\mathbf{j}_1} \Psi_{\mathbf{j}_1}^{(\boldsymbol \gamma)}(\mathbf{x}_1) -\boldsymbol \eta_{\mathbf{j}_1} \Psi_{\mathbf{j}_1}^{(\boldsymbol \gamma)}(\mathbf{x}_3) + \boldsymbol \eta_{\mathbf{j}_2}\Psi_{\mathbf{j}_2}^{(\boldsymbol \gamma)}(\mathbf{x}_3) -\boldsymbol \eta_{\mathbf{j}_2} \Psi_{\mathbf{j}_2}^{(\boldsymbol \gamma)}(\mathbf{z}_2) \right\vert\\
        &\leq  \rho \left\vert \Psi_{\mathbf{j}_1}^{(\boldsymbol \gamma)}(\mathbf{x}_1) - \Psi_{\mathbf{j}_1}^{(\boldsymbol \gamma)}(\mathbf{x}_3) \right\vert + \rho \left\vert \Psi_{\mathbf{j}_2}^{(\boldsymbol \gamma)}(\mathbf{x}_3) - \Psi_{\mathbf{j}_2}^{(\boldsymbol \gamma)}(\mathbf{x}_2) \right\vert\\
        &\leq \frac{L}{2} \left\Vert \mathbf{x}_1 - \mathbf{x}_3 \right\Vert^{s - \lceil s \rceil + 1} + \frac{L}{2} \left\Vert \mathbf{x}_3 - \mathbf{x}_2 \right\Vert^{s - \lceil s \rceil + 1}
        \leq L\left\Vert \mathbf{x}_1 - \mathbf{x}_2 \right\Vert^{s - \lceil s \rceil + 1}. 
    \end{align*}
    The second inequality holds because of \eqref{singleholder}, and the last inequality holds because $\Vert \mathbf{x}_1 - \mathbf{x}_3 \Vert, \Vert \mathbf{x}_3 - \mathbf{x}_2 \Vert \leq \Vert \mathbf{x}_1 - \mathbf{x}_2 \Vert$ and $s - \lceil s \rceil + 1 > 0$.
    This finishes the proof of \eqref{resholder}.
    
    Now, let $P_0$ and $P_{\boldsymbol \eta}$, $\boldsymbol \eta \in \{\pm 1\}^{[m]^{K-1}}$, be the distributions of $(Z, Y) \in \Delta_{K-1} \times \mathcal{Y}$ characterized by
    $$Z \stackrel{P_0}{\sim} \text{Unif}(\Delta_{K - 1}), \quad Y \mid Z = \mathbf{z} \stackrel{P_0}{\sim} \text{Cat}(\mathbf{z}) \text{ for all } \mathbf{z} \in \Delta_{K - 1},$$
    and
    $$Z \stackrel{P_{\boldsymbol \eta}}{\sim} \text{Unif}(\Delta_{K - 1}), \quad Y \mid Z = \mathbf{z} \stackrel{P_{\boldsymbol \eta}}{\sim} \text{Cat}(g_{\boldsymbol \eta}(\mathbf{z})) \text{ for all } \mathbf{z} \in \Delta_{K - 1}.$$
    We have $P_0 \in \mathcal{P}_0$ by definition.
    For $Z \sim \text{Unif}(\Delta_{K - 1})$ and $\mathbf{j}_0 := \mathbf{1}_{K - 1} \in [m]^{K - 1}$,
    \begin{align*}
        \ell_p\text{-ECE}_{P_{\boldsymbol \eta}}(f)^p &= \E\left[ \sum_{k = 1}^K \left\vert [g_{\boldsymbol \eta}(Z) - Z]_k \right\vert^p \right]
        = 2\rho^p \E \left[ \left\vert \sum_{\mathbf{j} \in [m]^{K-1}} \boldsymbol \eta_\mathbf{j} (\Psi_\mathbf{j} \circ \psi)(Z) \right\vert^p \right]\\
        &= 2 (K-1)! \rho^p m^{K-1} \int_{\R^{K - 1}} |\Psi_{\mathbf{j}_0}(\mathbf{x})|^p d\mathbf{x}.
    \end{align*}
    Further,
    \begin{align}
    \label{bumplp}
        m^{K-1} \int_{\R^{K - 1}} |\Psi_{\mathbf{j}_0}(\mathbf{x})|^p d\mathbf{x} &= m^{-ps} \prod_{k = 1}^{K-1} \left( m \int_\R \left\vert \zeta\left( m(2 K x_k - 1) \right) \right\vert^p dx_k \right) \nonumber \\
        &= m^{-ps} (2K)^{-K + 1} \left\Vert \zeta \right\Vert_{L^p}^{(K-1)p}.
    \end{align}
    Thus, we have
    \begin{equation}
    \label{alternative-ece}
        \ell_p\text{-ECE}_{P_{\boldsymbol \eta}}(f) = (2(2K)^{-K + 1} (K-1)!)^\frac{1}{p} \rho m^{-s} \left\Vert \zeta \right\Vert_{L^p}^{K-1} \geq c_\textnormal{lower} n^{-\frac{2s}{4s + K-1}}
    \end{equation}
    for some $c_\textnormal{lower} > 0$ because
    $$\lim_{n \rightarrow \infty} \frac{m^{-s}}{n^{-\frac{2s}{4s + K-1}}} = \lim_{n \to \infty} \left( \lceil n^{\frac{2}{4s + K-1}} \rceil \right)^{-s} n^{-\frac{2s}{4s + K-1}} = 1.$$
    From \eqref{resholder} and \eqref{alternative-ece}, we see that $P_{\boldsymbol \eta} \in \mathcal{P}_1(\ep, p, s)$ with $\ep = c_\textnormal{lower}n^{-2s / (4s + K-1)}$ for all $\boldsymbol \eta \in \{\pm 1\}^{[m]^{K-1}}$.
    
    The final step is to apply Lemma \ref{ingster}.
    Given $n$ i.i.d. observations $\{(Z_i, Y_i): i \in [n]\}$, the average likelihood ratio between $P_{\boldsymbol \eta}$, ${\boldsymbol \eta} \in \{\pm 1\}^{[m]^{K-1}}$, and $P_0$ is
    $$L_n = \frac{1}{2^{m^{K-1}}} \sum_{\boldsymbol \eta \in \{\pm 1\}^{[m]^{K-1}}} \prod_{i = 1}^n \frac{[g_{\boldsymbol \eta}(Z_i)]_{\argmax_k [Y_i]_k}}{[Z_i]_{\argmax_k [Y_i]_k}}.$$
    Let $\boldsymbol \eta^1$, $\boldsymbol \eta^2$ be independent random variables uniformly drawn from $\{\pm 1\}^{[m]^{K-1}}$, and $(Z, Y) \sim P_0$.
    Then,
    \begin{align*}
        \E_{P_0}[L_n^2] &= \E_{\boldsymbol \eta^1, \boldsymbol \eta^2} \E_{P_0} \prod_{i = 1}^n \frac{[g_{\boldsymbol \eta^1}(Z_i)]_{\argmax_k [Y_i]_k}}{[Z_i]_{\argmax_k [Y_i]_k}} \cdot \frac{[g_{\boldsymbol \eta^2}(Z_i)]_{\argmax_k [Y_i]_k}}{[Z_i]_{\argmax_k [Y_i]_k}}\\
        &= \E_{\boldsymbol \eta^1, \boldsymbol \eta^2} \left( \E_{P_0} \frac{[g_{\boldsymbol \eta^1}(Z)]_{\argmax_k [Y]_k} [g_{\boldsymbol \eta^2}(Z)]_{\argmax_k [Y]_k}}{[Z]_{\argmax_k [Y]_k}^2} \right)^n.
    \end{align*}
    Moreover,
    \begin{align*}
        &\E_{P_0} \frac{[g_{\boldsymbol \eta^1}(Z)]_{\argmax_k [Y]_k} [g_{\boldsymbol \eta^2}(Z)]_{\argmax_k [Y]_k}}{[Z]_{\argmax_k [Y]_k}^2}
        = \int_{\Delta_{K - 1}} \left( \sum_{k = 1}^K [\mathbf{z}]_k \frac{[g_{\boldsymbol \eta^1}(\mathbf{z})]_k [g_{\boldsymbol \eta^2}(\mathbf{z})]_k}{[\mathbf{z}]_k^2}\right) d\mathbf{z}\\
        &= 1 + \int_{\Delta_{K - 1}} \left( \sum_{k = 1}^K \frac{[g_{\boldsymbol \eta^1}(\mathbf{z}) - \mathbf{z}]_k [g_{\boldsymbol \eta^2}(\mathbf{z}) - \mathbf{z}]_k}{[\mathbf{z}]_k} \right) d\mathbf{z}\\
        &= 1 + \rho^2 \int_{\Delta_{K - 1}} \left( \sum_{\mathbf{j} \in [m]^{K-1}} \boldsymbol \eta^1_\mathbf{j} (\Psi_\mathbf{j} \circ \psi)(\mathbf{z}) \right)\left( \sum_{\mathbf{j} \in [m]^{K-1}} \boldsymbol \eta^2_\mathbf{j} (\Psi_\mathbf{j} \circ \psi)(\mathbf{z}) \right) \left(\frac{1}{[\mathbf{z}]_0} + \frac{1}{[\mathbf{z}]_1} \right)d\mathbf{z}
    \end{align*}
    where the integral $\int_{\Delta_{K - 1}}$ is over $\text{Unif}(\Delta_{K - 1})$.
    Since $\{\supp(\Psi_\mathbf{j}): \mathbf{j} \in [m]^{K - 1} \}$ is a collection of pairwise disjoint sets,
    \begin{align*}
        &\int_{\Delta_{K - 1}} \left( \sum_{\mathbf{j} \in [m]^{K-1}} \boldsymbol \eta^1_\mathbf{j} (\Psi_\mathbf{j} \circ \psi)(\mathbf{z}) \right)\left( \sum_{\mathbf{j} \in [m]^{K-1}} \boldsymbol \eta^2_\mathbf{j} (\Psi_\mathbf{j} \circ \psi)(\mathbf{z}) \right) \left(\frac{1}{[\mathbf{z}]_0} + \frac{1}{[\mathbf{z}]_1} \right)d\mathbf{z}\\
        &= \int_{\Delta_{K - 1}} \left( \sum_{\mathbf{j} \in [m]^{K-1}} \boldsymbol \eta^1_\mathbf{j} \boldsymbol \eta^2_\mathbf{j} (\Psi_\mathbf{j} \circ \psi)(\mathbf{z})^2 \right) \left( \frac{1}{[\mathbf{z}]_0} + \frac{1}{[\mathbf{z}]_1} \right) d\mathbf{z}.
    \end{align*}
    The random variable $(\boldsymbol \eta^1_\mathbf{j} \boldsymbol \eta^2_\mathbf{j})_{\mathbf{j} \in [m]^{K - 1}}$ is also uniformly distributed on $\{\pm 1\}^{[m]^{K - 1}}$.
    Hence,
    \begin{align}
    \label{eq:lnsquare}
        \E_{P_0}[L_n^2] &= \E_{\boldsymbol \eta^1} \left(1 + \rho^2  \sum_{\mathbf{j} \in [m]^{K - 1}} \boldsymbol \eta^1_\mathbf{j}  \int_{\Delta_{K - 1}}  (\Psi_\mathbf{j} \circ \psi)(\mathbf{z})^2  \left(\frac{1}{[\mathbf{z}]_0} + \frac{1}{[\mathbf{z}]_1} \right) d\mathbf{z} \right)^n.
    \end{align}
    Since $\bigcup_{\mathbf{j} \in [m]^{K - 1}} \supp(\Psi_\mathbf{j}) \subseteq \psi(\Delta_{K - 1} \cap [\frac{1}{2K}, 1]^{K})$,
    we have $\frac{1}{[\mathbf{z}]_0} + \frac{1}{[\mathbf{z}]_1}\leq 4K$ for every $\mathbf{z} \in \Delta_{K - 1}$ such that $(\Psi_\mathbf{j} \circ \psi)(\mathbf{z}) \neq 0$ for at least one $\mathbf{j} \in [m]^{K - 1}$.
    Thus,
    \begin{align}
    \label{eq:sumbound}
        &\left\vert \rho^2  \sum_{\mathbf{j} \in [m]^{K-1}} \boldsymbol \eta_\mathbf{j}  \int_{\Delta_{K - 1}}  (\Psi_\mathbf{j} \circ \psi)(\mathbf{z})^2  \left(\frac{1}{[\mathbf{z}]_0} + \frac{1}{[\mathbf{z}]_1} \right) d\mathbf{z} \right\vert \nonumber \\
        &\leq \rho^2 \sum_{\mathbf{j} \in [m]^{K-1}} \left\vert \int_{\Delta_{K - 1}} (\Psi_\mathbf{j} \circ \psi)(\mathbf{z})^2  \left(\frac{1}{[\mathbf{z}]_0} + \frac{1}{[\mathbf{z}]_1} \right) d\mathbf{z} \right\vert \nonumber\\
        &\leq \rho^2 \sum_{\mathbf{j} \in [m]^{K-1}} 4K \left\vert \int_{\Delta_{K - 1}} (\Psi_\mathbf{j} \circ \psi)(\mathbf{z})^2 d\mathbf{z} \right\vert
        = 4 \rho^2 K! m^{K-1} \int_{\R^{K - 1}} \Psi_{\mathbf{j}_0}(\mathbf{x})^2 d\mathbf{x}.
    \end{align}
    The last equality is because $\text{Unif}(\Delta_{K - 1})$ has density $(K - 1)!$ with respect to $\text{Leb}_{K - 1}$, when projected to $\R^{K - 1}$.
    Also, by \eqref{multilower-4} and \eqref{bumplp},
    \begin{align*}
    \label{eq:sumbound2}
         4 \rho^2 K! m^{K-1} \int_{\R^{K - 1}} \Psi_{\mathbf{j}_0}(\mathbf{x})^2 d\mathbf{x} &= 4\rho^2 K! m^{-2s} (2K)^{-K + 1} \left\Vert \zeta \right\Vert_{L^2}^{2(K-1)} \leq c_{\alpha, \beta}^2 m^{-2s} \leq 1.
    \end{align*}
    By \eqref{eq:lnsquare} and that $(1 + x)^n \leq \exp(nx)$ for all $x \in (-1, 1]$,
    \begin{align}
        \E_{P_0}[L_n^2] &\leq \E_{\boldsymbol \eta^1} \exp \left( n \rho^2 \sum_{\mathbf{j} \in [m]^{K-1}} \boldsymbol \eta^1_\mathbf{j}  \int_{\Delta_{K - 1}}  (\Psi_\mathbf{j} \circ \psi)(\mathbf{z})^2  \left(\frac{1}{[\mathbf{z}]_0} + \frac{1}{[\mathbf{z}]_1} \right) d\mathbf{z}  \right).
    \end{align}

    Since $\{\boldsymbol \eta^1_\mathbf{j} : \mathbf{j} \in [m]^{K - 1}\}$ is a set of i.i.d. random variables drawn from $\text{Unif}(\{ \pm 1\})$, we have, with $\cosh (x) := [\exp(x)+ \exp(-x)]/2$,
    \begin{align*}
        &\E_{\boldsymbol \eta^1} \exp \left( n \rho^2 \sum_{\mathbf{j} \in [m]^{K-1}} \boldsymbol \eta^1_\mathbf{j}  \int_{\Delta_{K - 1}}  (\Psi_\mathbf{j} \circ \psi)(\mathbf{z})^2  \left(\frac{1}{[\mathbf{z}]_0} + \frac{1}{[\mathbf{z}]_1} \right) d\mathbf{z}  \right)\\
        &= \prod_{\mathbf{j} \in [m]^{K-1}} \E_{\boldsymbol \eta^1_\mathbf{j}} \exp \left(  n \rho^2 \boldsymbol \eta^1_\mathbf{j}  \int_{\Delta_{K - 1}}  (\Psi_\mathbf{j} \circ \psi)(\mathbf{z})^2  \left(\frac{1}{[\mathbf{z}]_0} + \frac{1}{[\mathbf{z}]_1} \right) d\mathbf{z} \right)\\
        &= \prod_{\mathbf{j} \in [m]^{K-1}} \cosh \left(  n \rho^2 \int_{\Delta_{K - 1}}  (\Psi_\mathbf{j} \circ \psi)(\mathbf{z})^2  \left(\frac{1}{[\mathbf{z}]_0} + \frac{1}{[\mathbf{z}]_1} \right) d\mathbf{z} \right).
    \end{align*}
    Similarly to \eqref{eq:sumbound} and \eqref{eq:sumbound2}, we have
    \begin{align}
    \label{eq:summandbound}
        \left\vert n \rho^2 \int_{\Delta_{K - 1}}  (\Psi_\mathbf{j} \circ \psi)(\mathbf{z})^2  \left(\frac{1}{[\mathbf{z}]_0} + \frac{1}{[\mathbf{z}]_1} \right) d\mathbf{z} \right\vert \leq c_{\alpha, \beta}^2 m^{-2s - K + 1} n \leq 1
    \end{align}
    for each $\mathbf{j} \in [m]^{K-1}$.
    Using that $\cosh(x) \leq 1 + x^2 \leq e^{x^2}$ for $x \in [-1, 1]$,
    \begin{align*}
        &\prod_{\mathbf{j} \in [m]^{K-1}} \cosh \left(  n \rho^2 \int_{\Delta_{K - 1}}  (\Psi_\mathbf{j} \circ \psi)(\mathbf{z})^2  \left(\frac{1}{[\mathbf{z}]_0} + \frac{1}{[\mathbf{z}]_1} \right) d\mathbf{z} \right)\\
        &\leq \exp \left( \sum_{\mathbf{j} \in [m]^{K-1}} \left( n \rho^2 \int_{\Delta_{K - 1}}  (\Psi_\mathbf{j} \circ \psi)(\mathbf{z})^2  \left(\frac{1}{[\mathbf{z}]_0} + \frac{1}{[\mathbf{z}]_1} \right) d\mathbf{z}  \right)^2 \right).
    \end{align*}
    Again from \eqref{eq:summandbound}, it follows that
    \begin{align*}
        &\exp \left( \sum_{\mathbf{j} \in [m]^{K-1}} \left( n \rho^2 \int_{\Delta_{K - 1}}  (\Psi_\mathbf{j} \circ \psi)(\mathbf{z})^2  \left(\frac{1}{[\mathbf{z}]_0} + \frac{1}{[\mathbf{z}]_1} \right) d\mathbf{z}  \right)^2 \right)\\
        &\leq \exp( c_{\alpha, \beta}^4 m^{-4s - K + 1} n^2)
        \leq 1 + (1 - \alpha - \beta)^2.
    \end{align*}
    In conclusion, $\ep_n(p, s) \geq c_\textnormal{lower} n^{-2s / (4s + K - 1)}$ by Lemma \ref{ingster}.
    
\end{proof}

\subsubsection{Proof of Theorem \ref{thm:reductiontest}}\label{sec:proofreductiontest}

\paragraph{Overview of the proof.}
Under $P \in \mathcal{P}_0$, we prove that $T_{1, k}$ and $T_{2, k}$ have zero mean, and their variances are bounded by unity.
By rejecting $H_0$ when $|T_{1, k}| \geq \sqrt{3K / \alpha n}$ or $|T_{2, k}| \geq \sqrt{3K / \alpha n}$, we can filter out distributions $P \in \mathcal{P}_1(\ep, p, s)$ such that $\E_P [T_{1, k}] = \Omega(n^{-1 / 2})$ or $\E_P [T_{2, k}] = \Omega(n^{-1 / 2})$.
For the remaining cases, we compute $\Vert \pi_k^\mathcal{V} - \pi_k^\mathcal{W} \Vert_{L^2(P_Z)}$ and show it is lower bounded by $\Omega(n^{-2s / (4s + K - 1)})$.
We conclude using the minimax optimality of the two-sample test $\texttt{TS}$.

\begin{proof}
We prove the theorem for $p = 2$.
Then, the general case follows since $\mathcal{P}_1(\ep, p, s) \subseteq \mathcal{P}_1(\ep, 2, s)$ for all $p \leq 2$.
    Assume $P \in \mathcal{P}_0$.
    By the union bound,
    \begin{align*}
        &P(\xi_n^\textnormal{split} = 1)
        \leq \sum_{k = 1}^K \left[ P\left( |T_{1, k}| \geq \sqrt{\frac{3K}{\alpha n}} \right) + P\left( |T_{2, k}| \geq \sqrt{\frac{3K}{\alpha n}} \right) + P\left(\texttt{TS}_{\frac{\alpha}{3K}, \frac{\beta}{2}}(\mathcal{V}_k, \mathcal{W}_k) = 1 \right)\right].
    \end{align*}
    Moreover, for all $k\in \{1,\ldots,K\}$,
    $$\E_P[Y - Z]_k = \E_P[\E_P[[Y - Z]_k | Z]] = \E_P[[\E_P[Y | Z] - Z]_k ] = 0$$
    and
    $\text{Var}_P([Y - Z]_k) = \E_P[[Y - Z]_k^2] \leq 1.$
    Thus, by Chebyshev's inequality
    \begin{align*}
        P\left( |T_{1, k}| \geq \sqrt{\frac{3K}{\alpha n}} \right) &\leq \frac{\alpha n}{3K} \text{Var}_P(T_{1, k}) = \frac{\alpha }{3K} \text{Var}_P([Y - Z]_k) \leq \frac{\alpha}{3K}.
    \end{align*}
    Similarly, we have
    \begin{align*}
        P\left( |T_{2, k}| \geq \sqrt{\frac{3K}{\alpha n}} \right) &\leq \frac{\alpha n}{3K} \text{Var}_P(T_{2, k}) = \frac{\alpha}{3K} \text{Var}_P([Z]_k[Y - Z]_k) \leq \frac{\alpha}{3K}.
    \end{align*}
    From \eqref{eq:twosamplecontrol}, we know that
    $P(\texttt{TS}_{\frac{\alpha}{3K}, \frac{\beta}{2}}(\mathcal{V}_k, \mathcal{W}_k) = 1) \leq \frac{\alpha}{3K}.$
    Therefore,
    \begin{align*}
        P(\xi_n^\text{split} = 1) \leq \sum_{k = 1}^K \left( \frac{\alpha}{3K} + \frac{\alpha}{3K} + \frac{\alpha}{3K} \right) = \alpha.
    \end{align*}
    
    Let $P \in \mathcal{P}_1(\ep, p, s)$ and suppose that, for some $k \in \{1,\ldots,K\}$,
    \begin{equation}
    \label{eq:firstviolate}
    |\E_P[T_{1, k}] | = |\E_P[Y - Z]_k| \geq \frac{1}{\sqrt{n}} \left( \sqrt{\frac{3K}{\alpha}} + \frac{1}{\sqrt{\beta}} \right).
    \end{equation}
    By Chebyshev's inequality,
    \begin{align*}
        P\left(|T_{1, k} - \E_P[T_{1, k}]| \leq \frac{1}{\sqrt{\beta n}}  \right) \geq 1 - \beta n \text{Var}_P(T_{1, k}) \geq 1 - \beta.
    \end{align*}
    Note that \eqref{eq:firstviolate} and $|T_{1, k} - \E_P[T_{1, k}]| \leq 1 / \sqrt{\beta n}$ imply
    \begin{align*}
        |T_{1, k}| \geq |\E_P[T_{1, k}]| - |T_{1, k} - \E_P[T_{1, k}]| \geq \sqrt{\frac{3K}{\alpha n}}.
    \end{align*}
    Therefore,
    $$P(\xi_n^\text{split} = 1) \geq P\left(|T_{1, k}| \geq \sqrt{\frac{3K}{\alpha n}} \right) \geq  P\left(|T_{1, k} - \E_P[T_{1, k}]| \leq \frac{1}{\sqrt{\beta n}}  \right) \geq 1 - \beta.$$
    The same conclusion can be drawn when
    $$|\E_P[T_{2, k}]| = |\E_P[[Z]_k[Y - Z]_k]| \geq \frac{1}{\sqrt{n}} \left( \sqrt{\frac{3K}{\alpha}} + \frac{1}{\sqrt{\beta}} \right)$$
    for some $k \in \{1,\ldots,K\}$.
    
    Now it remains to prove the claim for $P \in H(\ep, p, s)$ such that
    \begin{align}
    \label{eq:firstorder}
        |\E_P[Y - Z]_k| \vee |\E_P[[Z]_k[Y - Z]_k]| <  \frac{1}{\sqrt{n}} \left( \sqrt{\frac{3K}{\alpha}} + \frac{1}{\sqrt{\beta}} \right)
    \end{align}
    for every $k \in \{1,\ldots,K\}$.
    Since
    $$\ell_2\text{-ECE}_P(f)^2 = \sum_{k = 1}^K \int_{\Delta_{K - 1}} [\res(\mathbf{z})]_k^2 dP_Z(\mathbf{z}) \geq \ep^2,$$
    we can choose $k_0 \in \{1,\ldots,K\}$ such that
    $\int_{\Delta_{K - 1}} [\res(\mathbf{z})]_{k_0}^2 dP_Z(\mathbf{z}) \geq \frac{\ep^2}{K}.$
    Choose $c_\text{split}'  > 0$ such that, for $d_c$ from Assumption \ref{assumption:classprob} and $c_\text{ts}$ from \eqref{eq:twosamplecontrol},
    \begin{equation}
    \label{eq:cupper}
        \frac{(c_\text{split}')^2}{K} \geq \frac{4}{d_c^3} \left(\sqrt{\frac{3K}{\alpha}} + \frac{1}{\sqrt{\beta}} \right)^2 + c_\text{ts}^2 \left( \frac{d_c}{8} \right)^{-\frac{4s}{4s + K - 1}}.
    \end{equation}
    There exists $N \in \mathbb{N}_+$ such that     for all $n \geq N$,
    \begin{equation}
    \label{eq:largencondition}
        \frac{1}{\sqrt{n}} \left( \sqrt{\frac{3K}{\alpha}} + \frac{1}{\sqrt{\beta}} \right) \leq \frac{d_c}{2}, \quad 2\left( \frac{2}{e}\right)^{\frac{d_c n}{8}} \leq \frac{\beta}{2}.
    \end{equation}
    \sloppy
    
    Let $c_\text{split} = c_\text{split}' \vee N^{2s / (4s + K - 1)}.$
    If $n < N$, then $\Alt$ is empty since $\ep \geq c_\text{split} n^{-2s/(4s + K - 1)} > 1$, so the claim is vacuously true.
    Assume $n \geq N$.
    By \eqref{eq:firstorder}, \eqref{eq:largencondition}, and Assumption \ref{assumption:classprob},
    \begin{equation}
    \label{eq:fxpositive}
        \E_P[Z]_{k_0} \geq \E_P[Y]_{k_0} - |\E_P[Y - Z]_{k_0}| \geq \frac{d_c}{2}.
    \end{equation}
    By \eqref{eq:l2distance}, \eqref{eq:totalexp}, \eqref{eq:firstorder}, and \eqref{eq:fxpositive}, $\Vert \pi_{k_0}^\mathcal{V} - \pi_{k_0}^\mathcal{W} \Vert_{L^2(P_Z)}^2$ is lower bounded by
    \fussy
    \begin{align*}
        &\frac{1}{(\E_P[Y]_{k_0})^2} \int_{\Delta_{K - 1}} [\res(\mathbf{z})]_{k_0}^2 dP_Z(\mathbf{z}) + \frac{2\E_P[Z - Y]_{k_0} \E_P[[Z]_{k_0} [Y - Z]_{k_0}]}{(\E_P[Y]_{k_0})^2 \E_P[Z]_{k_0}}\\
        &\geq \frac{\ep^2}{K} - \frac{4}{d_c^3} \left( \sqrt{\frac{3K}{\alpha}} + \frac{1}{\sqrt{\beta}} \right)^2  n^{-1}.
    \end{align*}
    Further by $\ep \geq c_\text{split}' n^{-2s / (4s + K - 1)}$ and \eqref{eq:cupper},
    \begin{align}
    \label{eq:corcompare}
        \frac{\ep^2}{K} - \frac{4}{d_c^3} \left( \sqrt{\frac{3K}{\alpha}} + \frac{1}{\sqrt{\beta}} \right)^2  n^{-1} &\geq \left(\frac{(c'_\text{split})^2}{K} - \frac{4}{d_c^3} \left( \sqrt{\frac{3K}{\alpha}} + \frac{1}{\sqrt{\beta}} \right)^2 \right) n^{-\frac{4s}{4s + K - 1}} \nonumber \\
        &\geq c_\text{ts}^2 \left( \frac{d_c n}{8} \right)^{-\frac{4s}{4s + K - 1}}.
    \end{align}
    In conclusion,
    $$\left\Vert \pi_{k_0}^\mathcal{V} - \pi_{k_0}^\mathcal{W} \right\Vert_{L^2(P_Z)} \geq c_\text{ts}  \left( \frac{d_c n}{8} \right)^{-\frac{2s}{4s + K - 1}} .$$
    Note that $\pi_{k_0}^\mathcal{V} - \pi_{k_0}^\mathcal{W}$ is $s$-H\"older since it is a linear combination of two $s$-H\"older functions $\mathbf{z} \mapsto [\res(\mathbf{z})]_{k_0}$ and $\mathbf{z} \mapsto [\mathbf{z}]_{k_0}$, possibly with different H\"older constants.
    Thus by \eqref{eq:twosamplecontrol}, we have
    \begin{equation}
    \label{eq:conditionalcontrol}
    P\left(\texttt{TS}_{\frac{\alpha}{3K}, \frac{\beta}{2}}(\mathcal{V}_{k_0}, \mathcal{W}_{k_0}) = 1 \mid |\mathcal{V}_{k_0}| = v, |\mathcal{W}_{k_0}| = w \right) \geq 1 - \frac{\beta}{2}
    \end{equation}
    given that $v, w \geq \frac{d_c n}{8}$.
    For convenience, assume that $n$ is even (if required, drop an observation).
    Since $|\mathcal{V}_{k_0}| \sim \text{Bin}(\frac{n}{2}, \E_P[Y]_{k_0})$ and $|\mathcal{W}_{k_0}| \sim \text{Bin}(\frac{n}{2}, \E_P[Z]_{k_0})$, we find
    $$P\left(|\mathcal{V}_{k_0}| < \frac{n \E_P[Y]_{k_0}}{4}\right) \leq \left(\frac{2}{e}\right)^{\frac{n \E_P[Y]_{k_0}}{4}}, \quad P\left(|\mathcal{W}_{k_0}| < \frac{n \E_P[Z]_{k_0}}{4}\right) \leq \left(\frac{2}{e}\right)^{ \frac{n \E_P[Z]_{k_0}}{4}} $$
    by Chernoff's inequality (Exercise 2.3.2 of \cite{vershynin2018high}).
    Therefore,
    \begin{align*}
        &P\left(|\mathcal{V}_{k_0}| < \frac{d_c n}{8} \text{ or } |\mathcal{W}_{k_0}| < \frac{d_c n}{8} \right) \leq P\left( |\mathcal{V}_{k_0}| < \frac{d_c n}{8} \right) + P\left( |\mathcal{W}_{k_0}| < \frac{d_c n}{8} \right)\\
        &\leq P\left( |\mathcal{V}_{k_0}| < \frac{n\E_P[Y]_{k_0}}{4}\right) + P\left( |\mathcal{W}_{k_0}| < \frac{n\E_P[Z]_{k_0}}{4}\right)\\
        &\leq \left(\frac{2}{e}\right)^{\frac{n \E_P[Y]_{k_0}}{4}} + \left(\frac{2}{e}\right)^{ \frac{n \E_P[Z]_{k_0}}{4}} \leq 2\left( \frac{2}{e}\right)^{\frac{d_c n}{8}} \leq \frac{\beta}{2},
    \end{align*}
    and thus
    \begin{equation}
    \label{eq:vwconcentration}
        P\left( |\mathcal{V}_{k_0}|, |\mathcal{W}_{k_0}| \geq \frac{d_c n}{8}\right) \geq 1 - \frac{\beta}{2}.
    \end{equation}
    The last inequality holds by \eqref{eq:cupper}.
    Finally by \eqref{eq:conditionalcontrol} and \eqref{eq:vwconcentration},
    \begin{align*}
        &P\left( \texttt{TS}_{\frac{\alpha}{3K}, \frac{\beta}{2}}(\mathcal{V}_{k_0}, \mathcal{W}_{k_0}) = 1 \right)\\
        &\geq \int_{\{(v, w) \in \mathbf{z}^2: v, w \geq \frac{d_c n}{8}\}} P\left(\texttt{TS}_{\frac{\alpha}{3K}, \frac{\beta}{2}}(\mathcal{V}_{k_0}, \mathcal{W}_{k_0}) = 1 \mid |\mathcal{V}_{k_0}| = v, |\mathcal{W}_{k_0}| = w \right) dP_{|\mathcal{V}_{k_0}|, |\mathcal{W}_{k_0}|}(v, w)\\
        &\geq \left(1 - \frac{\beta}{2} \right) P\left( |\mathcal{V}_{k_0}|, |\mathcal{W}_{k_0}| \geq \frac{d_c n}{8}\right) \geq \left(1 - \frac{\beta}{2} \right)^2 \geq 1 - \beta
    \end{align*}
    where the integral is with respect to the joint distribution of $|\mathcal{V}_{k_0}|$ and $|\mathcal{W}_{k_0}|$.
    This proves that
    $$P(\xi_n^\text{split} = 1) \geq P\left( \texttt{TS}_{\frac{\alpha}{3K}, \frac{\beta}{2}}(\mathcal{V}_{k_0}, \mathcal{W}_{k_0}) = 1 \right) \geq 1 - \beta.$$
    
\end{proof}

\subsubsection{Proof of Corollary \ref{cor:adaptivetwo}}\label{sec:proofadaptivetwo}
The proof of Theorem \ref{thm:reductiontest} can be repeated by replacing \eqref{eq:twosamplecontrol} with \eqref{eq:twoadaptivecontrol}.
The only difference is \eqref{eq:corcompare}, where we used $n^{-4s / (4s + K - 1)} \geq n^{-1}$.
In the adaptive setting, we instead need $(n / \log \log n)^{-4s / (4s + K - 1)} \geq n^{-1}$.
This inequality holds for all large $n \in \mathbb{N}_+$, say $n \geq N'$.
Then, we can define $c_\text{ad-s}$ to be larger than $(N' / \log \log N')^{2s / (4s + K - 1)}$, so that $\mathcal{P}_1(\ep, p, s)$ becomes empty when $n < N'$.

\subsection{Background}
\subsubsection{H\"older Continuity on the Probability Simplex}
\label{sec:holderdef}
To define derivatives---and thus the class of functions we study---on $\Delta_{K - 1}$, a coordinate chart $\psi: \Delta_{K - 1} \rightarrow \R^{K - 1}$ has to be specified.
For example, we can consider the canonical projection $\pi_{-k}: (z_1, \dots, z_K)^\top \mapsto (z_1, \dots, z_{k - 1}, z_{k + 1}, \dots, z_K)^\top$.
The definition of H\"older smoothness below depends on the choice of $\psi$. We assume $\psi = \pi_{-K}$, but all conclusions and proofs remain the same for any choice of the coordinate chart $\psi$.

For an integer $d \geq 1$, a vector $\boldsymbol \gamma = (\gamma_1, \dots, \gamma_{d})^\top \in \mathbb{N}^{d}$ is called a multi-index.
We write $|\boldsymbol \gamma| := \gamma_1 + \cdots + \gamma_d$.
For a sufficiently smooth function $f:\R^d \to \R$, we denote its partial derivative of order $\boldsymbol \gamma = (\gamma_1, \dots, \gamma_{d})^\top$ by  $f^{(\boldsymbol \gamma)} := \partial^{\gamma_1}_{1} \cdots \partial^{\gamma_d}_{d} f$.
For a H\"older smoothness parameter $s > 0$ and a H\"older constant $L > 0$, let $\cH_K(s,L)$ be the class of 
$(s,L)$-H\"older continuous
functions $g: \Delta_{K - 1} \to \R$ satisfying, for all $\mathbf{x}_1, \mathbf{x}_2 \in \psi(\Delta_{K - 1})$ and multi-indices $\boldsymbol \gamma \in \mathbb{N}^{K - 1}$ with $|\boldsymbol \gamma| = \lceil s\rceil - 1$,
\begin{equation}
\label{eq:holderdef}
    \left\vert (g \circ \psi^{-1})^{(\boldsymbol \gamma)}(\mathbf{x}_1) - (g \circ \psi^{-1})^{(\boldsymbol \gamma)}(\mathbf{x}_2) \right\vert \leq L \left\Vert \mathbf{x}_1 - \mathbf{x}_2 \right\Vert^{s - \lceil s \rceil + 1}.
\end{equation}
In particular,
$\cH_K(1,L)$ denotes all $L$-Lipschitz functions.
]
\subsubsection{Two-sample Goodness-of-fit Tests}
\label{sec:twobackground}
Here we state and slightly extend the results of \cite{arias2018remember, kim2020minimax}.
For $d \in \mathbb{N}_+$, let $\mu$ be a measure on $[0, 1]^d$ which is absolutely continuous with respect to $\textnormal{Leb}_d$ and satisfies
\begin{equation}
\label{eq:density}
    \nu_l \leq \frac{d\mu}{d \textnormal{Leb}_d} \leq \nu_u
\end{equation}
almost everywhere for some constants $\nu_l, \nu_u > 0$.
For $n_1, n_2 \in \mathbb{N}_+$, suppose we have two samples $\{V_1, \dots, V_{n_1}\}$ and $\{W_1, \dots, W_{n_2}\}$ i.i.d. sampled from the distributions on $[0, 1]^d$ with densities $f_1$ and $f_2$, respectively, with respect to $\mu$.
We also assume $f_1 - f_2$ is $(s, L)$-H\"older continuous for a H\"older smoothness parameter $s > 0$ and a H\"older constant $L > 0$.

For $m \in \mathbb{N}_+$ and $\mathbf{i} = (i_1, \dots, i_d)^\top \in [m]^d$, let $R_{m, \mathbf{i}} := \prod_{k = 1}^d \left[\frac{i_k - 1}{m}, \frac{i_k}{m} \right)$.
$$v_{\mathbf{i}, m} := \left|\left\{j \in [n_1]: V_j \in R_{m, \mathbf{i}} \right\}\right|,$$
$$w_{\mathbf{i}, m} := \left|\left\{j \in [n_2]: W_j \in R_{m, \mathbf{i}} \right\}\right|.$$
The unnormalized chi-squared statistic is defined by
$$\Gamma_{m, n_1, n_2} := \sum_{\mathbf{i} \in [m]^d} (n_2 v_{\mathbf{i}, m} - n_1 w_{\mathbf{i}, m})^2.$$
\begin{theorem}[Chi-squared test, \cite{arias2018remember}]
\label{thm:twosample}
Consider the two-sample goodness-of-fit testing problem described above.
Assume the H\"older smoothness parameter $s$ is known and let $m_* = \lfloor (n_1 \wedge n_2)^{2 / (4s + d)} \rfloor$.
For any $\alpha \in (0, 1)$ and $\beta \in (0, 1 - \alpha)$, there exist $c > 0$ depending on $(d, L)$ and $c_\textnormal{ts} > 0$ depending on $(s, d, L, \nu_l, \nu_u, \alpha, \beta)$ such that for $\tau := n_1 n_2 (n_1 + n_2) + cn_1 n_2 m_*^{-d / 2}$,
\begin{alignat*}{2}
    &P(\Gamma_{m_*, n_1, n_2} \geq \tau ) \leq \alpha \quad &&\text{if } f_1 = f_2,\\
    &P(\Gamma_{m_*, n_1, n_2} \geq \tau ) \geq 1 - \beta \quad &&\text{if } \left\Vert f_1 - f_2 \right\Vert_{L^2(\mu)} \geq c_\textnormal{ts}(n_1 \wedge n_2)^{-\frac{2s}{4s + d}}.
\end{alignat*}
\end{theorem}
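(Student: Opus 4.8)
The plan is to follow the classical mean--variance analysis for chi-squared-type tests, essentially reproducing the argument of \cite{arias2018remember} but recorded here for the reader's convenience; the overall structure parallels the proof of Theorem~\ref{thm:onesamplethm}. Write $p_{\mathbf{i}} := \int_{R_{m_*,\mathbf{i}}} f_1 \, d\mu$ and $q_{\mathbf{i}} := \int_{R_{m_*,\mathbf{i}}} f_2 \, d\mu$, so that $v_{\mathbf{i},m_*} \sim \mathrm{Bin}(n_1, p_{\mathbf{i}})$ and $w_{\mathbf{i},m_*} \sim \mathrm{Bin}(n_2, q_{\mathbf{i}})$, with $(v_{\mathbf{i},m_*})_{\mathbf{i}}$ and $(w_{\mathbf{i},m_*})_{\mathbf{i}}$ two independent multinomial vectors. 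First I would compute $\E[\Gamma_{m_*,n_1,n_2}]$ by expanding each square: for $X_{\mathbf{i}} := n_2 v_{\mathbf{i},m_*} - n_1 w_{\mathbf{i},m_*}$ one has $\E[X_{\mathbf{i}}] = n_1 n_2(p_{\mathbf{i}} - q_{\mathbf{i}})$ and $\mathrm{Var}(X_{\mathbf{i}}) = n_1 n_2^2 p_{\mathbf{i}}(1 - p_{\mathbf{i}}) + n_1^2 n_2 q_{\mathbf{i}}(1 - q_{\mathbf{i}})$, hence, using $\sum_{\mathbf{i}} p_{\mathbf{i}} = \sum_{\mathbf{i}} q_{\mathbf{i}} = 1$,
\begin{equation*}
\E[\Gamma_{m_*,n_1,n_2}] = n_1 n_2 (n_1 + n_2)\Big(1 - O(m_*^{-d})\Big) + n_1^2 n_2^2 \sum_{\mathbf{i} \in [m_*]^d} (p_{\mathbf{i}} - q_{\mathbf{i}})^2,
\end{equation*}
where the $O(m_*^{-d})$ term comes from $\sum_{\mathbf{i}} p_{\mathbf{i}}^2 + \sum_{\mathbf{i}} q_{\mathbf{i}}^2$ and is controlled by the density bounds \eqref{eq:density}. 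Under $H_0$ the signal term vanishes and $\E[\Gamma] \le n_1 n_2(n_1+n_2)$; under the alternative I would lower bound the signal term using a discrete $L^2$-approximation estimate analogous to Lemma~\ref{discretel2multi} (Lemma~3 of \cite{arias2018remember}): since $f_1 - f_2$ is $(s,L)$-H\"older, $\sum_{\mathbf{i}} \mu(R_{m_*,\mathbf{i}})^{-1}\big(\int_{R_{m_*,\mathbf{i}}}(f_1 - f_2)\,d\mu\big)^2 \ge (b_1 \|f_1 - f_2\|_{L^2(\mu)} - b_2 m_*^{-s})^2$, which together with $\mu(R_{m_*,\mathbf{i}}) \le \nu_u m_*^{-d}$ gives $\sum_{\mathbf{i} \in [m_*]^d}(p_{\mathbf{i}} - q_{\mathbf{i}})^2 \ge c' m_*^{-d}\big(\|f_1 - f_2\|_{L^2(\mu)}^2 - C m_*^{-2s}\big)$ for constants $c', C > 0$.

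Next I would bound $\mathrm{Var}(\Gamma_{m_*,n_1,n_2})$ under both hypotheses. Expanding $\mathrm{Var}(\Gamma) = \sum_{\mathbf{i}} \mathrm{Var}(X_{\mathbf{i}}^2) + \sum_{\mathbf{i} \neq \mathbf{j}} \mathrm{Cov}(X_{\mathbf{i}}^2, X_{\mathbf{j}}^2)$, the diagonal terms are controlled by fourth moments of shifted binomials and the off-diagonal terms by the (negative) multinomial covariances $\mathrm{Cov}(v_{\mathbf{i},m_*}, v_{\mathbf{j},m_*}) = -n_1 p_{\mathbf{i}} p_{\mathbf{j}}$; alternatively one may Poissonize the counts to decouple the cells and then de-Poissonize. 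Combining these bounds with \eqref{eq:density} (so that $p_{\mathbf{i}}, q_{\mathbf{i}} \asymp m_*^{-d}$) yields $\mathrm{Var}(\Gamma) = o\big((\E[\Gamma] - \tau)^2\big)$, provided the constant $c$ in $\tau$ and (for the alternative) the constant $c_\textnormal{ts}$ are taken large enough; the role of the choice $m_* = \lfloor (n_1 \wedge n_2)^{2/(4s+d)}\rfloor$ is precisely to balance the approximation bias $m_*^{-2s}$ against the stochastic fluctuation, the familiar bias--variance tradeoff of nonparametric testing. With these estimates in hand, the two claims follow from Chebyshev's inequality exactly as in \eqref{eq:cheby1}--\eqref{eq:cheby2}: under $H_0$, $P(\Gamma \ge \tau) \le \mathrm{Var}(\Gamma)/(\tau - \E[\Gamma])^2 \le \alpha$ since $\tau - \E[\Gamma] \ge c n_1 n_2 m_*^{-d/2}$; under the alternative with $\|f_1 - f_2\|_{L^2(\mu)} \ge c_\textnormal{ts}(n_1 \wedge n_2)^{-2s/(4s+d)}$ the mean $\E[\Gamma]$ exceeds $\tau$ by a multiple of the standard deviation, so $P(\Gamma < \tau) \le \beta$. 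As in Theorem~\ref{thm:onesamplethm}, for small $n_1 \wedge n_2$ the H\"older alternative class is empty, so the statement holds vacuously there and thus for all $n_1, n_2$.

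The main obstacle I expect is the variance computation: unlike the debiased one-sample statistic $T^{\mathrm{d}}_{m,n}$, here $\Gamma$ is a sum of squares rather than a U-statistic-type object with built-in centering, so the fourth-moment and cross-moment bookkeeping for the weakly dependent multinomial counts is the delicate part, and it is where the precise dependence on $\nu_l, \nu_u$ and on the constant $c$ in $\tau$ is pinned down. Everything else --- the mean identity and the discrete $L^2$ lower bound --- is routine given the Lemma~\ref{discretel2multi}-style estimate.
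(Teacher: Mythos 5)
Your plan is correct and follows the same route the paper itself indicates: the paper gives no standalone proof of this theorem, instead citing Theorem~4 of \cite{arias2018remember} and noting in a remark that the argument carries over once their Lemma~3 is replaced by Lemma~\ref{discretel2multi} to handle a general base measure $\mu$, which is exactly the mean--variance chi-squared argument you reconstruct. One small slip worth flagging: to convert $\sum_{\mathbf{i}} \mu(R_{m_*,\mathbf{i}})^{-1}(p_{\mathbf{i}}-q_{\mathbf{i}})^2 \ge \bigl(b_1\left\Vert f_1-f_2\right\Vert_{L^2(\mu)} - b_2 m_*^{-s}\bigr)^2$ into a lower bound on $\sum_{\mathbf{i}}(p_{\mathbf{i}}-q_{\mathbf{i}})^2$, you need the \emph{lower} density bound $\mu(R_{m_*,\mathbf{i}}) \ge \nu_l m_*^{-d}$ rather than the upper one, though the resulting inequality and the role of $(\nu_l,\nu_u)$ in the constant are unchanged.
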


For $m \in \mathbb{N}_+$, let $k_m: ([0, 1]^d)^4 \to \mathbb{R}$ be the kernel
\begin{align*}
    k_m(v_1, v_2, w_1, w_2) &= \sum_{\mathbf{i} \in [m]^d} \left( \mathds{1}_{R_{m, \mathbf{i}}}(v_1) \mathds{1}_{R_{m, \mathbf{i}}}(v_2) + \mathds{1}_{R_{m, \mathbf{i}}}(w_1) \mathds{1}_{R_{m, \mathbf{i}}}(w_2) \right.\\
    &\qquad\qquad - \left.\mathds{1}_{R_{m, \mathbf{i}}}(v_1) \mathds{1}_{R_{m, \mathbf{i}}}(w_2) - \mathds{1}_{R_{m, \mathbf{i}}}(w_1) \mathds{1}_{R_{m, \mathbf{i}}}(v_2) \right).
\end{align*}
For the two samples $\{V_1, \dots, V_{n_1}\}$, $\{W_1, \dots, W_{n_2}\}$ described above, define
$$U_{m, n_1, n_2} := \frac{1}{n_1(n_1 - 1)n_2(n_2 - 1)} \sum_{i_1 \neq i_2 \in [n_1]} \sum_{j_1 \neq j_2 \in [n_2]} k_m(V_{i_1}, V_{i_2}, W_{j_1}, W_{j_2}).$$
For any $\alpha \in (0, 1)$, the $1 - \alpha$ quantile $c_{1 - \alpha, m, n_1, n_2}$ of the U-statistic $U_{m, n_1, n_2}$ can be found by the permutation procedure described in Section 2.1 of \cite{kim2020minimax}.

\sloppy
\begin{theorem}[Permutation test, \cite{kim2020minimax}]
\label{thm:permutation}
Consider the two-sample goodness-of-fit testing described above.
Assume the H\"older smoothness parameter $s$ is known and let $m_* = \lfloor (n_1 \wedge n_2)^{2 / (4s + d)} \rfloor$.
For any $\alpha \in (0, 1)$ and $\beta \in (0, 1 - \alpha)$, there exists $c_\textnormal{ts} > 0$ depending on $(s, d, L, \nu_l, \nu_u, \alpha, \beta)$ such that
\begin{alignat*}{2}
    &P(U_{m_*, n_1, n_2} \geq c_{1 - \alpha, m_*, n_1, n_2} ) \leq \alpha \quad &&\text{if } f_1 = f_2,\\
    &P(U_{m_*, n_1, n_2} \geq c_{1 - \alpha, m_*, n_1, n_2} ) \geq 1 - \beta \quad &&\text{if } \left\Vert f_1 - f_2 \right\Vert_{L^2(\mu)} \geq c_\textnormal{ts}(n_1 \wedge n_2)^{-\frac{2s}{4s + d}}.
\end{alignat*}
\end{theorem}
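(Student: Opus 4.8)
The plan is to observe that this is, modulo a mild generalization, the permutation-test theorem of \cite{kim2020minimax}: they prove it with $\mu$ equal to Lebesgue measure on $[0,1]^d$, and here we allow any base measure $\mu$ whose density lies in $[\nu_l,\nu_u]$ as in \eqref{eq:density}. So I would (i) recall the level argument, which is measure-agnostic; (ii) recall the power argument, which rests on a mean lower bound, a variance bound, and an upper bound on the permutation quantile; and (iii) check that each ingredient survives the passage to a general $\mu$, with the only effect being that constants now depend on $\nu_l,\nu_u$ as well. The level statement is essentially immediate: under $f_1=f_2$ the pooled sample $(V_1,\dots,V_{n_1},W_1,\dots,W_{n_2})$ is exchangeable, so $U_{m_*,n_1,n_2}$ evaluated on the true group assignment is distributed exactly as on a uniformly random re-assignment, and the standard permutation-test argument (Section 2.1 of \cite{kim2020minimax}) gives $P(U_{m_*,n_1,n_2}\geq c_{1-\alpha,m_*,n_1,n_2})\leq\alpha$ with no assumption on $\mu$ beyond measurability. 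It is also worth recording, since $k_m(v_1,v_2,w_1,w_2)=\sum_{\mathbf i\in[m]^d}(\mathds{1}_{R_{m,\mathbf i}}(v_1)-\mathds{1}_{R_{m,\mathbf i}}(w_1))(\mathds{1}_{R_{m,\mathbf i}}(v_2)-\mathds{1}_{R_{m,\mathbf i}}(w_2))$, that $U_{m,n_1,n_2}$ is an unbiased estimator of the discretized squared distance $\sum_{\mathbf i}\big(\int_{R_{m,\mathbf i}}(f_1-f_2)\,d\mu\big)^2$, which is degenerate under the null.

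For the power claim, write $\Delta:=f_1-f_2$, an $(s,L)$-H\"older function on $[0,1]^d$. First I would establish the cube-and-general-measure analogue of Lemma \ref{discretel2multi} (with $\Delta_{K-1}$ replaced by $[0,1]^d$, $K-1$ by $d$, and the polyhedral bins by the cubes $R_{m,\mathbf i}$); its proof — Taylor expansion of $\Delta$ on each bin plus invertibility of the averaging operator on polynomials — goes through verbatim, and the resulting constants $b_1,b_2>0$ depend only on $(s,d,L)$, while the bins have $\mu$-mass in $[\nu_l m^{-d},\nu_u m^{-d}]$ by \eqref{eq:density}. Combined with the exact formula for $\E[U_{m_*,n_1,n_2}]$, this yields $\E[U_{m_*,n_1,n_2}]\geq c_1 m_*^{-d}\big(\|\Delta\|_{L^2(\mu)}^2-c_2 m_*^{-2s}\|\Delta\|_{L^2(\mu)}-c_3 m_*^{-2s}\big)$ for constants depending only on $(s,d,L,\nu_l,\nu_u)$; with $m_*=\lfloor(n_1\wedge n_2)^{2/(4s+d)}\rfloor$ and the separation hypothesis $\|\Delta\|_{L^2(\mu)}\geq c_\textnormal{ts}(n_1\wedge n_2)^{-2s/(4s+d)}$ for $c_\textnormal{ts}$ large, the bias corrections are dominated, so $\E[U_{m_*,n_1,n_2}]\geq\tfrac{c_1}{2}m_*^{-d}\|\Delta\|_{L^2(\mu)}^2$. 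Next I would bound $\V(U_{m_*,n_1,n_2})$ through the Hoeffding-type decomposition of the bilinear four-sample U-statistic into its degenerate and non-degenerate pieces, exactly as in \cite{arias2018remember,kim2020minimax} (and paralleling the one-sample computation behind Theorem \ref{thm:onesamplethm}), obtaining $\V(U_{m_*,n_1,n_2})\leq\tau^2+(\text{signal-dependent terms})$ where $\tau^2$ is the null-scale variance of the appropriate polynomial order in $m_*$ and $n_1\wedge n_2$; under the above choices of $m_*$ and of $c_\textnormal{ts}$, this is $o(\E[U_{m_*,n_1,n_2}]^2)$.

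The main obstacle is controlling the permutation quantile $c_{1-\alpha,m_*,n_1,n_2}$ under the alternative, where the permutation distribution is no longer the sampling null. The key step is to show that, with probability at least $1-\beta/2$ over the data, $c_{1-\alpha,m_*,n_1,n_2}$ is at most a constant multiple of the null standard-deviation scale $\tau$; granted this, an application of Chebyshev's inequality to $U_{m_*,n_1,n_2}-\E[U_{m_*,n_1,n_2}]$, together with $\E[U_{m_*,n_1,n_2}]\gg\tau$ from the previous paragraph, gives $P(U_{m_*,n_1,n_2}\geq c_{1-\alpha,m_*,n_1,n_2})\geq1-\beta$. This quantile bound is precisely where the combinatorial concentration inequalities of \cite{kim2020minimax} (their Theorem 6.1 and Lemma C.1, as noted in Remark \ref{rem:adapt}) are used: they show the permuted U-statistic concentrates around its small permutation mean at a rate with only polynomial dependence on $\alpha$, which yields the needed high-probability upper bound on the $(1-\alpha)$-quantile. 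I would invoke those results after checking that the density hypothesis \eqref{eq:density} enters only through $\nu_l,\nu_u$, so the entire chain — level, bias lower bound, variance bound, quantile control — extends from Lebesgue measure to a general $\mu$ on $[0,1]^d$ satisfying \eqref{eq:density}. Finally, the small-sample regime is handled as usual: an $(s,L)$-H\"older density difference with $\int\Delta\,d\mu=0$ on the bounded domain $[0,1]^d$ has $\|\Delta\|_{L^2(\mu)}$ bounded by a constant $C(s,d,L,\nu_l)$, so when $n_1\wedge n_2$ is below a threshold depending only on $(s,d,L,\nu_l,\nu_u,\alpha,\beta)$ the separation hypothesis is unsatisfiable and the alternative is vacuous, making the claim hold for all $n_1,n_2$.
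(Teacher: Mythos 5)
Your proposal is correct and follows essentially the same route as the paper, which proves Theorem \ref{thm:permutation} simply by observing that it is Proposition 4.6 of \cite{kim2020minimax} extended from $\mu = \textnormal{Leb}_d$ to a general $\mu$ satisfying \eqref{eq:density}, with the only substantive adaptation being the general-measure analogue of the discretized $L^2$ lower bound (the cube version of Lemma \ref{discretel2multi}, generalizing Lemma 3 of \cite{arias2018remember}) and the observation that all remaining steps—level via exchangeability, variance bounds, and the permutation-quantile control—carry over with constants now depending on $\nu_l,\nu_u$. Your write-up is in fact more explicit than the paper's remark about how each ingredient of \cite{kim2020minimax} is reused, but it is the same argument.
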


\fussy

\begin{corollary}[Multi-scale permutation test, \cite{kim2020minimax}]
\label{cor:permutation}
Consider the two-sample goodness-of-fit testing problem described above.
Let $B = \lceil \frac{2}{d} \log_2(\frac{n_1 \wedge n_2}{\log\log (n_1 \wedge n_2)}) \rceil$ and define
$$\xi_{n_1, n_2}^\textnormal{perm} := \max_{b \in \{1,\ldots,B\}} I(U_{2^b, n_1, n_2} \geq c_{1 - \alpha / B, 2^b, n_1, n_2}).$$
For any $\alpha \in (0, 1)$ and $\beta \in (0, 1 - \alpha)$, there exists $c_\textnormal{ad} > 0$ depending on $(d, L, \nu_l, \nu_u, \alpha, \beta)$ such that
\begin{alignat*}{2}
    &P(\xi_{n_1, n_2}^\textnormal{perm} = 1) \leq \alpha \quad &&\text{if } f_1 = f_2,\\
    &P(\xi_{n_1, n_2}^\textnormal{perm} = 1 ) \geq 1 - \beta \quad &&\text{if } \left\Vert f_1 - f_2 \right\Vert_{L^2(\mu)} \geq c_\textnormal{ad}\left(\frac{n_1 \wedge n_2}{\log\log (n_1 \wedge n_2)} \right)^{-\frac{2s}{4s + d}}.
\end{alignat*}

\end{corollary}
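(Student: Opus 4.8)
The proof follows the standard two-step template for adaptive nonparametric tests \citep{ingster2000adaptive, kim2020minimax}, and parallels the argument used to prove Theorem \ref{thm:adaptiveone} in Appendix \ref{sec:proofadaptiveone}. The test $\xi_{n_1, n_2}^\textnormal{perm}$ rejects precisely when at least one of the $B$ single-scale permutation tests $I(U_{2^b, n_1, n_2} \geq c_{1 - \alpha/B, 2^b, n_1, n_2})$ rejects at level $\alpha/B$, so the analysis splits into controlling the false positive rate of every single-scale test by $\alpha/B$, then a union bound, and separately establishing power at one carefully chosen scale $b_0$.

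\textbf{False positive rate.} When $f_1 = f_2$, the pooled sample $\{V_1, \dots, V_{n_1}, W_1, \dots, W_{n_2}\}$ is exchangeable, so by construction of the permutation quantile $c_{1 - \alpha/B, m, n_1, n_2}$ we have $P(U_{m, n_1, n_2} \geq c_{1 - \alpha/B, m, n_1, n_2}) \leq \alpha/B$ for every $m$; this bound is exact and uses no smoothness. The union bound over $b \in \{1, \dots, B\}$ then yields $P(\xi_{n_1, n_2}^\textnormal{perm} = 1) \leq B \cdot (\alpha/B) = \alpha$.

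\textbf{Power.} Write $n := n_1 \wedge n_2$. Since $\tfrac{2}{4s + d} < \tfrac{2}{d}$, for large $n$ there is an index $b_0 \in \{1, \dots, B\}$ with $2^{b_0} \asymp (n / \log\log n)^{2/(4s + d)}$, and it suffices to show $U_{2^{b_0}, n_1, n_2} \geq c_{1 - \alpha/B, 2^{b_0}, n_1, n_2}$ with probability at least $1 - \beta$. As in the proof behind Theorem \ref{thm:permutation}, the expectation of $U_{m, n_1, n_2}$ under the alternative equals, up to a positive constant, the discretized $L^2$ functional $\sum_{\mathbf{i} \in [m]^d} \mu(R_{m, \mathbf{i}})^{-1}\big(\int_{R_{m, \mathbf{i}}} (f_1 - f_2)\, d\mu\big)^2$; the H\"older smoothness of $f_1 - f_2$ together with the discrete-$L^2$ lemma (Lemma \ref{discretel2multi}, specialized to the cube as in Lemma 3 of \cite{arias2018remember}) bounds this below by a constant multiple of $\|f_1 - f_2\|_{L^2(\mu)}^2 - c\, 2^{-2 s b_0}$. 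One then upper bounds the variance of $U_{2^{b_0}, n_1, n_2}$ and, crucially, the permutation quantile $c_{1 - \alpha/B, 2^{b_0}, n_1, n_2}$: the combinatorial Bernstein-type concentration inequalities of \citep[Theorem 6.1, Lemma C.1]{kim2020minimax} show this quantile depends on $B/\alpha$ only logarithmically. Collecting terms, the test succeeds provided $\|f_1 - f_2\|_{L^2(\mu)}^2$ dominates the discretization bias $2^{-2 s b_0}$, the variance contributions (schematically of order $(n^2 2^{-b_0 d})^{-1} \vee (n\, 2^{-b_0 d/2})^{-1}$ and $n^{-1}\|f_1 - f_2\|_{L^2(\mu)}^2$), and a term of order $n^{-1} 2^{b_0 d/2}\sqrt{\log B}$ coming from the logarithmic quantile bound. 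Substituting $2^{b_0} \asymp (n / \log\log n)^{2/(4s + d)}$ and $\log B \asymp \log\log n$ makes all of these $O\big((n/\log\log n)^{-4s/(4s+d)}\big)$, so the requirement holds once $c_\textnormal{ad}$ is taken large enough; for the remaining small $n$ the claim is vacuous after further enlarging $c_\textnormal{ad}$, since then no admissible alternative exists.

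\textbf{Main obstacle.} The delicate point is controlling the permutation critical value with only a \emph{logarithmic} dependence on $1/\alpha$, hence on $B$. A naive Chebyshev-type bound would give polynomial dependence on $1/\alpha$, forcing the union-bound penalty to enter as $B \asymp \log n$ rather than $\log\log n$ and producing a strictly worse separation rate—precisely the distinction discussed in Remark \ref{rem:adapt}. Obtaining the logarithmic bound requires the combinatorial concentration machinery for two-sample U-statistics under random permutations, which is the technical core of \cite{kim2020minimax}. A secondary bookkeeping point is guaranteeing that the chosen scale $b_0$ lies in $\{1, \dots, B\}$ and that the discretization bias $2^{-2 s b_0}$ does not dominate; both follow from $2^B \geq (n/\log\log n)^{2/d} \geq 2^{b_0}$.
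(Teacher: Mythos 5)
Your proposal is correct and follows essentially the same approach as the paper, which does not give a detailed argument but instead (in the ``Comment on the proofs'' remark following Corollary \ref{cor:permutation}) points to Proposition 7.1 of \cite{kim2020minimax} and notes that the only modification needed to handle a general base measure $\mu$ satisfying Assumption \ref{assumption:densitybound} is to replace Lemma 3 of \cite{arias2018remember} with the generalized discretization lemma, Lemma \ref{discretel2multi}. Your reconstruction of the two-step structure---exchangeability plus a union bound for type I error, and, for power, lower-bounding $\E[U_{m,n_1,n_2}]$ via the discretized $L^2$ functional while controlling the permutation quantile with only logarithmic dependence on $B/\alpha$ through the combinatorial concentration inequalities of \cite{kim2020minimax}---is exactly the chain of reasoning the paper defers to, and you correctly identify the logarithmic-versus-polynomial quantile bound as the step that yields the $\log\log n$ adaptation cost discussed in Remark \ref{rem:adapt}.
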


\begin{remark}[Comment on the proofs]
Theorem \ref{thm:twosample}, Theorem \ref{thm:permutation}, and Corollary \ref{cor:permutation} are generalization of Theorem 4 of \cite{arias2018remember}, Proposition 4.6 of \cite{kim2020minimax}, and Proposition 7.1 of \cite{kim2020minimax}, respectively.
The original statements are for $\mu = \textnormal{Leb}_d$. 
The proofs in \cite{arias2018remember, kim2020minimax} can be adapted with only minor differences.
For example, equation (93) of \cite{arias2018remember} is still true assuming \eqref{eq:density} above.
Also, we proved Lemma \ref{discretel2multi} in this paper, which is a generalization of Lemma 3 of \cite{arias2018remember} to a general measure $\mu$.
Other parts of the proofs in \cite{arias2018remember, kim2020minimax} can be repeated without any modification.
\end{remark}

\subsubsection{Binning Scheme for the Probability Simplex}
\label{multibinning}
Here we describe a binning scheme for the probability simplex $\Delta_{K - 1} \subseteq \R^{K}$ into equal volume  simplices---i.e., affine transforms of the standard probability simplex.

\paragraph{Hypersimplex.}
For $u \geq 2$ and $v \in [u - 1]$, define the $(u - 1)$-dimensional polytope hypersimplex $\Delta_{u-1, v}$---a generalization of the standard probability simplex that can have more vertices and edges---as
$$\Delta_{u-1, v} := \left\{ (x_1, \dots, x_{u})^\top \in [0, 1)^u: x_1 + \cdots + x_{u} = v \right\}.$$
Let $A_{u - 1, v - 1}$ be the Eulerian number:
$$A_{u - 1, v - 1} := \sum_{i = 0}^{v} (-1)^i \binom{u}{i} (v - i)^{u - 1}.$$
It is known that the hypersimplex $\Delta_{u-1, v}$ can be partitioned into $A_{u - 1, v - 1}$ simplices \citep{stanley1977eulerian, sturmfels1996grobner} whose volumes are identical to that of the unit probability simplex $\Delta_{u - 1}$.

\paragraph{Construction.}
Let $m \in \mathbb{N}_+$ and $R_{m, \mathbf{i}} := \prod_{k = 1}^K \left[\frac{i_k - 1}{m}, \frac{i_k}{m} \right)$ for $\mathbf{i} = (i_1, \dots, i_K)^\top \in [m]^{K}$.
The hypercube $R_{m, \mathbf{i}}$ has a positive-volume intersection with $\Delta_{K - 1}$ when 
\beq
\label{hypersimplexcondition}
m + 1 \leq \sum_{k = 1}^K i_k \leq m + K - 1.
\eeq
Suppose that \eqref{hypersimplexcondition} holds and we write $|{\mathbf{i}}| = \sum_{k = 1}^K i_k$.
Then, the intersection is 
\begin{align*}
    \Delta_{K - 1} \cap R_{m, \mathbf{i}} &= \left\{(x_1, \dots, x_K)^\top \in R_{m, \mathbf{i}}: x_1 + \cdots + x_K = 1  \right\}\\
    &= \frac{\mathbf{i} - \mathbf{1}_K}{m} + \frac{1}{m} \left\{ (x_1, \dots, x_K)^\top \in [0, 1)^{K}: x_1 + \cdots + x_K = m + K - |\mathbf{i}|\right\},
\end{align*}
which is a $\frac{1}{m}$-scaled and translated version of the hypersimplex $\Delta_{K-1, m + K - |\mathbf{i}|}$.
Recall that this hypersimplex can be further partitioned into $A_{K - 1, m + K - |\mathbf{i}| - 1}$ simplices with the volume $\text{vol}(\Delta_{K- 1}) / m^{K- 1}$. Let $\mathcal{S}_\mathbf{i}$ be the set of such $A_{K - 1, m + K - |\mathbf{i}| - 1}$ simplices.
Now, we define 
\begin{equation*}\label{eqn:partition1}
    \mathcal{B}_m := \bigcup_{\substack{\mathbf{i} \in [m]^K \\ m + 1 \leq |\mathbf{i}| \leq m + K - 1}} \left\{ \frac{\mathbf{i} - \mathbf{1}_K}{m} + \frac{1}{m} \Delta : \Delta \in \mathcal{S}_\mathbf{i} \right\},
\end{equation*}
\sloppy
which is the collection of all simplices obtained from decomposing $\Delta_{K - 1} \cap R_{m, \mathbf{i}}$ for each $\mathbf{i} \in [m]^{K}$ satisfying \eqref{hypersimplexcondition}.
Since each simplex in $\mathcal{B}_m$ has a volume $\text{vol}(\Delta_{K - 1}) / m^{K - 1}$, it follows that $|\mathcal{B}_m| = m^{K - 1}$.
Noting that there are $\binom{j - 1}{K - 1}$ multi-indices $\mathbf{i} \in [m]^K$ with $|\mathbf{i}| = j$, it also directly follows from Worpitzky's identity (Equation 6.37 in \cite{10.5555/562056}) that
\begin{align*}
    |\mathcal{B}_m| &= \sum_{j = m + 1}^{m + K - 1} \binom{j - 1}{K - 1} A_{K - 1, m + K - j - 1} = \sum_{j = 0}^{K - 2} \binom{m + j}{K - 1} A_{K - 1, K - j - 2}\\
    &= \sum_{j = 0}^{K - 2} \binom{m + j}{K - 1} A_{K - 1, j} = m^{K - 1}.
\end{align*}
We finally index the partition as $\mathcal{B}_m = \{B_1, \dots, B_{m^{K - 1}}\}.$
\fussy

\subsubsection{Calibration Test for Discrete Predictions}
\label{sec:discretetest}
Testing calibration of discrete probability predictions has been studied in \cite{cox1958two, miller1962statistical, harrell2015regression}.
Here we describe a test based on testing multiple binomial parameters.
This test is related to the chi-squared test from \cite{miller1962statistical}, but does not use the asymptotic distribution of the test statistic when choosing critical values.
Let $\{v_1, \dots, v_t\} \subseteq [0, 1]$ be the range of a discrete-valued probability predictor $f$.
For each $i \in [t]$, we let $N_i = |\{j \in [n]: f(X_j) = v_i\}|$, $M_i = |\{j \in [n]: f(X_j) =v_i, Y_j = 1\}|$, and $p_i = P(Y = 1 \mid f(X) = v_i)$.
In this setting, the random variable $M_i$, under the null hypothesis of perfect calibration, follows the binomial distribution $\text{Binom}(N_i, p_i)$ with $N_i$ trials and success probability $p_i$, given $N_i$.
We use an exact binomial test to test the null hypothesis $H_{0, i}: p_i = v_i$ for each $i \in [t]$ and apply the Bonferroni correction to control the false detection rate under the null hypothesis $H_0 = \cap_{i = 1}^t H_{0, i}$.

\subsection{More Experiments}
\subsubsection{Debiased ECE from Kumar et al. (2019)}\label{sec:more_experiments}
In this subsection, we provide additional experiments, to evaluate calibration methods whose outputs range over a finite set.
When the predicted probabilities belong to a finite set, \cite{kumar2019verified} propose a debiased version of the empirical $\ell_2\text{-ECE}$, whose sample complexity required is smaller than that of the plug-in estimator; see Section \ref{sec:biasfailure} for more discussion. 
We calculate \cite{kumar2019verified}'s debiased estimator along with calibration testing results.
We use models trained on CIFAR-10 (Table \ref{tab:cifar-10-2}), CIFAR-100 (Table \ref{tab:cifar-100-2}), and ImageNet (Table \ref{tab:imagenet-2}). The values of the debiased $\ell_2\text{-ECE}$ estimator are typically very small, which is consistent with the  experimental results in \cite{kumar2019verified}. As can be observed, there is no clear relation between the debiased empirical $\ell_2\text{-ECE}$ values and test results.

\begin{table}[ht]
\centering
    \begin{tabular}{lcccccc}
    \toprule
\multirow{2}{*}{} & \multicolumn{2}{c}{DenseNet 121} & \multicolumn{2}{c}{ResNet 50} & \multicolumn{2}{c}{VGG-19} \\
                  & $\widehat{\ell_2\text{-ECE}^{\mathrm{db}}}$        & Calibrated?        & $\widehat{\ell_2\text{-ECE}^{\mathrm{db}}}$      & Calibrated?       & $\widehat{\ell_2\text{-ECE}^{\mathrm{db}}}$      & Calibrated?     \\\midrule
Hist. Bin.                 & 0.02\%            & reject                   & 0.02\%         & reject                  & 0.05\%         & reject  \\
Scal. Bin.                & 0.11\%            & reject                   & 0.10\%          & reject                  & 0.20\%         &    reject\\
\bottomrule          
\end{tabular}
\caption{The values of the debiased empirical $\ell_2\text{-ECE}$ \citep{kumar2019verified} and the testing results, via multiple binomial testing, of models trained on CIFAR-10 with two discrete calibration methods.}
\label{tab:cifar-10-2}
\end{table}

\begin{table}[ht]
\centering
    \begin{tabular}{lcccccc}
    \toprule
\multirow{2}{*}{} & \multicolumn{2}{c}{MobileNet-v2} & \multicolumn{2}{c}{ResNet 56} & \multicolumn{2}{c}{ShuffleNet-v2} \\
                  & $\widehat{\ell_2\text{-ECE}^{\mathrm{db}}}$        & Calibrated?        & $\widehat{\ell_2\text{-ECE}^{\mathrm{db}}}$      & Calibrated?       & $\widehat{\ell_2\text{-ECE}^{\mathrm{db}}}$      & Calibrated?     \\\midrule
Hist. Bin.                 & 0.04\%            & reject                   & 0.09\%         & reject                  & 0.15\%         & reject  \\
Scal. Bin.               & 0.04\%            & reject                   & 0.03\%          & reject                  & 0.02\%         & accept   \\
\bottomrule          
\end{tabular}
\caption{The values of the debiased empirical $\ell_2\text{-ECE}$ \citep{kumar2019verified} and the testing results, via multiple binomial testing, of models trained on CIFAR-100 with two discrete calibration methods.}
\label{tab:cifar-100-2}
\end{table}

\begin{table}[H]
\centering
    \begin{tabular}{lcccccc}
    \toprule
\multirow{2}{*}{} & \multicolumn{2}{c}{DenseNet 161} & \multicolumn{2}{c}{ResNet 152} & \multicolumn{2}{c}{EfficientNet-b7} \\
                  & $\widehat{\ell_2\text{-ECE}^{\mathrm{db}}}$        & Calibrated?        & $\widehat{\ell_2\text{-ECE}^{\mathrm{db}}}$      & Calibrated?       & $\widehat{\ell_2\text{-ECE}^{\mathrm{db}}}$      & Calibrated?     \\\midrule
Hist. Bin.                 & 0.01\%           & reject                   & 0.03\%         & reject                  & 0.02\%         & reject   \\
Scal. Bin.                & 0.05\%            & reject                   & 0.03\%          & reject                  & 0.06\%         & reject   \\
\bottomrule          
\end{tabular}
\caption{The values of the debiased empirical $\ell_2\text{-ECE}$ \citep{kumar2019verified} and the testing results, via multiple binomial testing, of models trained on ImageNet with two discrete calibration methods.}
\label{tab:imagenet-2}
\end{table}

\subsubsection{Estimation of ECE via DPE}\label{sec:estimation}
\begin{figure}[!ht]
    \centering
    \includegraphics{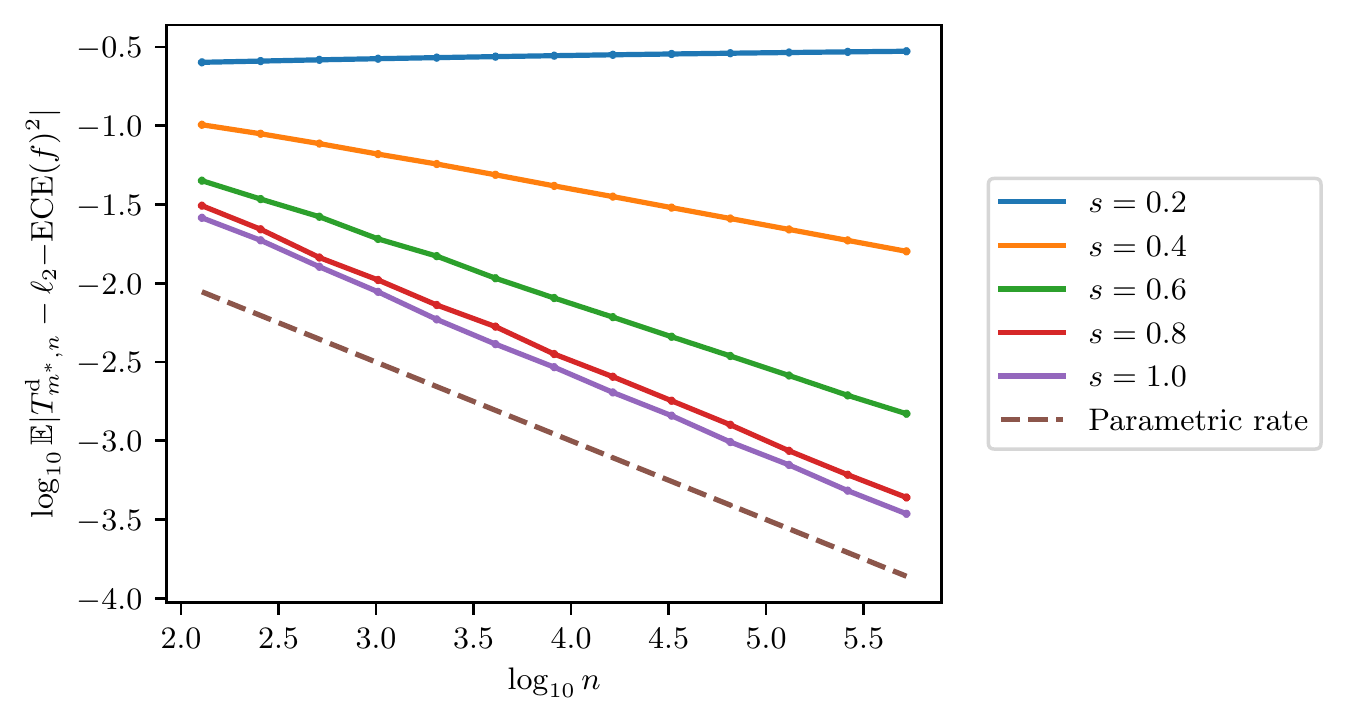}
    \caption{Estimation error of $T_{m_*, n}^\text{d}$, varying $n$, $s$ over a log scale. 
    The dashed line has slope $-\frac{1}{2}$, which corresponds to the parametric convergence rate.
    The lines for $s = 0.4$ and $0.6$ fail to achieve the parametric convergence rate, even though $s$ is larger than the threshold value 0.25 where the parametric rate arises in standard nonparametric estimation.}
    \label{fig:estimation}
\end{figure}

The DPE $T_{m, n}^{\text{d}}$ is used as a test statistic in the main text, but it can also be interpreted as an estimate of $\ell_2\text{-ECE}(f)^2$.
While it builds on ideas from nonparametric functional estimation (Remark \ref{rem:ustatistic}), the convergence rate $n^{-[4s/(4s + K - 1) \wedge 1/ 2]}$ does not directly follow here, since our construction involves an additional estimation step $P_Z(B_i) \approx |\mathcal{I}_{m, i}| / n$.
If we use the same binning parameter $m = m_* \asymp n^{2 / (4s + K - 1)}$, then the ratio between the standard deviation and the mean of the estimate $|\mathcal{I}_{m, i}| / n$ is 
$$\sqrt{(1 - P_Z(B_i)) / n P_Z(B_i)} \asymp n^{(K - 1 -4s) / [2(4s + K - 1)]},$$
which is large for small $s$.
For this reason, the DPE fails to achieve the parametric convergence rate when $s \geq \frac{K - 1}{4}$.
We experimentally support this claim in Figure \ref{fig:estimation}, where we use $K = 2$ and $Z \sim \text{Unif}([0, 1])$, with a deterministic choice of $Y = 1$.
Optimal estimation of $\ell_2\text{-ECE}(f)^2$ remains an open problem.

\subsubsection{Comparison of Critical Values}\label{sec:critcompare}
We compare three choices of critical values, where we sample $\{(\tilde Z_i, \tilde Y_i)\}_{i = 1}^n$ according to the following rules.
\begin{enumerate}
    \item Oracle Monte Carlo: $\tilde Z_i \sim P_Z$, $\tilde Y_i \sim \text{Cat}(\tilde{Z}_i)$.

    \item Full bootstrapping (consistency resampling): $\tilde Z_i \sim \text{Unif}(\{Z_i\}_{i = 1}^n)$, $\tilde Y_i \sim \text{Cat}(\tilde {Z}_i)$.

    \item $Y$-only bootstrapping: $\tilde{Z}_i = Z_i$, $\tilde Y_i \sim \text{Cat}(\tilde {Z}_i)$.
\end{enumerate}
Here, $\{(Z_i, Y_i)\}_{i = 1}^n$ is the original calibration sample, and all sampling is done independently.
We repeat the above sampling procedures $N$ times to create $\{(\tilde Z_i^{j}, \tilde Y_i^{j})\}_{i = 1}^n$, $j \in [N]$ and compute the DPE $T_{m_*, n}^{\text{d}, j}$ test statistics for each $j \in [N]$.
Denote their order statistics by $T_{m_*, n}^{\text{d}, (1)} \leq \cdots \leq T_{m_*, n}^{\text{d}, (N)}$.
For oracle Monte Carlo and $Y$-only bootstrapping, the DPEs $T_{m_*, n}^\text{d}, T_{m_*, n}^{\text{d}, 1}, \dots, T_{m_*, n}^{\text{d}, N}$ are exchangeable under the null, so (assuming ties happen with zero probability)
$$P(T_{m_*, n}^\text{d} \geq T_{m_*, n}^{\text{d}, (j)}) = \frac{N + 1 - j}{N + 1},$$
for any $j \in [N]$.
In other words, the test 
that rejects when
$I(T_{m_*, n}^\text{d} \geq T_{m_*, n}^{\text{d}, (j)})$ has type I error $\frac{j}{N + 1}$.
We do not have such a guarantee for consistency resampling.

\begin{figure}
    \centering
    \includegraphics{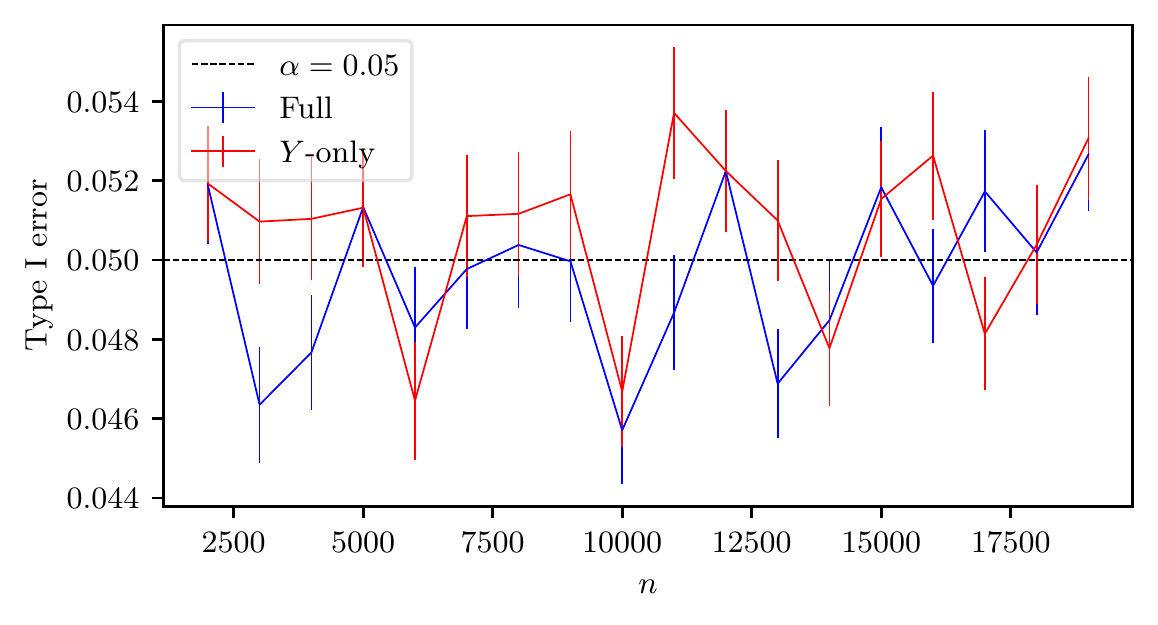}
    \caption{Type I error of the full/$Y$-only bootstrapping. We use $N = 19, \alpha = 0.05$ and compute the type I error from 10,000 oracle Monte Carlo trials. Standard error bars are plotted over 1,000 repetitions.}
    \label{fig:critval}
\end{figure}

Figure \ref{fig:critval} compares the type I errors 
when using 
critical values based on full/$Y$-only bootstrapping.
We see that there is no significant difference between the two, and they stay relatively close to the nominal level $\alpha = 0.05$.
Since consistency resampling and $Y$-only bootstrapping give randomized critical values, the true type I error is estimated by the sample mean of type I errors obtained from 10,000 independent trials.

\subsubsection{Cross-fitting}
The idea of cross-fitting \citep[see e.g.,][for related ideas]{Hajek1962, schick1986, newey2018cross, kennedy2020optimal} can be applied to the sample splitting test in Section \ref{sec:upperbound}.
Specifically, we compute the two-sample test statistic again by swapping the role of $\{(Z_i, Y_i)\}_{i = 1}^{\lfloor n / 2 \rfloor}$ and $\{(Z_i, Y_i)\}_{i = \lfloor n / 2 \rfloor + 1}^{n}$, and use the average of two test statistics.
The critical value for $\alpha = 0.05$ and the corresponding type II error are estimated via 1,000 oracle Monte Carlo simulations.
In Figure \ref{fig:crossfit}, we compare the type II error of the cross-fitted test statistics with other tests discussed in the main text.
The experimental setup is identical to Figure \ref{fig:type2error}.
We observe that the cross-fitting procedure does not significantly improve the power.

\begin{figure}
    \centering
    \captionsetup[subfigure]{justification=centering}
    \begin{subfigure}{0.32\textwidth}
        \centering
        \includegraphics{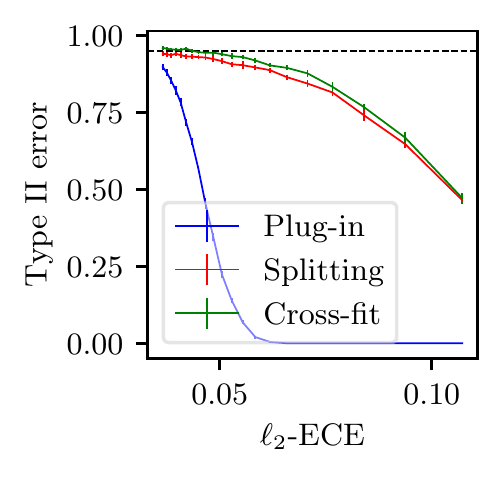}
        \caption{$s = 0.6$, $\rho = 100$}
        \label{fig:crossfit100}
    \end{subfigure}
    \begin{subfigure}{0.32\textwidth}
        \centering
        \includegraphics{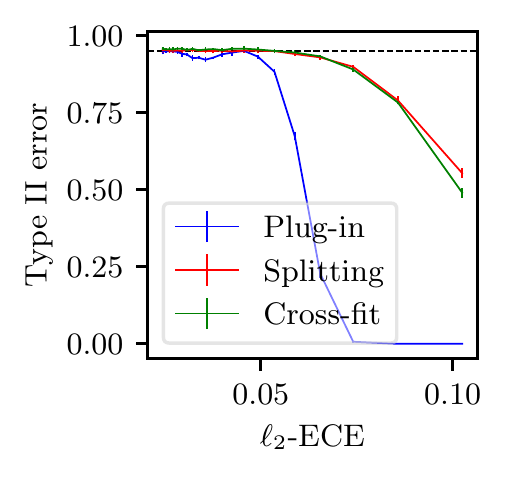}
        \caption{$s = 0.8$, $\rho = 200$}
        \label{fig:crossfit200}
    \end{subfigure}
    \caption{Type II error comparison for plug-in, sample-splitting, and cross-fitting test. The horizontal dashed line indicates a type II error of $1 - \alpha = 0.95$. Standard error bars are plotted over 10  repetitions.}
    \label{fig:crossfit}
\end{figure}

\newpage
{\small
\setlength{\bibsep}{0.2pt plus 0.3ex}
\bibliographystyle{plainnat-abbrev}
\bibliography{reference,reference_calib}
}

\end{document}